\documentclass[11pt]{article}
\usepackage[margin=2.5cm,includefoot,footskip=30pt]{geometry}
\usepackage{comment}
\usepackage{caption,subcaption}
\usepackage{multirow}
\usepackage{authblk}
\usepackage{dsfont}
\usepackage{mathtools}
\usepackage{dsfont}
\usepackage{amsfonts,amsthm,amsmath,amssymb}
\usepackage{latexsym,bm,bbm,graphicx,float,mathtools}
\usepackage[utf8]{inputenc}
\usepackage[T1]{fontenc}
\usepackage{url}
\usepackage{booktabs}
\usepackage{nicefrac}
\usepackage{microtype}
\usepackage{xcolor}
\usepackage[colorlinks,linkcolor=blue]{hyperref}
\usepackage[capitalize,nameinlink]{cleveref}
\usepackage[normalem]{ulem}
\usepackage[numbers,sort,compress]{natbib}

\newtheorem{theorem}{Theorem}
\newtheorem*{theorem*}{Theorem}
\newtheorem{lemma}{Lemma}[section]
\newtheorem{fact}[lemma]{Fact}

\newtheorem{proposition}[lemma]{Proposition}
\newtheorem*{proposition*}{Proposition}
\newtheorem{corollary}{Corollary}[theorem]
\newtheorem{assumption}{Assumption}
\theoremstyle{definition}

\theoremstyle{remark}


\newcommand{\eps}{\epsilon}
\newcommand{\veps}{\varepsilon}

\newcommand{\cov}{\mathbf{Cov}}
\newcommand{\OPT}{\mathrm{OPT}}

\newcommand{\XGMM}{\text{XOR-GMM}}
\newcommand{\XCSBM}{\text{XOR-CSBM}}
\newcommand{\Ber}{\mathrm{Ber}}
\newcommand{\degr}{\mathbf{deg}}
\newcommand{\relu}{\mathrm{ReLU}}
\newcommand{\R}{\mathds{R}}

\newcommand{\zero}{\mathbf{0}}

\newcommand{\indic}{\mathds{1}}

\newcommand{\bvec}{\mathbf{b}}

\newcommand{\vv}{\mathbf{v}}
\newcommand{\xv}{\mathbf{x}}
\newcommand{\yv}{\mathbf{y}}
\newcommand{\gv}{\mathbf{g}}
\newcommand{\mv}{\mathbf{m}}
\newcommand{\yh}{\hat{y}}
\newcommand{\yhv}{\mathbf{\hat{y}}}
\newcommand{\bX}{\mathbf{X}}
\newcommand{\bM}{\mathbf{M}}
\newcommand{\bXt}{\tilde{\bX}}
\newcommand{\bW}{\mathbf{W}}
\newcommand{\bA}{\mathbf{A}}
\newcommand{\bH}{\mathbf{H}}
\newcommand{\bD}{\mathbf{D}}
\newcommand{\bI}{\mathbf{I}}
\newcommand{\bmu}{\bm{\mu}}
\newcommand{\bnu}{\bm{\nu}}
\newcommand{\hbmu}{\hat{\bmu}}
\newcommand{\hbnu}{\hat{\bnu}}

\newcommand{\bx}{\boldsymbol{x}}

\newcommand{\Nc}{\mathcal{N}}

\newcommand{\E}{\mathds{E}}

\newcommand{\setc}{{\mathsf{c}}}
\newcommand{\inner}[1]{\left\langle #1 \right\rangle}

\DeclareMathOperator*{\argmax}{argmax}

\DeclareMathOperator*{\sgn}{sgn}
\DeclareMathOperator\erf{erf}

\newcommand{\abs}[1]{\left\lvert #1 \right\rvert}
\newcommand{\bpar}[1]{\left( #1 \right)}
\newcommand{\bsq}[1]{\left[ #1 \right]}
\renewcommand{\Pr}[1]{\mathbf{Pr}\left[ #1 \right]}
\newcommand{\norm}[1]{\left\lVert #1 \right\rVert}

\setlength{\parindent}{0em}
\setlength{\parskip}{0.5em}

\title{Effects of Graph Convolutions in Multi-layer Networks}

\author[1]{Aseem Baranwal}
\author[1]{Kimon Fountoulakis}
\author[2]{Aukosh Jagannath}
\affil[1]{David R. Cheriton School of Computer Science, University of Waterloo, Waterloo, Canada}
\affil[2]{Department of Statistics and Actuarial Science, 
Department of Applied Mathematics, University of Waterloo, Waterloo, Canada}

\date{}

\begin{document}
\frenchspacing
\graphicspath{ {./img/} }

\maketitle

\begin{abstract}
    \noindent Graph Convolutional Networks (GCNs) are one of the most popular architectures that are used to solve classification problems accompanied by graphical information.
    We present a rigorous theoretical understanding of the effects of graph convolutions in multi-layer networks.
    We study these effects through the node classification problem of a non-linearly separable Gaussian mixture model coupled with a stochastic block model.
    First, we show that a single graph convolution expands the regime of the distance between the means where multi-layer networks can classify the data by a factor of at least $1/\sqrt[4]{\E{\rm deg}}$, where $\E{\rm deg}$ denotes the expected degree of a node.
    Second, we show that with a slightly stronger graph density, two graph convolutions improve this factor to at least $1/\sqrt[4]{n}$, where $n$ is the number of nodes in the graph.
    Finally, we provide both theoretical and empirical insights into the performance of graph convolutions placed in different combinations among the layers of a network, concluding that the performance is mutually similar for all combinations of the placement. We present extensive experiments on both synthetic and real-world data that illustrate our results.
\end{abstract}

\section{Introduction}
A large amount of interesting data and the practical challenges associated with them are defined in the setting where entities have attributes as well as information about mutual relationships.
Traditional classification models have been extended to capture such relational information through graphs~\cite{hamilton2020graph}, where each node has individual attributes and the edges of the graph capture the relationships among the nodes. A variety of applications characterized by this type of graph-structured data include works in the areas of social analysis~\cite{backstrom2011supervised}, recommendation systems~\cite{YHCEHL18}, computer vision~\cite{Monti_2017_CVPR}, study of the properties of chemical compounds~\cite{gilmer:quantum, scarselli:gnn}, statistical physics~\cite{bapst2020unveiling,battaglia:graphnets}, and financial forensics~\cite{zhang2017hidden,weber2019anti}.

The most popular learning models for relational data use graph convolutions~\cite{kipf:gcn}, where the idea is to aggregate the attributes of the set of neighbours of a node instead of only utilizing its own attributes.
Despite several empirical studies of various GCN-type models~\cite{CLB19, ma2022:is-homophily} that demonstrate that graph convolutions can improve the performance of traditional classification methods, such as a multi-layer perceptron (MLP), there has been limited progress in the theoretical understanding of the benefits of graph convolutions in multi-layer networks in terms of improving node classification tasks.

\paragraph{Related work.}
The capacity of a graph convolution for one-layer networks is studied in~\cite{pmlr-v139-baranwal21a}, along with its out-of-distribution (OoD) generalization potential. A more recent work~\cite{wu2022towards} formulates the node-level OoD problem, and develops a learning method that facilitates GNNs to leverage invariance principles for prediction. In \cite{gasteiger2019combining}, the authors utilize a propagation scheme based on personalized PageRank to construct a model that outperforms several GCN-like methods for semi-supervised classification. Through their algorithm, APPNP, they show that placing power iterations at the last layer of an MLP achieves state of the art performance. Our results align with this observation.

There exists a large amount of theoretical work on unsupervised learning for random graph models where node features are absent and only relational information is available \cite{decelle2011asymptotic,massoulie2014community,mossel2018proof,mossel2015consistency,abbe2015community,abbe2015exact,bordenave2015non,deshpande2015asymptotic,montanari2016semidefinite,banks2016information,abbe2018proof,Li:2019:optimizing,Kloumann:2017:block,gaudio2022exact}. For a comprehensive survey, see \cite{Abbe2018,moore2017csphysics}.
For data models which have node features coupled with relational information, several works have studied the semi-supervised node classification problem, see, for example, \cite{scarselli:gnn,CZY2011,GVB2012,DV2012,GFRT13,YML13,HYL17,JLLHZ19,pmlr-v97-mehta19a,chien2022node,yan:2021:two-sides}. These papers provide good empirical insights into the merits of graph structure in the data. We complement these studies with theoretical results that explain the effects of graph convolutions in a multi-layer network.

In \cite{DSM18,Lu:2020:contextual}, the authors explore the fundamental thresholds for the classification of a substantial fraction of the nodes with linear sample complexity and large but finite degree. Another relatively recent work \cite{Hou2020Measuring} proposes two graph smoothness metrics for measuring the benefits of graphical information, along with a new attention-based framework. In \cite{Fountoulakis2022GraphAR}, the authors provide a theoretical study of the graph attention mechanism (GAT) and identify the regimes where the attention mechanism is (or is not) beneficial to node-classification tasks. Our study focuses on convolutions instead of attention-based mechanisms.  Several other works study the expressive power and extrapolation of GNNs, along with the oversmoothing phenomenon (see, for e.g., \cite{balcilar2021analyzing,xu2021how,oono2020graph,li2018deeper}), however, our focus is to draw a comparison of the benefits and limitations of graph convolutions with those of a traditional MLP that does not utilize relational information. In our setting, we focus our study on the regimes where oversmoothing does not occur.

To the best of our knowledge, this area of research still lacks theoretical guarantees that explain when and why graphical data, and in particular, graph convolutions, can boost traditional multi-layer networks to perform better on node-classification tasks. To this end, we study the effects of graph convolutions in deeper layers of a multi-layer network. For node classification tasks, we also study whether one can avoid using additional layers in the network design for the sole purpose of gathering information from neighbours that are farther away, by comparing the benefits of placing all convolutions in a single layer versus placing them in different layers.

\paragraph{Our contributions.}
We study the performance of multi-layer networks for the task of binary node classification on a data model where node features are sampled from a Gaussian mixture, and relational information is sampled from a symmetric two-block stochastic block model\footnote{Our analyses generalize to non-symmetric SBMs with more than two blocks. However, we focus on the binary symmetric case for the sake of simplicity in the presentation of our ideas.} (see \cref{data-model} for details). The node features are modelled after XOR data with two classes, and therefore, has four distinct components, two for each class. Our choice of the data model is inspired from the fact that it is non-linearly separable. Hence, a single layer network fails to classify the data from this model. Similar data models based on the contextual stochastic block model (CSBM) have been used extensively in the literature, see, for example, \cite{DSM18,BVR17,chien2021adaptive,chien2022node,pmlr-v139-baranwal21a}.
We now summarize our main contributions below, which are discussed further in \cref{results}.
\begin{enumerate}
    \item We show that when node features are accompanied by a graph, a single graph convolution enables a multi-layer network to classify the nodes in a wider regime as compared to methods that do not utilize the graph, improving the threshold for the distance between the means of the features by a factor of at least $1/\sqrt[4]{\E{\rm deg}}$.
    Furthermore, assuming a slightly denser graph, we show
    that with two graph convolutions, a multi-layer network can classify the data in an even wider regime, improving the threshold by a factor of at least $1/\sqrt[4]{n}$, where $n$ is the number of nodes in the graph.
    \item We show that for multi-layer networks equipped with graph convolutions, the classification capacity is determined by the number of graph convolutions rather than the number of layers in the network. In particular, we study the gains obtained by placing graph convolutions in a layer, and compare the benefits of placing all convolutions in a single layer versus placing them in different combinations across different layers. We find that the performance is mutually similar for all combinations with the same number of graph convolutions.
    \item We verify our theoretical results through extensive experiments on both synthetic and real-world data, showing trends about the performance of graph convolutions in various combinations across multiple layers of a network, and in different regimes of interest.
\end{enumerate}

The rest of our paper is organized as follows: In \cref{prelims}, we provide a detailed description of the data model and the network architecture that is central to our study, followed by our analytical results in \cref{results}. Finally, \cref{experiments} presents extensive experiments that illustrate our results.

\section{Preliminaries}\label{prelims}
\subsection{Description of the data model}
\label{data-model}
Let $n,d$ be positive integers, where $n$ denotes the number of data points (sample size) and $d$ denotes the dimension of the features. Define the Bernoulli random variables $\veps_1,\ldots,\veps_n\sim \Ber(\nicefrac{1}{2})$ and $\eta_1,\ldots,\eta_n\sim \Ber(\nicefrac{1}{2})$. Further, define two classes $C_b = \{i\in [n]\mid \veps_i=b\}$ for $b\in\{0,1\}$.

Let $\bmu$ and $\bnu$ be fixed vectors in $\R^d$, such that $\norm{\bmu}_2 = \norm{\bnu}_2$ and $\inner{\bmu,\bnu}=0$.\footnote{We take $\bmu$ and $\bnu$ to be orthogonal and of the same magnitude for keeping the calculations relatively simpler, while clearly depicting the main ideas behind our results.} Denote by $\bX\in \R^{n\times d}$ the data matrix where each row-vector $\bX_i\in \R^{d}$ is an independent Gaussian random vector distributed as $\bX_i\sim \Nc((2\eta_i - 1)((1-\veps_i)\bmu + \veps_i\bnu), \sigma^2)$. We use the notation $\bX\sim \XGMM(n,d,\bmu,\bnu,\sigma^2)$ to refer to data sampled from this model.

Let us now define the model with graphical information. In this case, in addition to the features $\bX$ described above, we have a graph with the adjacency matrix, $\bA = (a_{ij})_{i,j\in [n]}$, that corresponds to an undirected graph including self-loops, and is sampled from a standard symmetric two-block stochastic block model with parameters $p$ and $q$, where $p$ is the intra-block and $q$ is the inter-block edge probability.
The SBM$(n,p,q)$ is then coupled with the $\XGMM(n,d,\bmu,\bnu,\sigma^2)$ in the way that $a_{ij}\sim \Ber(p)$ if $\veps_i=\veps_j$ and $a_{ij}\sim \Ber(q)$ if $\veps_i\neq \veps_j$\footnote{Our results can be extended to non-symmetric models with different $p$ and $q$ for different blocks in the SBM. However, for simplicity, we focus on the symmetric case.}. For data $(\bA, \bX) = (\{a_{ij}\}_{i,j\in[n]}, \{\bX_i\}_{i\in n})$ sampled from this model, we say $(\bA,\bX)\sim \XCSBM(n,d,\bmu,\bnu,\sigma^2,p,q)$.

We will denote by $\bD$ the diagonal degree matrix of the graph with adjacency matrix $\bA$, and thus, $\degr(i) = \bD_{ii}=\sum_{j=1}^na_{ij}$ denotes the degree of node $i$. We will use $N_i = \{j\in[n]\mid a_{ij}=1\}$ to denote the set of neighbours of a node $i$. We will also use the notation $i\sim j$ or $i\nsim j$ throughout the paper to signify, respectively, that $i$ and $j$ are in the same class, or in different classes.

\subsection{Network architecture}\label{network-arch}
Our analysis focuses on MLP architectures with $\relu$ activations. In particular, for a network with $L$ layers, we define the following:
\begin{align*}
    &\bH^{(0)} = \bX,\\
    &\begin{rcases}
        f^{(l)}(\bX) = (\bD^{-1}\bA)^{k_l}\bH^{(l-1)}\bW^{(l)} + \bvec^{(l)}\\
        \bH^{(l)} = \relu(f^{(l)}(\bX))
    \end{rcases} \text{ for } l\in [L],\\
    &\yhv = \varphi(f^{(L)}(\bX)).
\end{align*}
Here, $\bX\in \R^{n\times d}$ is the given data, which is an input for the first layer and $\varphi(x)={\rm sigmoid}(x) = \frac{1}{1 + e^{-x}}$, applied element-wise.
The final output of the network is represented by $\yhv = \{\yh_i\}_{i\in[n]}$. Note that $\bD^{-1}\bA$ is the normalized adjacency matrix\footnote{Our results rely on degree concentration for each node, hence, they readily generalize to other normalization methods like $\bD^{-\frac12}\bA\bD^{-\frac12}$.} and $k_l$ denotes the number of graph convolutions placed in layer $l$. In particular, for a simple MLP with no graphical information, we have $\bA = \bI_n$.

We will denote by $\theta$, the set of all weights and biases, $(\bW^{(l)}, \bvec^{(l)})_{l\in [L]}$, which are the learnable parameters of the network. For a dataset $(\bX,\yv)$, we denote the binary cross-entropy loss obtained by a multi-layer network with parameters $\theta$ by $\ell_{\theta}(\bA, \bX) = -\frac{1}{n}\sum_{i\in[n]}y_i\log(\yh_i) + (1-y_i)\log(1-\yh_i)$,
and the optimization problem is formulated as
\begin{align}
    \OPT(\bA, \bX)=\min_{\theta \in \mathcal{C}} ~ \ell_{\theta}(\bA, \bX),\label{eq:OPT}
\end{align}
where $\mathcal{C}$ denotes a suitable constraint set for $\theta$. For our analyses, we take the constraint set $\mathcal{C}$ to impose the condition $\norm{\bW^{(1)}}_2\le R$ and $\norm{\bW^{(l)}}_2 \le 1$ for all $1<l\le L$, i.e., the weight parameters of all layers $l>1$ are normalized, while for $l=1$, the norm is bounded by some fixed value $R$. This is necessary because without the constraint, the value of the loss function can go arbitrarily close to $0$. Furthermore, the parameter $R$ helps us concisely provide bounds for the loss in our theorems for various regimes by bounding the Lipschitz constant of the learned function. In the rest of our paper, we use $\ell_{\theta}(\bX)$ to denote $\ell_{\theta}(\bI_n, \bX)$, which is the loss in the absence of graphical information.

\section{Results}\label{results}
We now describe our theoretical contributions, followed by a discussion and a proof sketch.

\subsection{Setting up the baseline}
Before stating our main result about the benefits and performance of graph convolutions, we set up a comparative baseline in the setting where graphical information is absent. In the following theorem, we completely characterize the classification threshold for the $\XGMM$ data model in terms of the distance between the means of the mixture model and the number of data points $n$. Let $\Phi(\cdot)$ denote the cumulative distribution function of a standard Gaussian, and $\Phi_{\rm c}(\cdot) = 1 - \Phi(\cdot)$.
\begin{theorem}\label{thm:threshold-without-graph}
    Let $\bX\in\R^{n\times d}\sim \XGMM(n,d,\bmu,\bnu,\sigma^2)$.
    Then we have the following:
    \begin{enumerate}
        \item Assume that $\|\bmu-\bnu\|_2\le K\sigma$ and let $h(\xv):\R^d\to\{0,1\}$ be any binary classifier. Then for any $K>0$ and any $\eps\in(0, 1)$, at least a fraction $2\Phi_{\rm c}\bpar{\nicefrac{K}{2}}^2 - O(n^{-\eps/2})$ of all data points are misclassified by $h$ with probability at least $1 - \exp(-2n^{1-\eps})$.
        \item For any $\eps>0$, if the distance between the means is $\norm{\bmu-\bnu}_2=\Omega(\sigma(\log n)^{\frac12+\eps})$, then for any $c>0$, with probability at least $1-O(n^{-c})$, there exist a two-layer and a three-layer network that perfectly classify the data, and obtain a cross-entropy loss given by
        \[
        \ell_{\theta}(\bX) = C\exp\bpar{-\frac{R}{\sqrt2}\|\bmu-\bnu\|_2\bpar{1\pm \sqrt{c}/(\log n)^{\eps}}},
        \]
        where $C\in [\nicefrac12, 1]$ is an absolute constant and $R$ is the optimality constraint from \cref{eq:OPT}.
    \end{enumerate}
\end{theorem}
Part one of \cref{thm:threshold-without-graph} shows that if the means of the features of the two classes are at most $O(\sigma)$ apart then with overwhelming probability, there is a constant fraction of points that are misclassified. Note that the fraction of misclassified points is $2\Phi_{\rm c}(\nicefrac K2)^2$, which approaches $0$ as $K\to\infty$ and approaches $\nicefrac12$ as $K\to 0$, signifying that if the means are very far apart then we successfully classify all data points, while if they coincide then we always misclassify roughly half of all data points. Furthermore, note that if $K=c\sqrt{\log n}$ for some constant $c\in[0, 1)$, then the total number of points misclassified is $2n\Phi_{\rm c}(K)^2\asymp \frac{n}{K^2}e^{-K^2} \asymp \frac{n^{1-c^2}}{\log n} = \Omega(1)$.
Thus, intuitively, $K\asymp \sqrt{\log n}$ is the threshold beyond which learning methods are expected to perfectly classify the data. This is formalized in part two of the theorem, which supplements the misclassification result by showing that if the means are roughly $\omega(\sigma\sqrt{\log n})$ apart then the data is classifiable with overwhelming probability.

\subsection{Improvement through graph convolutions}
We now state the results that explain the effects of graph convolutions in multi-layer networks with the architecture described in \cref{network-arch}. We characterize the improvement in the classification threshold in terms of the distance between the means of the node features.

\begin{theorem}\label{thm:gc-improvement}
Let $(\bA,\bX)\sim \XCSBM(n,d,\bmu,\bnu,\sigma^2,p,q)$. Then there exist a two-layer network and a three-layer network with the following properties:
\begin{itemize}
    \item If the intra-class and inter-class edge probabilities are $p, q = \Omega(\frac{\log^2 n}{n})$, and the distance between the means is $\norm{\bmu-\bnu}_2=\Omega\bpar{\frac{\sigma\sqrt{\log n}}{\sqrt[4]{n(p+q)}}}$, then for any $c>0$, with probability at least $1-O(n^{-c})$, the networks equipped with a graph convolution in the second or the third layer perfectly classify the data, and obtain the following loss:
    \[
    \ell_{\theta}(\bA, \bX) = C'\exp\bpar{-\frac{CR\|\bmu-\bnu\|_2^2}{\sigma}\abs{\frac{p-q}{p+q}}(1\pm \sqrt{\nicefrac{c}{\log n}})},
    \]
    where $C>0$ and $C'\in[\nicefrac12, 1]$ are constants and $R$ is the constraint from \cref{eq:OPT}.
    \item If $p, q = \Omega(\frac{\log n}{\sqrt{n}})$ and $\norm{\bmu-\bnu}_2=\Omega\bpar{\frac{\sigma\sqrt{\log n}}{\sqrt[4]{n}}}$, then for any $c>0$, with probability at least $1-O(n^{-c})$, the networks with any combination of two graph convolutions in the second and/or the third layers perfectly classify the data, and obtain the following loss:
    \[
        \ell_{\theta}(\bA, \bX) = C'\exp\bpar{-\frac{CR\|\bmu-\bnu\|_2^2}{\sigma}\bpar{\frac{p-q}{p+q}}^2(1\pm \sqrt{\nicefrac{c}{\log n}})},
    \]
    where $C>0$ and $C'\in[\nicefrac12, 1]$ are constants and $R$ is the constraint from \cref{eq:OPT}.
\end{itemize}
\end{theorem}

Part one of \cref{thm:gc-improvement} shows that under the assumption that $p,q=\Omega(\nicefrac{\log^2 n}{n})$, a single graph convolution improves the classification threshold by a factor of at least $\nicefrac{1}{\sqrt[4]{n(p+q)}}$ as compared to the case without the graph (see part two of \cref{thm:threshold-without-graph}). Part two then shows that with a slightly stronger assumption on the graph density, we observe further improvement in the threshold up to a factor of at least $\nicefrac{1}{\sqrt[4]{n}}$. Note that although the regime of graph density is different for part two of the theorem, the result itself is an improvement. In particular, if $p,q = \Omega(\nicefrac{\log n}{\sqrt{n}})$ then part one of the theorem states that one graph convolution achieves an improvement of at least $\nicefrac{1}{\sqrt[8]{n}}$, while part two states that two convolutions improve it to at least $\nicefrac{1}{\sqrt[4]{n}}$. However, we also emphasize that in the regime where the graph is dense, i.e., when $p,q=\Omega_n(1)$, two graph convolutions do not have a significant advantage over one convolution. Our experiments in \cref{experiments:synthetic} demonstrate this effect.
\begin{figure}[H]
    \centering
    \begin{subfigure}[b]{0.45\textwidth}
        \includegraphics[width=\linewidth]{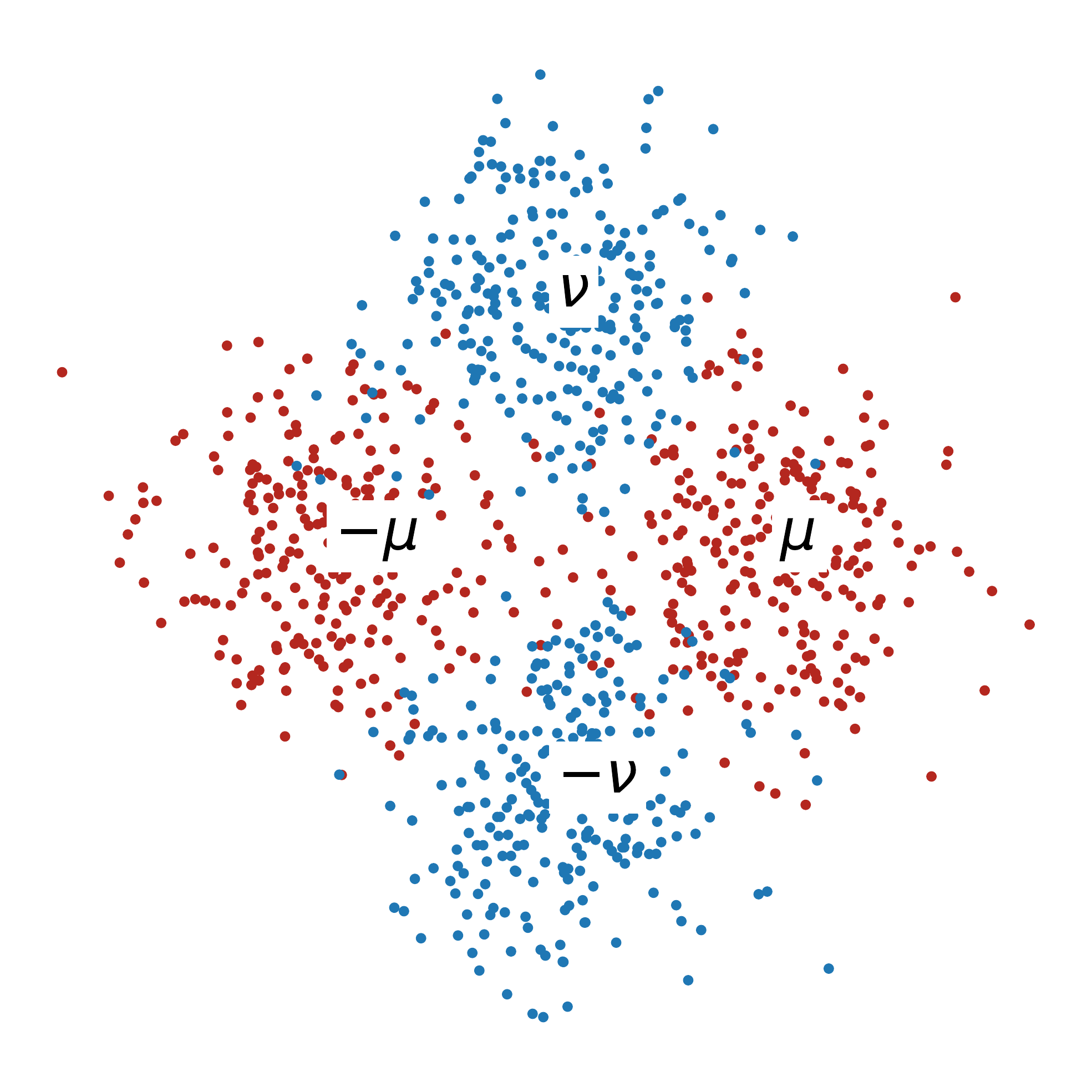}
        \caption{Original node features at the first layer.}
        \label{fig:intuition-orig-data}
    \end{subfigure}
    \begin{subfigure}[b]{0.47\textwidth}
        \includegraphics[width=\linewidth]{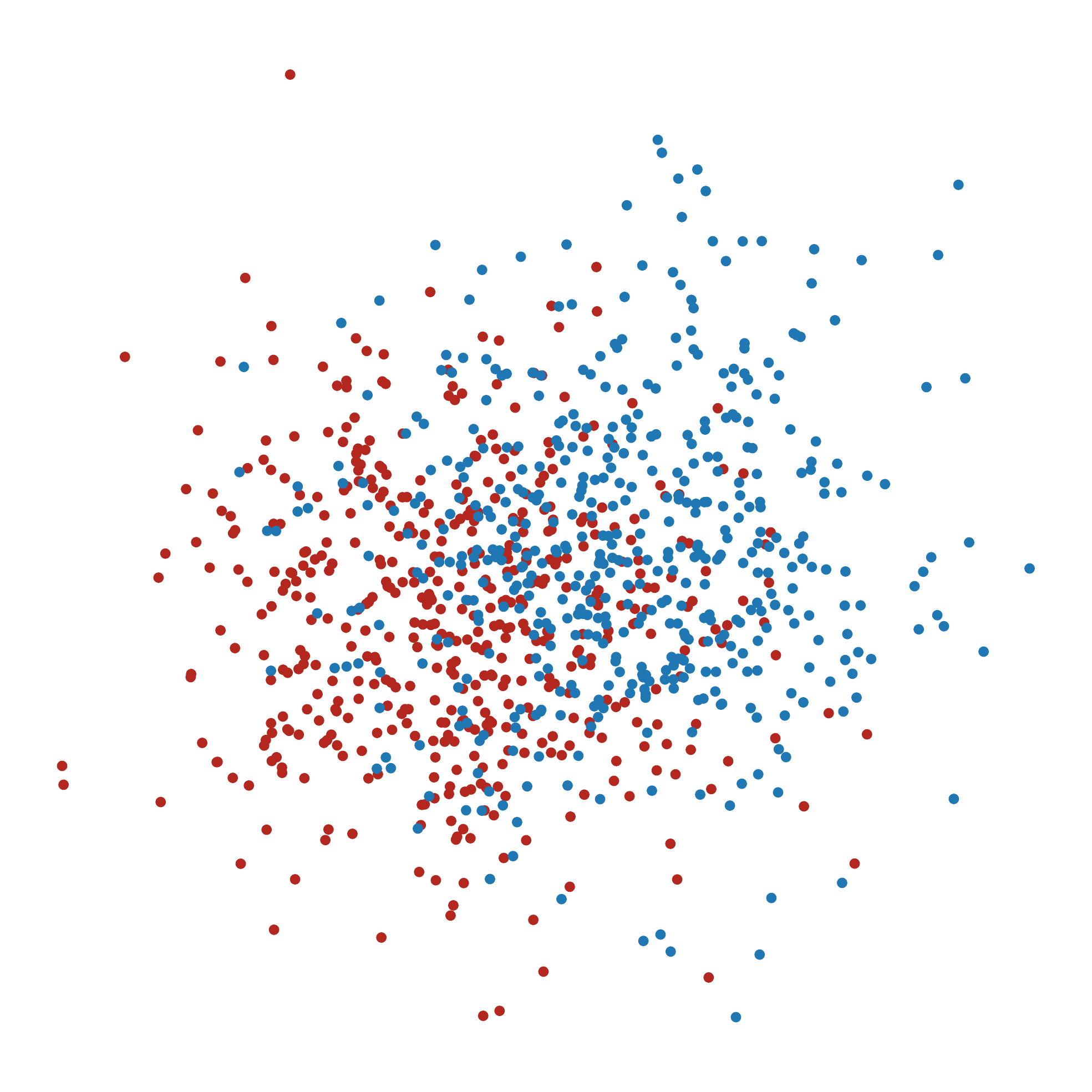}
        \caption{Feature representation after GC at the first layer.}
        \label{fig:intuition-conv-1}
    \end{subfigure}
    
    \begin{subfigure}[b]{0.45\textwidth}
        \includegraphics[width=\linewidth]{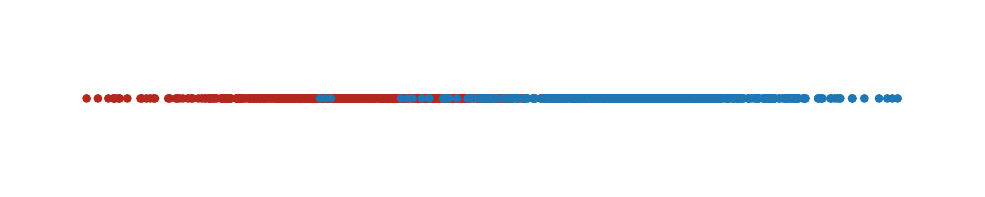}
        \caption{Feature representation at the last layer.}
        \label{fig:intuition-transformed-data}
    \end{subfigure}
    \begin{subfigure}[b]{0.47\textwidth}
        \includegraphics[width=\linewidth]{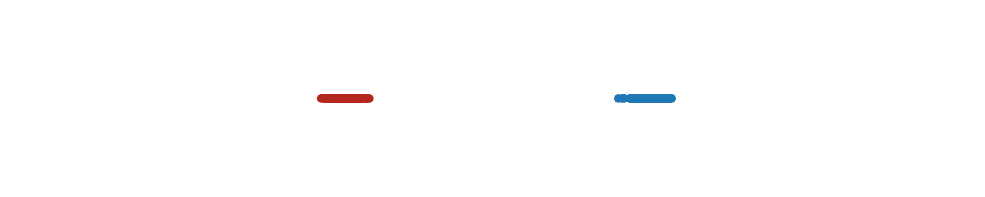}
        \caption{Feature representation after GC at the last layer.}
        \label{fig:intuition-conv-2}
    \end{subfigure}
    
    \caption{Placement of a graph convolution (GC) in the first layer versus the last layer for data sampled from the $\XCSBM$. For this figure we used $1000$ nodes in each class and a randomly sampled stochastic block-model graph with $p=0.8$ and $q=0.2$.}
    \label{fig:intuition}
\end{figure}

An artifact of the $\XCSBM$ data model is that a graph convolution in the first layer severely hurts the classification accuracy. Hence, for \cref{thm:gc-improvement}, our analysis only considers networks with no graph convolution in the first layer, i.e., $k_1 = 0$. This effect is visualized in \cref{fig:intuition}, and is attributed to the averaging of data points in the same class but different components of the mixture that have means with opposite signs.
We defer the reader to \cref{gc-first-layer} for a more formal argument, and to \cref{additional-experiments-synthetic} for experiments that demonstrate this phenomenon.
As $n$ (the sample size) grows, the difference between the averages of node features over the two classes diminishes (see \cref{fig:intuition-orig-data,fig:intuition-conv-1}). In other words, the means of the two classes collapse to the same point for large $n$. However, in the last layer, since the input consists of transformed data points that are linearly separable, a graph convolution helps with the classification task (see \cref{fig:intuition-transformed-data,fig:intuition-conv-2}).

\subsection{Placement of graph convolutions}
We observe that the improvements in the classification capability of a multi-layer network depends on the number of convolutions, and does not depend on where the convolutions are placed. In particular, for the $\XCSBM$ data model, putting the same number of convolutions among the second and/or the third layer in any combination achieves mutually similar improvements in the classification task.
\begin{corollary}[Informal]\label{cor:placement}
Consider the data model
$\XCSBM(n,d,\bmu,\bnu,\sigma^2,p,q)$ and the network architecture from \cref{network-arch}.
\begin{itemize}
    \item Assume that $p, q = \Omega(\nicefrac{\log^2 n}{n})$, and consider the three-layer network characterized by part one of \cref{thm:gc-improvement}, with one graph convolution. For this network, placing the graph convolution in the second layer ($k_2=1,k_3=0$) obtains the same results as placing it in the third layer ($k_2=0,k_3=1$).
    \item Assume that $p, q = \Omega(\nicefrac{\log n}{\sqrt{n}})$, and consider the three-layer network characterized by part two of \cref{thm:gc-improvement}, with two graph convolutions. For this network, placing both convolutions in the second layer ($k_2=2,k_3=0$) or both of them in the third layer ($k_2=0,k_3=2$) obtains the same results as placing one convolution in the second layer and one in the third layer ($k_2=1,k_3=1$).
\end{itemize}
\end{corollary}

\cref{cor:placement} is immediate from the proof of \cref{thm:gc-improvement} (see \cref{one-gc,two-gcs}).
In \cref{experiments}, we also show extensive experiments on both synthetic and real data that demonstrate this result.

\subsection{Proof sketch}
In this section, we provide an overview of the key ideas and intuition behind our proof technique for the results. For comprehensive proofs, see \cref{proofs}.

For part one of \cref{thm:threshold-without-graph}, we utilize the assumption on the distribution of the data. Since the underlying distribution of the mixture model is known, we can find the (Bayes) optimal classifier\footnote{A Bayes classifier makes the most probable prediction for a data point. Formally, such a classifier is of the form $h^*(\bx)=\argmax_{b\in\{0,1\}}\Pr{y=b\mid\bx}$.}, $h^*(\xv)$, for the $\XGMM$, which takes the form $h^*(\xv) = \indic(|\inner{\xv,\bnu}| - |\inner{\xv,\bmu}|)$, where $\indic(\cdot)$ is the indicator function. We then compute a lower bound on the probability that $h^*$ fails to classify one data point from this model, followed by a concentration argument that computes a lower bound on the fraction of points that $h^*$ fails to classify with overwhelming probability. Consequently, a negative result for the Bayes optimal classifier implies a negative result for all classifiers.

For part two of \cref{thm:threshold-without-graph}, we design a two-layer and a three-layer network that realize the (Bayes) optimal classifier. We then use a concentration argument to show that in the regime where the distance between the means is large enough, the function representing our two-layer or three-layer network roughly evaluates to a quantity that has a positive sign for one class and a negative sign for the other class. Furthermore, the output of the function scales with the distance between the means. Thus, with a suitable assumption on the magnitude of the distance between the means, the output of the networks has the correct signs with overwhelming probability. Following this argument, we show that the cross-entropy loss obtained by the networks can be made arbitrarily small by controlling the optimization constraint $R$ (see \cref{eq:OPT}), implying perfect classification.

For \cref{thm:gc-improvement}, we observe that for the (Bayes) optimal networks designed for \cref{thm:threshold-without-graph}, placing graph convolutions in the second or the third layer reduces the effective variance of the functions representing the network. This stems from the fact that for the data model we consider, multi-layer networks with $\relu$ activations are Lipschitz functions of Gaussian random variables. First, we compute the precise reduction in the variance of the data characterized by $K>0$ graph convolutions (see \cref{lem:var-redn}). Then for part one of the theorem where we analyze one graph convolution, we use the assumption on the graph density to conclude that the degrees of each node concentrate around the expected degree. This helps us characterize the variance reduction, which further allows the distance between the means to be smaller than in the case of a standard MLP, hence, obtaining an improvement in the threshold for perfect classification. Part two of the theorem studies the placement of two graph convolutions using a very similar argument. In this case, the variance reduction is characterized by the number of common neighbours of a pair of nodes rather than the degree of a node, and is stronger than the variance reduction offered by a single graph convolution. This helps to show further improvement in the threshold for two graph convolutions.

\section{Experiments}\label{experiments}
In this section we provide empirical evidence that supports our claims in \cref{results}. We begin by analyzing the synthetic data models $\XGMM$ and $\XCSBM$ that are crucial to our theoretical results, followed by a similar analysis on multiple real-world datasets tailored for node classification tasks. We show a comparison of the test accuracy obtained by various learning methods in different regimes, along with a display of how the performance changes with the properties of the underlying graph, i.e., with the intra-class and inter-class edge probabilities $p$ and $q$.

For both synthetic and real-world data, the performance of the networks does not change significantly with the choice of the placement of graph convolutions. In particular, placing all convolutions in the last layer achieves a similar performance as any other placement for the same number of convolutions. This observation aligns with the results in \cite{gasteiger2019combining}.

\subsection{Synthetic data}\label{experiments:synthetic}
In this section, we empirically show the landscape of the accuracy achieved for various multi-layer networks with up to three layers and up to two graph convolutions.
In \cref{fig:synthetic}, we show that as claimed in \cref{thm:gc-improvement}, a single graph convolution reduces the classification threshold by a factor of $\nicefrac{1}{\sqrt[4]{\E\,\degr}}$ and two graph convolutions reduce the threshold by a factor of $\nicefrac{1}{\sqrt[4]{n}}$, where $\E\,\degr = \frac{n}{2}(p+q)$.
\begin{figure}[!ht]
    \centering
    \begin{subfigure}[b]{0.49\textwidth}
        \includegraphics[width=\linewidth]{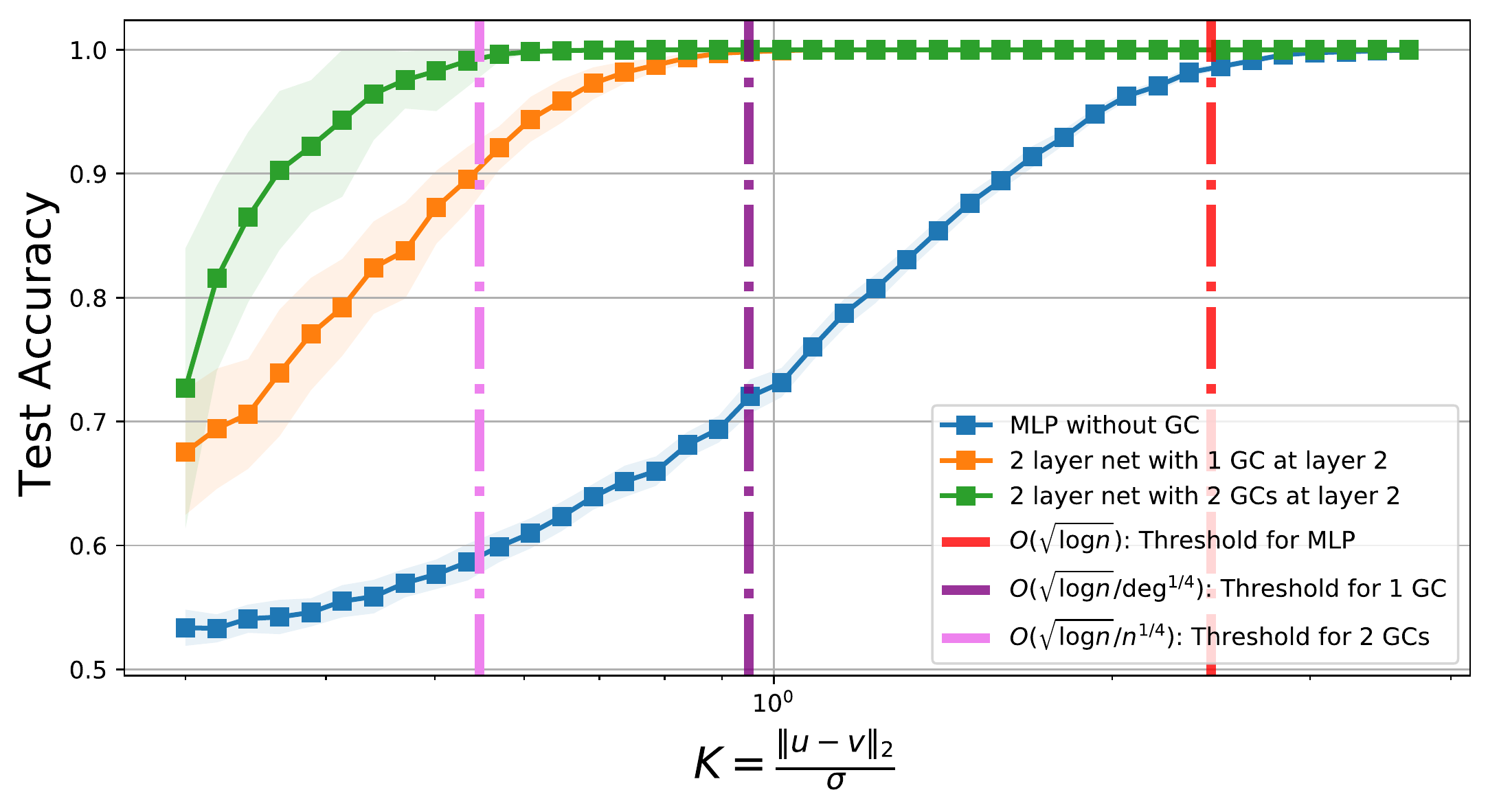}
        \caption{Two-layer networks with $(p,q)=(0.2,0.02)$.}
        \label{fig:synthetic-sparse-2layers}
    \end{subfigure}
    \begin{subfigure}[b]{0.49\textwidth}
        \includegraphics[width=\linewidth]{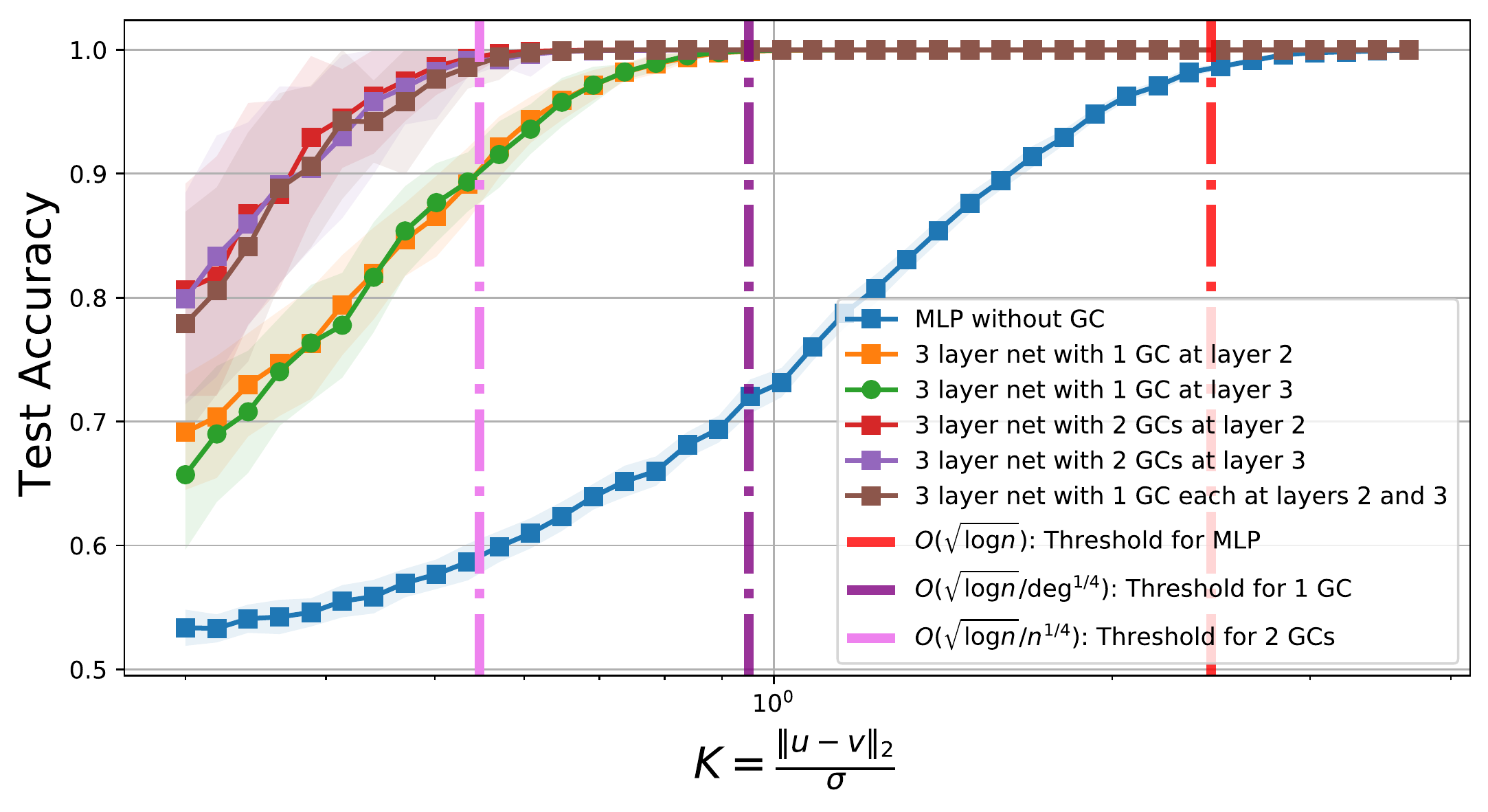}
        \caption{Three-layer networks with $(p,q)=(0.2,0.02)$.}
        \label{fig:synthetic-sparse-3layers}
    \end{subfigure}
    \begin{subfigure}[b]{0.49\textwidth}
        \includegraphics[width=\linewidth]{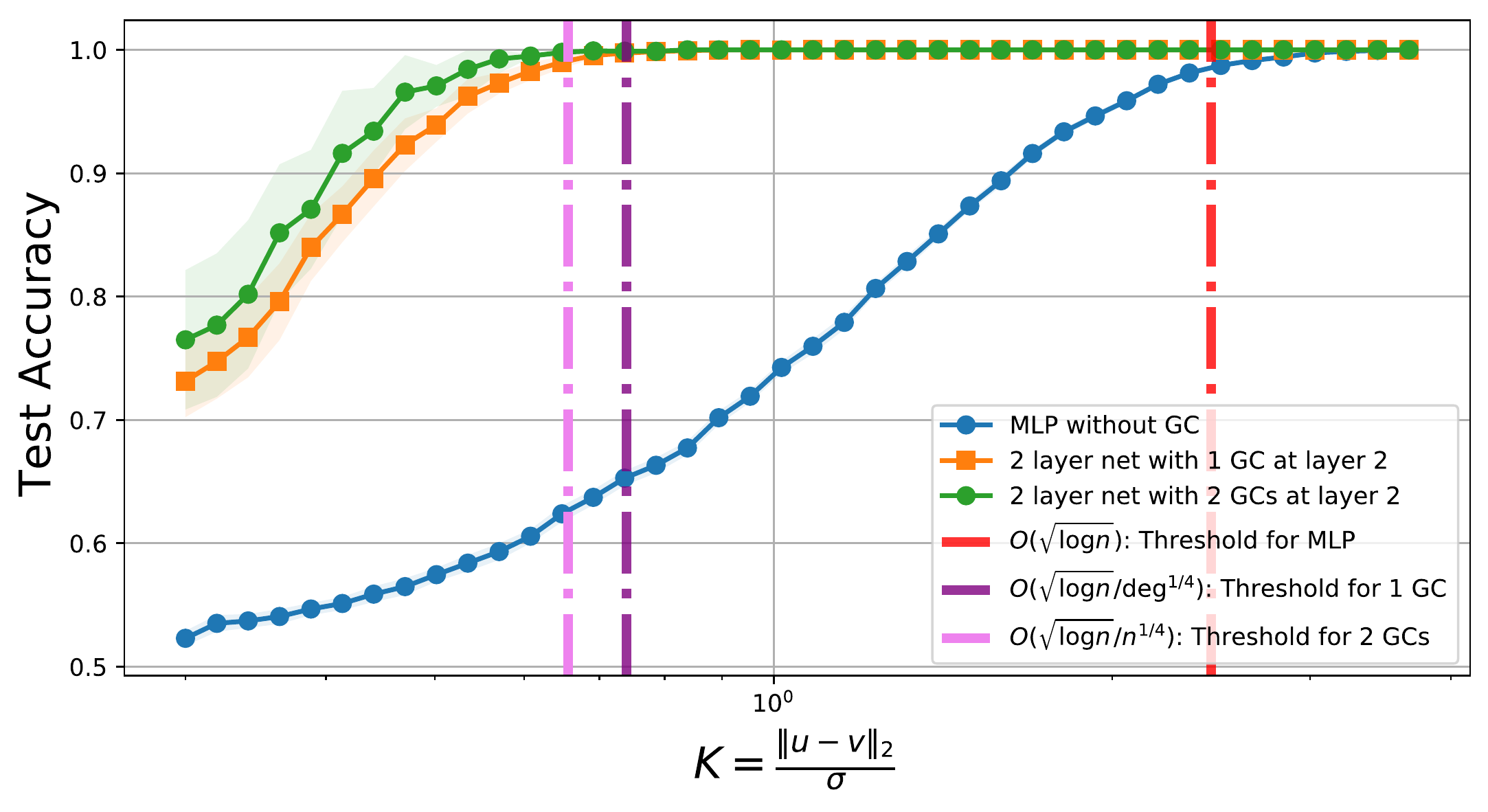}
        \caption{Two-layer networks with $(p,q)=(0.5,0.1)$.}
        \label{fig:synthetic-dense-2layers}
    \end{subfigure}
    \begin{subfigure}[b]{0.49\textwidth}
        \includegraphics[width=\linewidth]{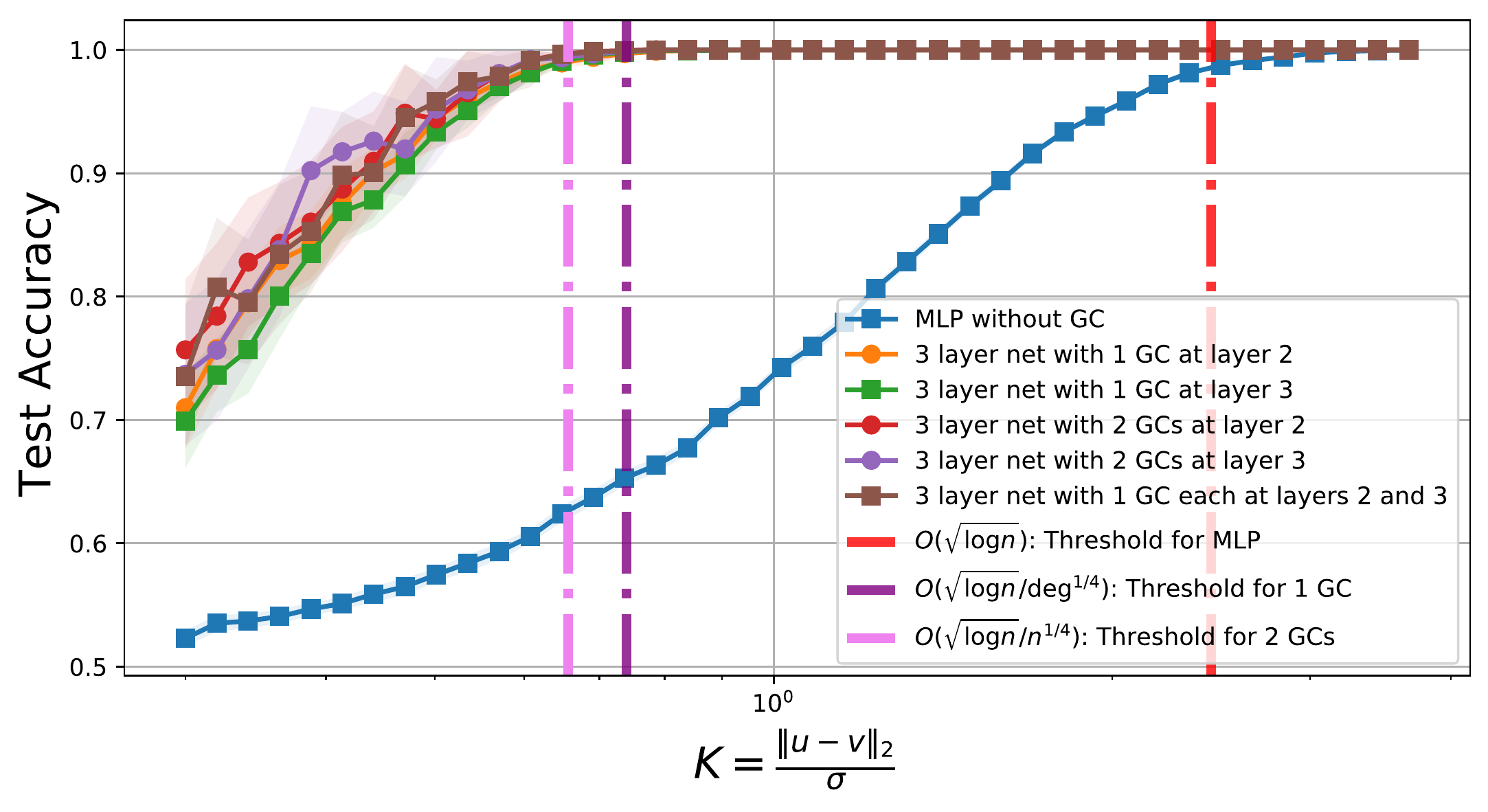}
        \caption{Three-layer networks with $(p,q)=(0.5,0.1)$.}
        \label{fig:synthetic-dense-3layers}
    \end{subfigure}
    \caption{Averaged test accuracy (over $50$ trials) for various networks with and without graph convolutions on the $\XCSBM$ data model with $n=400,d=4$ and $\sigma^2=\nicefrac{1}{d}$. The x-axis denotes the ratio $K = \norm{\bmu-\bnu}_2/\sigma$ on a logarithmic scale. The vertical lines indicate the classification thresholds mentioned in part two of \cref{thm:threshold-without-graph} (red), and in \cref{thm:gc-improvement} (violet and pink).}
    \label{fig:synthetic}
\end{figure}

We observe that the placement of graph convolutions does not matter as long as it is not in the first layer. \cref{fig:synthetic-sparse-2layers,fig:synthetic-sparse-3layers} show that the performance is mutually similar for all networks that have one graph convolution placed in the second or the third layer, and for all networks that have two graph convolutions placed in any combination among the second and the third layers.
In \cref{fig:synthetic-dense-2layers,fig:synthetic-dense-3layers}, we observe that two graph convolutions do not obtain a significant advantage over one graph convolution in the setting where $p$ and $q$ are large, i.e., when the graph is dense. We observed similar results for various other values of $p$ and $q$ (see \cref{additional-experiments-synthetic} for some more plots).

Furthermore, we verify that if a graph convolution is placed in the first layer of a network, then it is difficult to learn a classifier for the $\XCSBM$ data model. In this case, test accuracy is low even for the regime where the distance between the means is quite large. The corresponding metrics are presented in \cref{additional-experiments-synthetic}.

\subsection{Real-world data}\label{experiments:real-world}
For real-world data, we test our results on three graph benchmarks: \emph{CORA}, \emph{CiteSeer}, and \emph{Pubmed} citation network datasets \cite{sen2008collective}. Results for larger datasets are presented in \cref{additional-experiments-real}. We observe the following trends: First, we find that as claimed in \cref{thm:gc-improvement}, methods that utilize graphical information in the data perform remarkably better than a traditional MLP that does not use relational information.
Second, all networks with one graph convolution in any layer (red and blue) achieve a mutually similar performance, and all networks with two graph convolutions in any combination of placement (green and yellow) achieve a mutually similar performance. This demonstrates a result similar to \cref{cor:placement} for real-world data, showing that the location of the graph convolutions does not significantly affect the accuracy.
Finally, networks with two graph convolutions perform better than networks with one graph convolution.
\begin{figure}[!htb]
    \centering
    \begin{subfigure}[b]{\textwidth}
        \includegraphics[width=\linewidth]{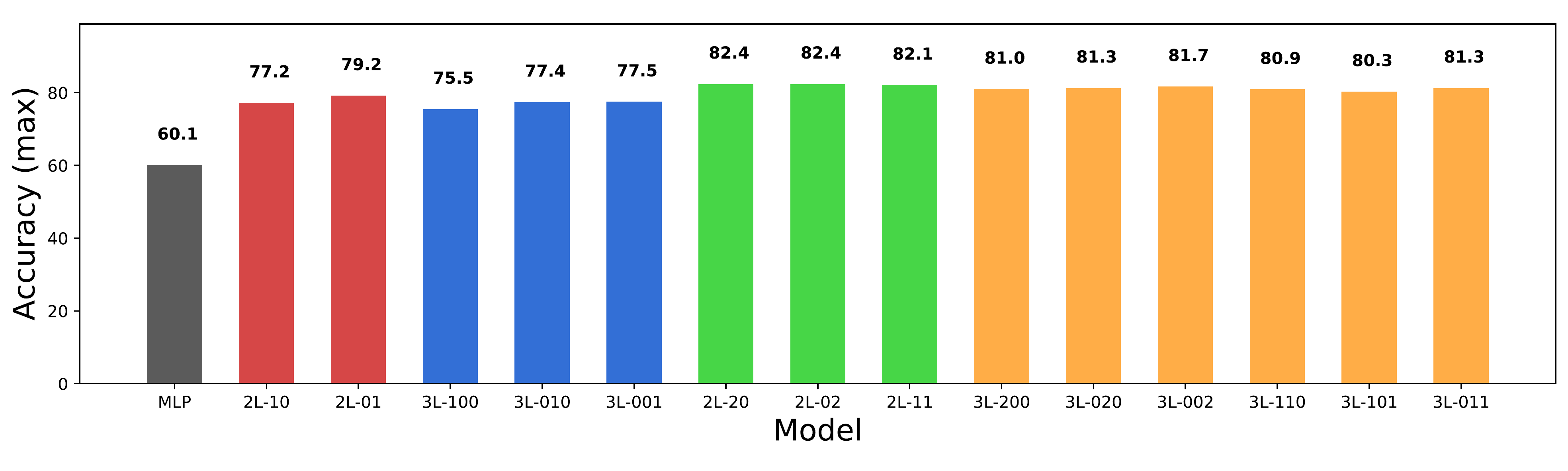}
        \caption{Accuracy of various learning models on the CORA dataset.}
        \label{fig:cora-max-max}
    \end{subfigure}
    \begin{subfigure}[b]{\textwidth}
        \includegraphics[width=\linewidth]{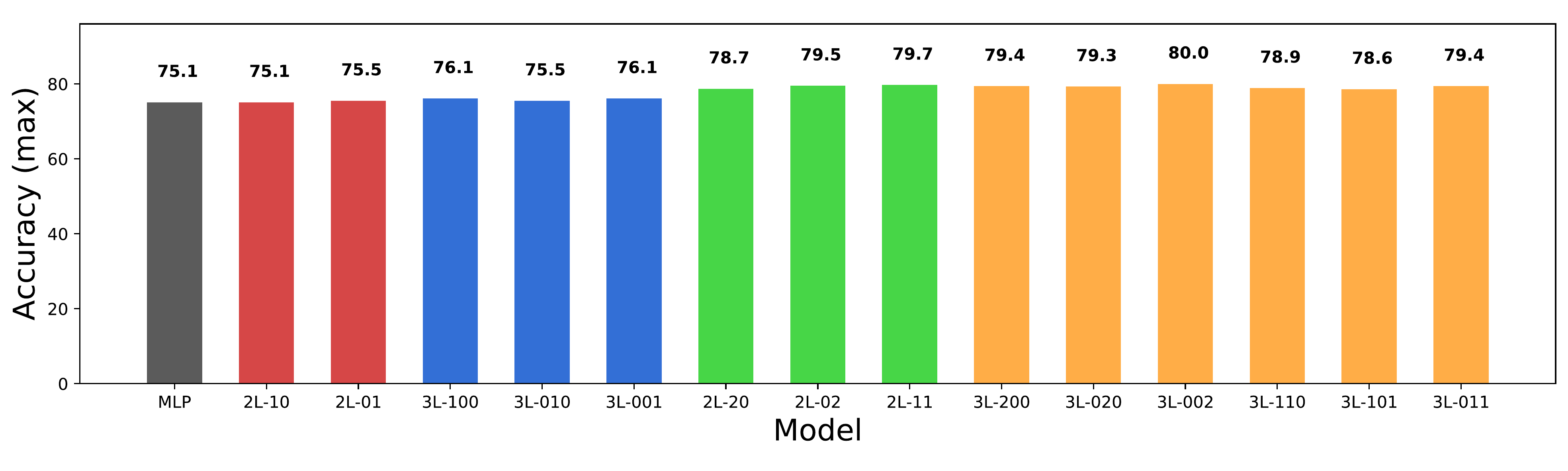}
        \caption{Accuracy of various learning models on the Pubmed dataset.}
        \label{fig:pubmed-max-max}
    \end{subfigure}
    \begin{subfigure}[b]{\textwidth}
        \includegraphics[width=\linewidth]{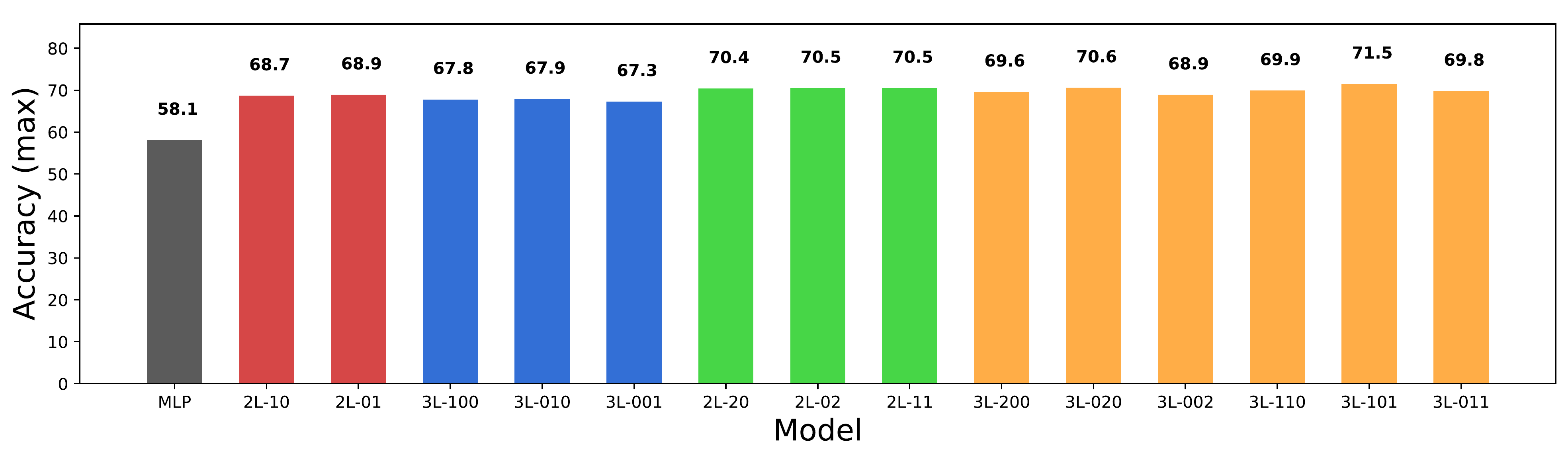}
        \caption{Accuracy of various learning models on the CiteSeer dataset.}
        \label{fig:citeseer-max-max}
    \end{subfigure}
    \caption{Maximum accuracy (percentage) over $50$ trials for various networks. A network with $k$ layers and $j_1,\ldots,j_k$ convolutions in each of the layers is represented by the label $k$L-$j_1\ldots j_k$.}
    \label{fig:real-data-accuracy-max}
\end{figure}

In \cref{fig:real-data-accuracy-max}, we present for all networks, the maximum accuracy over $50$ trials, where each trial corresponds to a random initialization of the networks. For $2$-layer networks, the hidden layer has width $16$, and for $3$-layer networks, both hidden layers have width $16$. We use a dropout probability of $0.5$ and a weight decay of $10^{-5}$ while training.

For this study, we attribute minor changes in the accuracy to hyperparameter tuning and algorithmic tweaks such as dropouts and weight decay. This helps us clearly observe the important difference in the accuracy of networks with one graph convolution versus two graph convolutions. For example, in \cref{fig:cora-max-max}, we note that there are differences among the accuracy of the networks with one graph convolution (red and blue). However, these differences are minor compared to the difference between the accuracy of networks with one convolution (red and blue) and networks with two convolutions (green and yellow). We also show the averaged accuracy in \cref{additional-experiments-real}.
Note that the accuracy slightly differs from well-known results in the literature due to implementation differences. In particular, the GCN implementation in \cite{kipf:gcn} uses $\tilde{\bA} = \bD^{-\frac12}\bA\bD^{-\frac12}$ as the normalized adjacency matrix, however, we use $\tilde{\bA}=\bD^{-1}\bA$.\footnote{Our proofs rely on degree concentration, and thus, generalize to the other type of normalization as well.} In \cref{additional-experiments-real}, we also show empirical results for the normalization $\tilde{\bA} = \bD^{-\frac12}\bA\bD^{-\frac12}$, that match with already known results in the literature.

\section{Conclusion and future work}\label{future-work}
We study the fundamental limits of the capacity of graph convolutions when placed beyond the first layer of a multi-layer network for the $\XCSBM$ data model, and provide theoretical guarantees for their performance in different regimes of the distance between the means. Through our experiments on both synthetic and real-world data, we show that the number of convolutions is a more significant factor for determining the performance of a network, rather than the number of layers in the network.

Furthermore, we show that placing graph convolutions in any combination achieves mutually similar performance enhancements for the same number of convolutions. Additionally, we observe that multiple graph convolutions are advantageous only when the underlying graph is relatively sparse. Intuitively, this is because in a dense graph, a single convolution can gather information from a large number of nodes, while in a sparser graph, more convolutions are needed to gather information from a larger number of nodes.

Our analysis for the effects of graph convolutions are limited to a positive result for node classification tasks, and thus, we only provide a minimum guarantee for improvement in the classification threshold in terms of the distance between the means of the node features. To fully understand the limitations of graph convolutions, a complementary negative result (similar to part one of \cref{thm:threshold-without-graph}) for data models with relational information is required, showing the maximum improvement that graph convolutions can realize in a multi-layer network. This problem is hard due to two reasons: First, there does not exist a concrete notion of an \emph{optimal} classifier for data models which have node features coupled with relational information. Second, a graph convolution transforms an iid set of features into a highly correlated set of features, making it difficult to apply classical high-dimensional concentration arguments.

Another limitation of our work is that we only study node classification problems. A possible future work is an extension of our results to other learning problems, such as link prediction.

\section*{Acknowledgements}
K. Fountoulakis would like to acknowledge the support of the Natural Sciences and Engineering Research Council of Canada (NSERC). Cette recherche a \'et\'e financ\'ee par le Conseil de recherches en sciences naturelles et en g\'enie du Canada (CRSNG), [RGPIN-2019-04067, DGECR-2019-00147].

A. Jagannath acknowledges the support of the Natural Sciences and Engineering Research Council of Canada (NSERC). Cette recherche a \'et\'e financ\'ee par le Conseil de recherches en sciences naturelles et en g\'enie du Canada (CRSNG),  [RGPIN-2020-04597, DGECR-2020-00199].

\bibliography{references.bib}
\bibliographystyle{abbrvnat}

\appendix

\section{Proofs}\label{proofs}
\subsection{Assumptions and notation}
\begin{assumption}
For the $\XGMM$ data model, the means of the Gaussian mixture are such that $\inner{\bmu,\bnu} = 0$ and $\norm{\bmu}_2=\norm{\bnu}_2$.
\end{assumption}
We denote $[x]_+=\relu(x)$ and $\varphi(x) = {\rm sigmoid}(x) = \nicefrac{1}{1+e^{-x}}$, applied element-wise on the inputs. For any vector $\vv$, $\hat{\vv} = \frac{\vv}{\|\vv\|_2}$ denotes the normalized $\vv$. We use $\gamma=\|\bmu-\bnu\|_2$ to denote the distance between the means of the inter-class components of the mixture model, and $\gamma'$ to denote the norm of the means, $\gamma' = \nicefrac{\gamma}{\sqrt2} = \|\bmu\|_2 = \|\bnu\|_2$.

Given intra-class and inter-class edge probabilities $p$ and $q$, we define $\Gamma(p,q)=\frac{|p-q|}{p+q}$. We denote the probability density function of a standard Gaussian by $\phi(x)$, and the cumulative distribution function by $\Phi(x)$. The complementary distribution function is denoted by $\Phi_{\rm c}(x) = 1 - \Phi(x)$.

\subsection{Elementary results}
In this section, we state preliminary results about the concentration of the degrees of all nodes and the number of common neighbours for all pairs of nodes, along with the effects of a graph convolution on the mean and the variance of some data. Our results regarding the merits of graph convolutions rely heavily on these arguments.
\begin{proposition}[Concentration of degrees]\label{prop:degree-conc}
Assume that the graph density is $p,q = \Omega(\frac{\log^2n}{n})$. Then for any constant $c>0$, with probability at least $1-2n^{-c}$, we have for all $i\in[n]$ that
\begin{align*}
    \degr(i) &= \frac n2(p+q)(1 \pm o_n(1)), & \frac{1}{\degr(i)} &= \frac 2{n(p+q)}(1 \pm o_n(1)),
\end{align*}
\[
    \frac{1}{\degr(i)}\bpar{\sum_{j\in C_1}a_{ij} - \sum_{j\in C_0}a_{ij}} = (2\varepsilon_i-1)\frac{p-q}{p+q}(1+o_n(1)),
\]
where the error term $o_n(1) = O\bpar{\sqrt{\frac{c}{\log n}}}$.
\end{proposition}
\begin{proof}
Note that $\degr(i)$ is a sum of $n$ Bernoulli random variables, hence, we have by the Chernoff bound \cite[Section 2]{Vershynin:2018} that
\[
\Pr{\degr(i) \in \bsq{\frac n2(p+q)(1-\delta), \frac n2(p+q)(1+\delta)}^\setc}\leq 2\exp(-Cn(p+q)\delta^2),
\]
for some $C>0$. We now choose $\delta=\sqrt{\frac{(c+1)\log n}{Cn(p+q)}}$ for a large constant $c>0$. Note that since $p,q=\Omega(\nicefrac{\log^2 n}{n})$, we have that $\delta = O(\sqrt{\frac{c}{\log n}}) = o_n(1)$. Then following a union bound over $i\in[n]$, we obtain that with probability at least $1-2n^{-c}$,
\begin{align*}
\degr(i) = \frac n2(p+q)\bpar{1 \pm O\Big(\sqrt{\frac{c}{\log n}}\Big)}\; \text{for all } i\in[n],\\
\frac{1}{\degr(i)} = \frac 2{n(p+q)}\bpar{1 \pm O\Big(\sqrt{\frac{c}{\log n}}\Big)}\; \text{for all } i\in[n].
\end{align*}
Note that $\frac{1}{\degr(i)}\sum_{j\in C_b}a_{ij}$ for any $b\in\{0,1\}$ is a sum of independent Bernoulli random variables. Hence, by a similar argument, we have that with probability at least $1-2n^{-c}$,
\[
    \frac{1}{\degr(i)}\bpar{\sum_{j\in C_1}a_{ij} - \sum_{j\in C_0}a_{ij}} = (2\varepsilon_i-1)\frac{p-q}{p+q}(1+o_n(1))\; \text{for all } i\in[n]. \qedhere
\]
\end{proof}

\begin{proposition}[Concentration of the number of common neighbours]\label{prop:common-neighbours-conc}
Assume that the graph density is $p,q = \Omega(\frac{\log n}{\sqrt{n}})$. Then for any constant $c>0$, with probability at least $1 - 2n^{-c}$,
\begin{align*}
    |N_i\cap N_j| &= \frac n2(p^2+q^2)(1\pm o_n(1))&\text{for all } i\sim j,\\
    |N_i\cap N_j| &= npq(1\pm o_n(1))&\text{for all } i\nsim j,
\end{align*}
where the error term $o_n(1) = O\bpar{\sqrt{\frac{c}{\log n}}}$.
\end{proposition}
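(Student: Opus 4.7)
The strategy mirrors the degree-concentration argument in \cref{prop:degree-conc}: express $|N_i\cap N_j|$ as a sum of independent Bernoullis, apply a multiplicative Chernoff bound to each pair, and union-bound over pairs. Fix distinct $i,j\in[n]$ and write
\[
|N_i\cap N_j| = \sum_{k\in[n]\setminus\{i,j\}} a_{ik}a_{jk}.
\]
The key structural observation is that as $k$ ranges over $[n]\setminus\{i,j\}$, the pairs of edges $\{(i,k),(j,k)\}$ are pairwise disjoint, so by edge independence in the SBM the $n-2$ summands are mutually independent $\{0,1\}$-valued random variables, each the product of two independent Bernoullis.

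Next I would compute $\mu := \E[|N_i\cap N_j|]$ case by case. If $i,j$ lie in the same community, then $a_{ik}a_{jk}$ has mean $p^2$ when $k$ is also in that community and mean $q^2$ when $k$ is in the other, so $\mu = \tfrac n2(p^2+q^2)(1\pm O(1/n))$. If instead $i,j$ lie in different communities, every $k$ contributes mean $pq$ regardless of its label, giving $\mu = npq(1\pm O(1/n))$. In both cases the density assumption $p,q=\Omega(\log n/\sqrt n)$ forces $\mu = \Omega(\log^2 n)$, which is the feature needed to survive the union bound over pairs.

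Finally I apply the multiplicative Chernoff bound $\Prob(||N_i\cap N_j|-\mu|\ge\delta\mu) \le 2\exp(-c\mu\delta^2)$ for a universal $c>0$, choosing $\delta = \Theta(\sqrt{(C+2)/\log n})$. Since $\mu=\Omega(\log^2 n)$, this makes $c\mu\delta^2\ge(C+2)\log n$, so each pair fails with probability at most $2n^{-(C+2)}$. Union-bounding over the at most $n^2$ pairs and rescaling $C$ by a constant yields the stated $1-2n^{-C}$ bound, with the multiplicative error $o_n(1)=O(\sqrt{C/\log n})$ comfortably absorbing the $O(1/n)$ correction to the mean that came from omitting the indices $k\in\{i,j\}$. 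I do not anticipate any real obstacle: the only subtlety is verifying the independence claim in the first paragraph, which is immediate once one notices the disjoint-edge structure, and everything else is the same Chernoff-plus-union-bound template as \cref{prop:degree-conc}.
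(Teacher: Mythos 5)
Your proposal is correct and follows essentially the same route as the paper's proof: write $|N_i\cap N_j|$ as a sum of independent Bernoulli products, compute the mean in the two cases, apply a multiplicative Chernoff bound with $\delta=\Theta(\sqrt{C/\log n})$ (which works because the density assumption makes the mean $\Omega(\log^2 n)$), and union-bound over all pairs. Your treatment is in fact slightly more careful than the paper's in excluding the indices $k\in\{i,j\}$ and in explicitly justifying the independence of the summands, but these are refinements of the same argument rather than a different one.
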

\begin{proof}
For any two distinct nodes $i,j\in[n]$ we have that the number of common neighbours of $i$ and $j$ is $|N_i\cap N_j| = \sum_{k\in[n]}a_{ik}a_{jk}$. This is a sum of independent Bernoulli random variables, with mean $\E|N_i\cap N_j| = \frac{n}{2}(p^2 + q^2)$ for $i\sim j$ and $\E|N_i\cap N_j| = npq$ for $i\nsim j$. Denote $\mu_{ij}=\E|N_i\cap N_j|$. Therefore, by the Chernoff bound \cite[Section 2]{Vershynin:2018}, we have for a fixed pair of nodes $(i,j)$ that
\begin{align*}
\Pr{|N_i\cap N_j| \in \bsq{\mu_{ij}(1-\delta_{ij}), \mu_{ij}(1+\delta_{ij})}^\setc}\leq 2\exp(-C\mu_{ij}\delta_{ij}^2)
\end{align*}
for some constant $C>0$. We now choose $\delta_{ij}=\sqrt{\frac{(c+2)\log n}{C\mu_{ij}}}$ for any large $c>0$. Note that since $p,q=\Omega(\nicefrac{\log n}{\sqrt n})$, we have that $\delta_{ij} = O(\sqrt{\frac{c}{\log n}}) = o_n(1)$. Then following a union bound over all pairs $(i,j)\in [n]\times[n]$, we obtain that with probability at least $1 - 2n^{-c}$, for all pairs of nodes $(i,j)$ we have
\begin{align*}
    |N_i\cap N_j| &= \frac n2(p^2+q^2)(1\pm o_n(1))&\text{for all } i\sim j,\\
    |N_i\cap N_j| &= npq(1\pm o_n(1))&\text{for all } i\nsim j. &\qedhere
\end{align*}
\end{proof}

\begin{lemma}[Variance reduction]\label{lem:var-redn}
Denote the event from \cref{prop:degree-conc} to be $B$. Let $\{\bX_i\}_{i\in[n]} \in \R^{n\times d}$ be an iid sample of data. For a graph with adjacency matrix $\bA$ (including self-loops) and a fixed integer $K>0$, define a $K$-convolution to be $\bXt = (\bD^{-1}\bA)^K\bX$.
Then we have
\[\cov(\bXt_i\mid B) = \rho(K)\cov(\bX_i),\text{ where } \rho(K) = \bpar{\frac{1+o_n(1)}{\Delta}}^{2K}\sum_{j\in[n]}\bA^K(i,j)^2.\]
Here, $\bA^K(i, j)$ is the entry in the $i$th row and $j$th column of the exponentiated matrix $\bA^K$ and $\Delta=\E\,\degr=\frac{n}{2}(p+q)$.
\end{lemma}
\begin{proof}
For a matrix $\bM$, the $i$th convolved data point is $\bXt_i = \bM_i^\top \bX$, where $\bM_i^\top$ denotes the $i$th row of $\bM$. Since $\bX_i$ are iid, we have
\[
    \cov(\bXt_i) = \sum_{j\in[n]}(\bM_{ij})^2\cov(\bX_j).
\]
It remains to compute the entries of the matrix $\bM = (\bD^{-1}\bA)^K$. Note that we have $\bD^{-1}\bA(i,j) = \nicefrac{a_{ij}}{\degr(i)}$, so we obtain that
\[
    \bM_{ij} = (\bD^{-1}\bA)^K(i,j) = \sum_{j_1=1}^n\sum_{j_2=1}^n\cdots\sum_{j_{K-1}=1}^n \frac{a_{ij_1}a_{j_1j_2}\cdots a_{j_{K-2}j_{K-1}}a_{j_{K-1}j}}{\degr(i)\degr(j_1)\cdots \degr(j_{K-1})}.
\]
Recall that on the event $B$, the degrees of all nodes are $\Delta(1\pm o_n(1))$, and hence, we have that
\begin{align*}
    \bM_{ij} &= \frac{(1\pm o_n(1))^K}{\Delta^K}\sum_{j_1=1}^n\sum_{j_2=1}^n\cdots\sum_{j_{K-1}=1}^n a_{ij_1}\cdots a_{j_{K-2}j_{K-1}}a_{j_{K-1}j},
\end{align*}
where the error $o_n(1)=O(\frac1{\sqrt{\log n}})$. The sum of these products of the entries of $\bA$ is simply the number of length-$K$ paths from node $i$ to $j$, i.e., $\bA^K(i, j)$. Thus, we have
\begin{align*}
\cov(\bXt_i\mid B) &= \sum_{j\in[n]}(\bM_{ij})^2\cov(\bX_j) = \bpar{\frac{1+o_n(1)}{\Delta}}^{2K}\sum_{j\in[n]}\bA^K(i, j)^2\cov(\bX_j).
\end{align*}
Since $\bX_j$ are iid, we obtain that
$\rho(K) = \bpar{\frac{1+o_n(1)}{\Delta}}^{2K}\sum_{j\in[n]}\bA^K(i, j)^2$.
\end{proof}

We now state a result about the output of the (Bayes) optimal classifier for the $\XGMM$ data model that is used in several of our proofs.
\begin{lemma}\label{lemma:zeta}
Let $h(\bx) = |\inner{\bx,\hbnu}| - |\inner{\bx,\hbmu}|$ for all $\bx\in\R^d$ and define
\[
    \zeta(x,y) = x\erf\bpar{\frac{x}{\sqrt2 y}} - y\sqrt{\frac2\pi}\bpar{1-e^{-\frac{x^2}{2y^2}}}
\]
for $x,y\in\R$. Then we have
\begin{enumerate}
    \item The expectation
    $
        \E h(\bX_i) =
        \begin{cases}
            -\zeta(\gamma', \sigma) & i\in C_0\\
            \zeta(\gamma', \sigma) & i\in C_1
        \end{cases}
    $.
    \item For any $\gamma,\sigma$ such that $\gamma=o_n(\sigma)$, we have that $\zeta(\gamma,\sigma) = \Omega(\frac{\gamma^2}{\sigma})$.
\end{enumerate}
\end{lemma}
\begin{proof}
For part one, observe that $\inner{\bX_i,\hbmu}$ and $\inner{\bX_i,\hbnu}$ are Gaussian random variables with variance $\sigma^2$ and means $\gamma', 0$ if $\veps_i=0$ and $0,\gamma'$ if $\veps_i=1$, respectively. Thus, $|\inner{\bX_i,\hbmu}|$ and $|\inner{\bX_i,\hbnu}|$ are folded-Gaussian random variables and we have
$\E h(\bX_i) = -\zeta(\gamma', \sigma)$ if  $i\in C_0$ and $\E h(\bX_i) = \zeta(\gamma', \sigma)$ otherwise.

For part two, note that
\begin{align*}
    \zeta(\gamma,\sigma) = \gamma\bpar{\erf(t) - \frac{1}{t\sqrt{\pi}}(1-e^{-t^2})} = \gamma h(t),
\end{align*}
where $t=\nicefrac{\gamma}{\sigma\sqrt{2}}$ and $h(t) = \erf(t) - \nicefrac{1}{t\sqrt{\pi}}(1-e^{-t^2})$. We now note that when $\gamma=o_n(\sigma)$, we have $t=o_n(1)$. Now using the series expansion of $h(t)$ about $t=0$, we obtain that
\[
    h(t) = \frac{t}{\sqrt{\pi}} - \frac{t^3}{6\sqrt{\pi}} + O(t^5) \ge \frac{t}{\sqrt{\pi}} - \frac{t^3}{6\sqrt{\pi}} = \Omega(t).
\]
Hence, $\zeta(\gamma, \sigma) = \gamma h(t) = \Omega(\frac{\gamma^2}{\sigma})$. 
\end{proof}

\begin{fact}\label{fact:log-1-plus-x}
For any $x\in [0,1]$, $\frac x2\le \log(1+x) \le x$.
\end{fact}

\subsection{Proof of Theorem 1 part one}\label{proof-thm-1}
In this section we prove our first result about the fraction of misclassified points in the absence of graphical information.
We begin by computing the Bayes optimal classifier for the data model $\XGMM$ (see \cref{data-model}). A Bayes classifier, denoted by $h^*(\bx)$, maximizes the posterior probability of observing a label given the input data $\bx$. More precisely,
$h^*(\bx) = \argmax_{b\in\{0,1\}}~ \Pr{y=b\mid \xv=\bx}$,
where $\bx\in\R^d$ represents a single data point.
\begin{lemma}\label{lem:bayes-xgmm}
For some fixed $\bmu,\bnu\in\R^d$ and $\sigma^2>0$, the Bayes optimal classifier, $h^*(\bx):\R^d\to\{0,1\}$ for the data model $\XGMM(n,d,\bmu,\bnu,\sigma^2)$ is given by
\[
h^*(\bx) = \indic(\abs{\inner{\bx,\bmu}}< \abs{\inner{\bx,\bnu}}) = \begin{cases}
    0 & \abs{\inner{\bx,\bmu}}\ge \abs{\inner{\bx,\bnu}}\\
    1 & \abs{\inner{\bx,\bmu}}< \abs{\inner{\bx,\bnu}}
\end{cases},
\]
where $\indic$ is the indicator function.
\end{lemma}
\begin{proof}
Note that $\Pr{y=0}=\Pr{y=1}=\frac12$. Let $f_{\xv}(\bx)$ denote the density function of a continuous random vector $\xv$. Therefore, for any $b\in\{0,1\}$,
\[
    \Pr{y=b\mid \xv=\bx} = \frac{\Pr{y=b}f_{\xv\mid y}(\bx\mid y=b)}{\sum_{c\in\{0,1\}}\Pr{y=c}f_{\xv\mid y}(\bx\mid y=c)} = \frac{1}{1 +  \frac{f_{\xv\mid y}(\bx\mid y=1-b)}{f_{\xv\mid y}(\bx\mid y=b)}}.
\]
Let's compute this for $b=0$. We have
\[
    \frac{f_{\xv\mid y}(\bx\mid y=1)}{f_{\xv\mid y}(\bx\mid y=0)} = \frac{\cosh(\inner{\bx,\bnu}/\sigma^2)}{\cosh(\inner{\bx,\bmu}/\sigma^2)}\exp\bpar{\frac{\norm{\bmu}^2-\norm{\bnu^2}}{2\sigma^2}} = \frac{\cosh(\inner{\bx,\bnu}/\sigma^2)}{\cosh(\inner{\bx,\bmu}/\sigma^2)},
\]
where in the last equation we used the assumption that $\norm{\bmu}=\norm{\bnu}$. The decision regions are then identified by: $\Pr{y=0\mid \xv}\ge \nicefrac12$ for label $0$ and $\Pr{y=0\mid \xv} < \nicefrac12$ for label $1$.

Thus, for label $0$, we need $\frac{f_{\xv\mid y}(\bx\mid y=1)}{f_{\xv\mid y}(\bx\mid y=0)}<1$, which implies that
$\frac{\cosh(\inner{\bx,\bnu}/\sigma^2)}{\cosh(\inner{\bx,\bmu}/\sigma^2)}\le 1$. Now we note that $\cosh(x)\le \cosh(y)\implies |x|\le |y|$ for all $x,y\in\R$, hence, we have $\abs{\inner{\bx,\bmu}}\ge \abs{\inner{\bx,\bnu}}$. Similarly, we have the complementary condition for label $1$.
\end{proof}

Next, we design a two-layer and a three-layer network and show that for a particular choice of parameters $\theta = (\bW^{(l)},\bvec^{(l)})$ for $l\in \{1,2\}$ for the two-layer case and $l\in\{1,2,3\}$ for the three-layer case, the networks realize the optimal classifier described in \cref{lem:bayes-xgmm}.
\begin{proposition}\label{prop:ansatz}
Consider two-layer and three-layer networks of the form described in \cref{network-arch}, without biases (i.e., $\bvec^{(l)} = \zero$ for all layers $l$), for parameters $\bW^{(l)}$ and some $R\in\R^+$ as follows.
\begin{enumerate}
    \item For the two-layer network,
    \begin{align*}
        \bW^{(1)} &= R\begin{pmatrix}
        \hbmu & -\hbmu & \hbnu & -\hbnu
        \end{pmatrix}, &
        \bW^{(2)} &= \begin{pmatrix}
        -1 & -1 & 1 & 1
        \end{pmatrix}^\top.
    \end{align*}
    \item For the three-layer network,
    \begin{align*}
        \bW^{(1)} &= R\begin{pmatrix}
        \hbmu & -\hbmu & \hbnu & -\hbnu
        \end{pmatrix}, &
        \bW^{(2)} &= \begin{pmatrix}
        -1 & 1\\
        -1 & 1\\
        1 & -1\\
        1 & -1
        \end{pmatrix}, &&
        \bW^{(3)} &= \begin{pmatrix}
        1\\
        -1
        \end{pmatrix}.
    \end{align*}
\end{enumerate}
Then for any $\sigma>0$, the defined networks realize the Bayes optimal classifier for the data model $\XGMM(n,d,\bmu,\bnu,\sigma^2)$.
\end{proposition}
\begin{proof}
Note that the output of the two-layer network is $\varphi([\bX\bW^{(1)}]_+ \bW^{(2)})$, which is interpreted as the probability with which the network believes that the input is in the class with label $1$. The final prediction for the class label is thus assigned to be $1$ if the output is $\ge 0.5$, and $0$ otherwise. For each $i\in[n]$, we have that the output of the network on data point $i$ is
\begin{align*}
    \yh_i &= \varphi(R(-[\inner{\bX_i,\hbmu}]_+ - [-\inner{\bX_i,\hbmu}]_+ + [\inner{\bX_i,\hbnu}]_+ + [-\inner{\bX_i,\hbnu}]_+))\\
    &= \varphi((R(\abs{\inner{\bX_i,\hbnu}} - \abs{\inner{\bX_i ,\hbmu}})),
\end{align*}
where we used the fact that $[t]_+ + [-t]_+ = |t|$ for all $t\in \R$.
Similarly, for the three-layer network, the output is $\varphi([[\bX\bW^{(1)}]_+ \bW^{(2)}]_+ \bW^{(3)})$. So we have for each $i\in[n]$ that
\begin{align*}
    \yh_i &= \varphi\Bigg(R\Big([-[\inner{\bX_i,\hbmu}]_+ - [-\inner{\bX_i,\hbmu}]_+ + [\inner{\bX_i,\hbnu}]_+ + [-\inner{\bX_i,\hbnu}]_+]_+\\
    &\qquad\qquad\, -[[\inner{\bX_i,\hbmu}]_+ + [-\inner{\bX_i,\hbmu}]_+ - [\inner{\bX_i,\hbnu}]_+ - [-\inner{\bX_i,\hbnu}]_+]_+\Big)\Bigg)\\
    &= \varphi\bpar{R([\abs{\inner{\bX_i,\hbnu}} - \abs{\inner{\bX_i ,\hbmu}}]_+ - [\abs{\inner{\bX_i ,\hbmu}} - \abs{\inner{\bX_i,\hbnu}}]_+)}\\
    &= \varphi\bpar{R(\abs{\inner{\bX_i,\hbnu}} - \abs{\inner{\bX_i ,\hbmu}})},
\end{align*}
where in the last equation we used the fact that $[t]_+ - [-t]_+ = t$ for all $t\in \R$.

The final prediction is then obtained by considering the maximum posterior probability among the class labels $0$ and $1$, and thus,
\[
    {\rm pred}(\bX_i) = \indic(R\abs{\inner{\bX_i,\hbmu}} < R\abs{\inner{\bX_i,\hbnu}}) = \indic(\abs{\inner{\bX_i,\bmu}} < \abs{\inner{\bX_i,\bnu}}),
\]
which matches the Bayes classifier in \cref{lem:bayes-xgmm}.
\end{proof}

We now restate the relevant theorem below for convenience.
\begin{theorem*}[Restatement of part one of \cref{thm:threshold-without-graph}]
    Let $\bX\in\R^{n\times d}\sim \XGMM(n,d,\bmu,\bnu,\sigma^2)$.
    Assume that $\|\bmu-\bnu\|_2\le K\sigma$ and let $h(\xv):\R^d\to\{0,1\}$ be any binary classifier. Then for any $K>0$ and any $\eps\in(0, 1)$, at least a fraction $2\Phi_{\rm c}\bpar{\nicefrac{K}{2}}^2 - O(n^{-\eps/2})$ of all data points are misclassified by $h$ with probability at least $1 - \exp(-2n^{1-\eps})$.
\end{theorem*}
\begin{proof}
Recall from \cref{lem:bayes-xgmm} that for successful classification, we require for every $i\in[n]$,
\begin{align*}
    \abs{\inner{\bX_i,\bmu}} \ge \abs{\inner{\bX_i,\bnu}} & \qquad i\in C_0,\\
    \abs{\inner{\bX_i,\bmu}} < \abs{\inner{\bX_i,\bnu}} & \qquad i\in C_1.
\end{align*}
Let's try to upper bound the probability of the above event, i.e., the probability that the data is classifiable. We consider only class $C_0$, since the analysis for $C_1$ is symmetric and similar.
For $i\in C_0$, we can write $\bX_i = \bmu + \sigma \gv_i$, where $\gv_i\sim \Nc(\zero,I)$. Recall that $\gamma=\|\bmu-\bnu\|_2$ and $\gamma'=\nicefrac{\gamma}{\sqrt2} = \|\bmu\|_2=\|\bnu\|_2$. Then we have for any fixed $i\in C_0$ that
\begin{align*}
    \Pr{\abs{\inner{\bX_i,\bmu}} \ge \abs{\inner{\bX_i,\bnu}}}
    &= \Pr{\abs{\gamma' + \sigma \inner{\gv_i,\hat{\bmu}}} \ge \abs{\sigma \inner{\gv_i,\hat{\bnu}}}}\\
    &\le \Pr{\gamma' + \sigma\abs{\inner{\gv_i,\hat{\bmu}}} \ge \sigma\abs{\inner{\gv_i,\hat{\bnu}}}}&& \text{(by triangle inequality)}\\
    &\le \Pr{\abs{\inner{\gv_i,\hat{\bnu}}} - \abs{\inner{\gv_i,\hat{\bmu}}} \le \nicefrac{K}{\sqrt{2}}} && \text{(using $\gamma\le K\sigma$)}.
\end{align*}
We now define random variables $Z_1 = \inner{\gv_i,\hat{\bnu}}$ and $Z_2 = \inner{\gv_i,\hat{\bmu}}$ and note that $Z_1,Z_2\sim\Nc(0, 1)$ and $\E[Z_1Z_2] = 0$. Let $K' = K/\sqrt2$. We now have
\begin{align*}
    \Pr{|Z_1| - |Z_2| \le K'}
    &= 4\Pr{Z_1 - Z_2 \le K',\, Z_1,Z_2\ge 0}\\
    &= 4\int_{0}^{\infty}\Pr{0\le Z_1 \le z + K'}\,\phi(z)dz\\
    &= 4\int_{0}^{\infty}\bpar{\Phi(z + K')-\frac12}\,\phi(z)dz
    = 4\int_{0}^{\infty}\Phi(z + K')\,\phi(z)dz - 1\\
    &= 2\Phi(\nicefrac K2) + 2\Phi(\nicefrac K2)\Phi_{\rm c}(\nicefrac K2) - 1
    = 1 - 2\Phi_{\rm c}(\nicefrac{K}{2})^2.
\end{align*}
To evaluate the integral above, we used \cite[Table 1:10,010.6 and Table 2:2.3]{owen1980table}.
Thus, the probability that a point $i\in C_0$ is misclassified is lower bounded as follows
\[
\Pr{\bX_i \text{ is misclassified}} \ge 2\Phi_{\rm c}\bpar{\nicefrac{K}{2}}^2 = \tau_K.
\]
Note that this is a decreasing function of $K$, implying that the probability of misclassification decreases as we increase the distance between the means, and is maximum for $K=0$.

Define $M(n)$ for a fixed $K$ to be the fraction of misclassified nodes in $C_0$. Define $x_i$ to be the indicator random variable $\indic(\bX_i \text{ is misclassified})$. Then $x_i$ are Bernoulli random variables with mean at least $\tau_K$, and $\E M(n)=\frac2n\sum_{i\in C_0}\E x_i \ge \tau_K$. Using Hoeffding's inequality \cite[Theorem 2.2.6]{Vershynin:2018}, we have that for any $t>0$,
\[
\Pr{M(n) \ge \tau_K-t}\ge \Pr{M(n) \ge \E M(n)-t} \ge 1 - \exp(-nt^2).
\]
Choosing $t=n^{-\eps/2}$ for any $\eps\in(0, 1)$ yields
\[
\Pr{M(n) \ge \tau_K - n^{-\eps/2}} \ge 1 - \exp(-n^{1-\eps}). \qedhere
\]
\end{proof}

\subsection{Proof of Theorem 1 part two}
In this section, we show that in the positive regime (sufficiently large distance between the means), there exists a two-layer MLP that obtains an arbitrarily small loss, and hence, successfully classifies a sample drawn from the $\XGMM$ model with overwhelming probability.
\begin{theorem*}[Restatement of part two of \cref{thm:threshold-without-graph}]
Let $\bX\in\R^{n\times d}\sim \XGMM(n,d,\bmu,\bnu,\sigma^2)$. For any $\eps>0$, if the distance between the means is $\norm{\bmu-\bnu}_2=\Omega(\sigma(\log n)^{\frac12+\eps})$, then for any $c>0$, with probability at least $1-O(n^{-c})$, the two-layer and three-layer networks described in \cref{prop:ansatz} classify all data points, and obtain a cross-entropy loss given by
\[
\ell_{\theta}(\bX) = C\exp\bpar{-\frac{R}{\sqrt2}\|\bmu-\bnu\|_2\bpar{1\pm \sqrt{c}/(\log n)^{\eps}}},
\]
where $C\in [\nicefrac12, 1]$ is an absolute constant.
\end{theorem*}
\begin{proof}
Consider the two-layer and three-layer MLPs described in \cref{prop:ansatz}, for which we have $\yh_i = \varphi\bpar{R(\abs{\inner{\bX_i,\hbnu}} - \abs{\inner{\bX_i,\hbmu}})}$.
We now look at the loss for a single data point $\bX_i$,
\begin{align*}
\ell_i(\bX,\theta) &= -y_i\log(\yh_i) - (1-y_i)\log(1-\yh_i)\\
&= \log\Big(1 + \exp\big((1-2y_i)R(\abs{\inner{\bX_i,\hbnu}} - \abs{\inner{\bX_i,\hbmu}})\big)\Big).
\end{align*}
Note that $\inner{\bX_i-\E\bX_i,\hbmu}$ and $\inner{\bX_i-\E\bX_i,\hbnu}$ are mean $0$ Gaussian random variables with variance $\sigma^2$. So for any fixed $i\in[n]$ and $\mv_c\in\{\bmu,\bnu\}$, we use \cite[Proposition 2.1.2]{Vershynin:2018} to obtain
\[
    \Pr{|\inner{\bX_i-\E\bX_i, \hat{\mv_c}}| > t} \le \frac{\sigma}{t\sqrt{2\pi}}\exp\bpar{-\frac{t^2}{2\sigma^2}}.
\]
Then by a union bound over all $i\in[n]$ and $\mv_c\in\{\bmu,\bnu\}$, we have that
\[
\Pr{|\inner{\bX_i-\E\bX_i, \hat{\mv_c}}|\le t\; \forall i\in[n],\; \mv_c\in\{\bmu,\bnu\}}\ge 1 - \frac{n\sigma}{t}\sqrt{\frac{2}{\pi}}\exp\bpar{-\frac{t^2}{2\sigma^2}}.
\]
We now set $t=\sigma\sqrt{2(c+1)\log n}$ for any large constant $c>0$. Using the assumption that $\gamma=\Omega(\sigma\log n)$, we have with probability at least $1 - \frac{n^{-c}}{\sqrt{\pi(c+1)\log n}}$ that
\[
\inner{\bX_i,\hbmu} = \inner{\E\bX_i,\hbmu} \pm O(\sigma\sqrt{c\log n}), \quad \inner{\bX_i,\hbnu} = \inner{\E\bX_i,\hbnu} \pm O(\sigma\sqrt{c\log n})\; \forall i\in[n].
\]
Thus, we can write
\begin{align}
&\inner{\bX_i,\hbmu} = \gamma'\bpar{1 \pm O\bpar{\sqrt{\frac{c}{\log n}}}},\quad && \inner{\bX_i,\hbnu} = \gamma'\cdot O\bpar{\sqrt{\frac{c}{\log n}}}& \forall i\in C_0,\label{eq:conc-xi-c0}\\
&\inner{\bX_i,\hbmu} = \gamma'\cdot O\bpar{\sqrt{\frac{c}{\log n}}},\quad && \inner{\bX_i,\hbnu} = \gamma'\bpar{1 \pm O\bpar{\sqrt{\frac{c}{\log n}}}}& \forall i\in C_1.\label{eq:conc-xi-c1}
\end{align}
Using \cref{eq:conc-xi-c0,eq:conc-xi-c1} in the expression for the loss, we obtain for all $i\in [n]$,
\[
\ell_i(\bX,\theta) = \log(1 + \exp(-R\gamma'(1\pm o_n(1)))),
\]
where the error term $o_n(1)=\sqrt{\nicefrac{c}{\log n}}$.
The total loss is then given by
\[
\ell_{\theta}(\bX)=\frac1n\sum \ell_i(\bX,\theta) = \log(1 + \exp(-R\gamma'(1+o_n(1)))).
\]
Next, \cref{fact:log-1-plus-x} implies that for $t<0$, $\nicefrac{e^{t}}{2}\le \log(1+e^t)\le e^t$, hence, we have that there exists a constant $C\in[\nicefrac12, 1]$ such that
\[
\ell_{\theta}(\bX) = C\exp\bpar{-R\gamma'(1+o_n(1)))}.
\]
Note that by scaling the optimality constraint $R$, the loss can go arbitrarily close to $0$.
\end{proof}

\subsection{Graph convolution in the first layer}\label{gc-first-layer}
In this section, we show precisely why a graph convolution operation in the first layer is detrimental to the classification task.
\begin{proposition}\label{prop:gc-layer-one-bad}
Fix a positive integer $d>0$, a positive real number $\sigma\in\R^+$, and $\bmu,\bnu\in\R^d$. Let $(\bA,\bX)\sim\XCSBM(n,d,\bmu,\bnu,\sigma^2,p,q)$. Define $\bXt$ to be the transformed data after applying a graph convolution on $\bX$, i.e., $\bXt = \bD^{-1}\bA\bX$. Then in the regime where $p,q=\Omega(\frac{\log^2n}{n})$, with probability at least $1-1/{\rm poly}(n)$ we have that
\[\E\bXt_i = \begin{dcases}
    \frac{p\bmu + q\bnu}{2(p+q)}\cdot o_n(1) & i\in C_0\\
    \frac{p\bnu + q\bmu}{2(p+q)}\cdot o_n(1) & i\in C_1
    \end{dcases}.\]
Hence, the distance between the means of the convolved data, given by $\frac{p-q}{2(p+q)}\norm{\bmu-\bnu}_2\cdot o_n(1)$ diminishes to $0$ for $n\to\infty$.
\end{proposition}
\begin{proof}
Fix $\bmu,\bnu\in\R^d$ and define the following sets:
\begin{align*}
    C_{-\bmu} &= \{i\mid \veps_i = 0, \eta_i = 0\}, & C_{\bmu} &= \{i\mid \veps_i = 0, \eta_i = 1\},\\
    C_{-\bnu} &= \{i\mid \veps_i = 1, \eta_i = 0\}, & C_{\bnu} &= \{i\mid \veps_i = 1, \eta_i = 1\}.
\end{align*}

Denote $\bXt=\bD^{-1}\bA\bX$ and note that for any $i\in [n]$, the row vector
\begin{align*}
    \bXt_i &= \frac{1}{\degr(i)}\sum_{j\in[n]}a_{ij}\bX_j = \frac{1}{\degr(i)}\sum_{j\in[n]}a_{ij}(\E\bX_j + \sigma \gv_j)\\
    &= \frac{1}{\degr(i)}\bsq{\bmu\bpar{\sum_{j\in C_{\bmu}}a_{ij} - \sum_{j\in C_{-\bmu}}a_{ij}} + \bnu\bpar{\sum_{j\in C_{\bnu}}a_{ij} - \sum_{j\in C_{-\bnu}}a_{ij}} + \sigma\sum_{j\in[n]}a_{ij}\gv_j},
\end{align*}
where we used the fact that $\bX_j = (2\eta_j - 1)((1-\veps_j)\bmu + \veps_j\bnu + \sigma\gv_j)$ for a set of iid Gaussian random vectors $\gv_j\sim\Nc(\zero,\bI_d)$.

Note that since $\eps_i, \eta_i$ are Bernoulli random variables, using the Chernoff bound \cite[Section 2]{Vershynin:2018}, we have that with probability at least $1-1/{\rm poly}(n)$,
\[|C_{-\bmu}| = |C_{\bmu}| = |C_{-\bnu}| = |C_{\bnu}| = \frac n4(1\pm o_n(1)).\]

We now use an argument similar to \cref{prop:degree-conc} to obtain that for any $c>0$, with probability at least $1-O(n^{-c})$, the following holds for all $i\in[n]$:
\begin{align*}
    \frac{1}{\degr(i)}\bpar{\sum_{j\in C_{\bmu}}a_{ij} - \sum_{j\in C_{-\bmu}}a_{ij}} &= O\bpar{\frac{(1-\veps_i)p + \veps_iq}{2(p+q)} \sqrt{\frac{c}{\log n}}},\\
    \frac{1}{\degr(i)}\bpar{\sum_{j\in C_{\bnu}}a_{ij} - \sum_{j\in C_{-\bnu}}a_{ij}} &= O\bpar{\frac{\veps_ip + (1-\veps_i)q}{2(p+q)} \sqrt{\frac{c}{\log n}}}.
\end{align*}
Hence, we have that for all $i\in[n]$,
\begin{align*}
    \E\bXt_i &= \bsq{\bpar{\frac{(1-\veps_i)p + \veps_iq}{2(p+q)}}\bmu +\bpar{\frac{\veps_ip + (1-\veps_i)q}{2(p+q)}}\bnu}\cdot O\bpar{\sqrt{\frac{c}{\log n}}}\\
    &= \begin{dcases}
    \frac{p\bmu + q\bnu}{2(p+q)}\cdot o_n(1) & i\in C_0\\
    \frac{p\bnu + q\bmu}{2(p+q)}\cdot o_n(1) & i\in C_1
    \end{dcases}
\end{align*}
Using the above result, we obtain the distance between the means, which is of the order $o_n(\gamma)$ and thus, diminishes to $0$ as $n\to\infty$.
\end{proof}

\subsection{Proof of Theorem 2 part one}\label{one-gc}
We begin by computing the output of the network when one graph convolution is applied at any layer other than the first.
\begin{lemma}\label{lem:output-one-graph-conv}
Let $h(\bx) = |\inner{\bx,\hbnu}| - |\inner{\bx,\hbmu}|$ for any $\bx\in\R^d$. Consider the two-layer and three-layer networks in \cref{prop:ansatz} where the weight parameter of the first layer, $W^{(1)}$, is scaled by a factor of $\xi=\sgn(p-q)$. If a graph convolution is added to these networks in either the second or the third layer then for a sample $(\bA,\bX)\sim\XCSBM(n,d,\bmu,\bnu,\sigma^2,p,q)$, the output of the networks for a point $i\in[n]$ is
\[
\yh_i = \varphi(f^{(L)}_i(\bX)) = \varphi\bpar{\frac{R\sgn(p-q)}{\degr(i)}\sum_{j\in[n]}a_{ij}h(\bX_j)}.
\]
\end{lemma}
\begin{proof}
The networks with scaled parameters are given as follows.
\begin{enumerate}
    \item For the two-layer network,
    \begin{align*}
        \bW^{(1)} &= R\xi\begin{pmatrix}
        \hbmu & -\hbmu & \hbnu & -\hbnu
        \end{pmatrix}, &
        \bW^{(2)} &= \begin{pmatrix}
        -1 & -1 & 1 & 1
        \end{pmatrix}^\top.
    \end{align*}
    \item For the three-layer network,
    \begin{align*}
        \bW^{(1)} &= R\xi\begin{pmatrix}
        \hbmu & -\hbmu & \hbnu & -\hbnu
        \end{pmatrix}, &
        \bW^{(2)} &= \begin{pmatrix}
        -1 & 1\\
        -1 & 1\\
        1 & -1\\
        1 & -1
        \end{pmatrix}, &&
        \bW^{(3)} &= \begin{pmatrix}
        1\\
        -1
        \end{pmatrix}.
    \end{align*}
\end{enumerate}
When a graph convolution is applied at the second layer of this two-layer MLP, the output of the last layer for data $(\bA, \bX)$ is $f^{(2)}_i(\bX) = \bD^{-1}\bA[\bX\bW^{(1)}]_+\bW^{(2)}$. Then we have
\begin{align*}
    f^{(2)}_i(\bX) = \frac{R\xi}{\degr(i)}\sum_{j\in[n]}a_{ij}(|\inner{\bX_j,\hbnu}| - |\inner{\bX_j,\hbmu}|) = \frac{R\xi}{\degr(i)}\sum_{j\in[n]}a_{ij}h(\bX_j).
\end{align*}
Similarly, when the graph convolution is applied at the second layer of the three-layer MLP, the output is $f^{(3)}_i(\bX) = [\bD^{-1}\bA[\bX\bW^{(1)}]_+\bW^{(2)}]_+\bW^{(3)}$, and we have
\begin{align*}
    f^{(3)}_i(\bX) &= \frac{R\xi}{\degr(i)}\bpar{\bsq{\sum_{j\in[n]}a_{ij}h(\bX_j)}_+ - \bsq{-\sum_{j\in[n]}a_{ij}h(\bX_j)}_+}
    = \frac{R\xi}{\degr(i)}\sum_{j\in[n]}a_{ij}h(\bX_j).
\end{align*}
Finally, when the graph convolution is applied at the third layer of the three-layer MLP, the output is $f^{(3)}_i(\bX) = \bD^{-1}\bA[[\bX\bW^{(1)}]_+\bW^{(2)}]_+\bW^{(3)}$, and we have
\begin{align*}
    f^{(3)}_i(\bX) &= \frac{R\xi}{\degr(i)}\sum_{j\in[n]}a_{ij}\bpar{\bsq{|\inner{\bX_j,\hbnu}| - |\inner{\bX_j,\hbmu}|}_+ - \bsq{|\inner{\bX_j,\hbmu}| - |\inner{\bX_j,\hbnu}|}_+}\\
    &= \frac{R}{\degr(i)}\sum_{j\in[n]}a_{ij}(|\inner{\bX_j,\hbnu}| - |\inner{\bX_j,\hbmu}|) = \frac{R\xi}{\degr(i)}\sum_{j\in[n]}a_{ij}h(\bX_j).
\end{align*}
Therefore, in all cases where we have a single graph convolution, the output of the last layer is
\[
f^{(L)}_i(\bX) = \frac{R\sgn(p-q)}{\degr(i)}\sum_{j\in[n]}a_{ij}h(\bX_j),
\]
where $L\in\{2,3\}$ is the number of layers.
\end{proof}

\begin{theorem*}[Restatement of part one of \cref{thm:gc-improvement}]
Let $(\bA,\bX)\sim \XCSBM(n,d,\bmu,\bnu,\sigma^2,p,q)$.
Assume that the $p, q = \Omega(\frac{\log^2 n}{n})$, and the distance between the means is $\norm{\bmu-\bnu}_2=\Omega\bpar{\frac{\sigma\sqrt{\log n}}{\sqrt[4]{n(p+q)}}}$. Then there exist a two-layer network and a three-layer network such that for any $c>0$, with probability at least $1-O(n^{-c})$, the networks equipped with a graph convolution in the second or the third layer perfectly classify the data, and obtain the following loss:
\[
    \ell_{\theta}(\bA, \bX) = C'\exp\bpar{-\frac{CR\|\bmu-\bnu\|_2^2}{\sigma}\abs{\frac{p-q}{p+q}}(1\pm \sqrt{\nicefrac{c}{\log n}})},
\]
where $C>0$ and $C'\in[\nicefrac12, 1]$ are constants and $R$ is the constraint from \cref{eq:OPT}.
\end{theorem*}
\begin{proof}
First, we analyze the output conditioned on the adjacency matrix $\bA$. Note that $\frac1R f^{(L)}_i(\bX)$ in \cref{lem:output-one-graph-conv} is Lipschitz with constant $\sqrt{\frac2{\degr(i)}}$, and $h(\bX_j)$ are mutually independent for $j\in[n]$. Therefore, by Gaussian concentration \cite[Theorem 5.2.2]{Vershynin:2018}  we have that for a fixed $i\in[n]$,
\[
    \Pr{\frac1R|f^{(L)}_i(\bX)-\E[f^{(L)}_i(\bX)]| > \delta \mid \bA} \le 2\exp\bpar{-\frac{\delta^2\degr(i)}{4\sigma^2}}.
\]
We refer to the event from \cref{prop:degree-conc} as $B$ and
define $Q(t)$ to be the event that
\[|f^{(L)}_i(\bX) - \E[f^{(L)}_i(\bX)]| \le t \text{ for all } i\in[n].\]
Then we can write
\begin{align*}
    \Pr{Q(t)^\setc} &= \Pr{Q(t)^\setc\cap B} + \Pr{Q(t)^\setc\cap B^\setc}\\
    &\le 2n\exp\bpar{-\frac{t^2n(p+q)}{8\sigma^2}} + \Pr{B^\setc}\\
    &\le 2n\exp\bpar{-\frac{t^2n(p+q)}{8\sigma^2}} + 2n^{-c}.
\end{align*}
Let $\xi=\sgn(p-q)$ and note that $\frac{\xi(p-q)}{p+q}=\frac{|p-q|}{p+q}=\Gamma(p,q)$. We now choose $t = 2\sigma\sqrt{\frac{2(c+1)\log n}{n(p+q)}}$ to obtain that with probability at least $1-4n^{-c}$, the following holds for all $i\in[n]$:
\begin{align*}
    f^{(L)}_i(\bX) &= \E[f^{(L)}_i(\bX)] \pm O\bpar{R\sigma\sqrt{\frac{c\log n}{n(p+q)}}}\\
    &= \frac{R\xi}{\degr(i)}\sum_{j\in[n]}a_{ij}\E h(\bX_j) \pm o_n(R\sigma)\\
    &= \frac{R\xi\zeta(\gamma',\sigma)}{\degr(i)}\bpar{\sum_{j\in C_1}a_{ij} - \sum_{j\in C_0}a_{ij}} \pm o_n(R\sigma) && \text{(using \cref{lemma:zeta})}\\
    &= (2\varepsilon_i-1)R\Gamma(p,q)\zeta(\gamma',\sigma)(1\pm o_n(1)) \pm o_n(R\sigma) && \text{(using \cref{prop:degree-conc})}.
\end{align*}
We now note that the regime of interest is $\gamma'=o_n(\sigma)$, since in the other ``good" regime where $\gamma$ is sufficiently larger than $\sigma$, a two-layer network without graphical information is already successful at the classification task, and as such, it can be seen that this ``good" regime exhibits successful classification with graphical information as well. Hence, we focus on the regime where $\gamma' = o_n(\sigma)$, where using \cref{lemma:zeta} implies $\zeta(\gamma',\sigma) = \Omega(\frac{\gamma'^2}{\sigma})$. The regime where $\gamma=\Theta(\sigma)$ is not as interesting, but exhibits the same minimum guarantee on the improvement in the threshold using a similar argument (a result similar to \cref{lemma:zeta}), hence we choose to skip the details for that regime.

We now use the assumption that $\gamma=\Omega\bpar{\sigma\frac{\sqrt{\log n}}{\sqrt[4]{n(p+q)}}}$ to obtain that for some constant $C>0$,
\[
\zeta(\gamma',\sigma) = \frac{C\gamma^2}{\sigma} = \Omega\bpar{\frac{\sigma\log n}{\sqrt{n(p+q)}}} = \omega_n(t),
\]
where $t$ is the error term above.
Overall, we obtain that with probability at least $1-4n^{-c}$,
\begin{align*}
    f^{(L)}_i(\bX) &= (2\veps_i-1)\frac{CR\gamma^2}{\sigma}\Gamma(p,q)(1\pm o_n(1)),\;\text{for all } i\in[n].
\end{align*}
Recall that the loss for node $i$ is given by
\begin{align*}
\ell_{\theta}^{(i)}(\bA, \bX) &= \log(1 + e^{(1-2\varepsilon_i)f^{(L)}_i(\bX)})
= \log\bpar{1 + \exp\bpar{-\frac{CR\gamma^2}{\sigma}\Gamma(p,q)(1\pm o_n(1))}}.
\end{align*}
The total loss is given by $\frac1n\sum_{i\in[n]}\ell_{\theta}^{(i)}(\bA, \bX)$. Next, \cref{fact:log-1-plus-x} implies that for any $t<0$, $\nicefrac{e^{t}}{2}\le \log(1 + e^t) \le e^t$, hence, we have for some $C'\in[\nicefrac12, 1]$ that
\[
    \ell_{\theta}(\bA, \bX) = C'\exp\bpar{-\frac{CR\gamma^2}{\sigma}\Gamma(p,q)(1\pm o_n(1))}.
\]
It is evident from the above display that the loss decreases as $\gamma$ (distance between the means) increases, and increases if $\sigma^2$ (variance of the data) increases.
\end{proof}

\subsection{Proof of Theorem 2 part two}\label{two-gcs}
We begin by computing the output of the networks constructed in \cref{prop:ansatz} when two graph convolutions are placed among any layer in the networks other than the first.
\begin{lemma}\label{lem:output-two-graph-conv}
Let $h(\bx):\R^d\to\R = |\inner{\bx,\hbnu}| - |\inner{\bx,\hbmu}|$. Consider the networks constructed in \cref{prop:ansatz} equipped with two graph convolutions in the following combinations:
\begin{enumerate}
    \item Both convolutions in the second layer of the two-layer network.
    \item Both convolutions in the second layer of the three-layer network.
    \item One convolution in the second layer and one in the third layer of the three-layer network.
    \item Both convolutions in the third layer of the three-layer network.
\end{enumerate}
Then for a sample $(\bA,\bX)\sim\XCSBM(n,d,\bmu,\bnu,\sigma^2,p,q)$, the output of the networks in all the above described combinations for a point $i\in[n]$ is
\[
\yh_i = \varphi(f^{(L)}_i(\bX)) = \varphi\bpar{\frac{R}{\degr(i)}\sum_{j\in[n]}\tau_{ij}h(\bX_j)},\; \text{where}\; \tau_{ij} = \sum_{k\in[n]}\frac{a_{ik}a_{jk}}{\degr(k)}.
\]
\end{lemma}
\begin{proof}
For the two-layer network, the output of the last layer when both convolutions are at the second layer is given by $f^{(2)}_i(\bX) = (\bD^{-1}\bA)^2[\bX\bW^{(1)}]_+\bW^{(2)}$. Then we have
\begin{align*}
    f^{(2)}_i(\bX)
    &= \frac{R}{\degr(i)}\sum_{j\in[n]}\sum_{k\in[n]}\frac{a_{ij}a_{jk}}{\degr(j)}h(\bX_k)
    = \frac{R}{\degr(i)}\sum_{j\in[n]}\tau_{ij}h(\bX_j).
\end{align*}

Next, for the three-layer network, the output of the last layer when both convolutions are at the second layer is given by $f^{(3)}_i(\bX) = [(\bD^{-1}\bA)^2[\bX\bW^{(1)}]_+\bW^{(2)}]_+\bW^{(3)}$, hence, we have
\begin{align*}
    f^{(3)}_i(\bX) &= \frac{R}{\degr(i)}\bpar{\bsq{\sum_{j\in[n]}\frac{a_{ij}}{\degr(j)}\sum_{k\in[n]}a_{jk}h(\bX_k)}_+ - \bsq{-\sum_{j\in[n]}\frac{a_{ij}}{\degr(j)}\sum_{k\in[n]}a_{jk}h(\bX_k)}_+}\\
    &= \frac{R}{\degr(i)}\sum_{j\in[n]}\frac{a_{ij}}{\degr(j)}\sum_{k\in[n]}a_{jk}h(\bX_k) \qquad \text{(using $[t]_+ - [-t]_+ = t$ for any $t\in\R$)}\\
    &= \frac{R}{\degr(i)}\sum_{j\in[n]}\tau_{ij}h(\bX_j).
\end{align*}

Similarly, the output of the last layer when one convolution is at the second layer and the other one is at the third layer is given by $f^{(3)}_i(\bX) = \bD^{-1}\bA[\bD^{-1}\bA[\bX\bW^{(1)}]_+\bW^{(2)}]_+\bW^{(3)}$, hence, we have
\begin{align*}
    f^{(3)}_i(\bX) &= \frac{R}{\degr(i)}\sum_{j\in[n]}\frac{a_{ij}}{\degr(j)}\bpar{\bsq{\sum_{k\in[n]}a_{jk}h(\bX_k)}_+ - \bsq{-\sum_{k\in[n]}a_{jk}h(\bX_k)}_+}\\
    &= \frac{R}{\degr(i)}\sum_{j\in[n]}\frac{a_{ij}}{\degr(j)}\sum_{k\in[n]}a_{jk}h(\bX_k) \qquad \text{(using $[t]_+ - [-t]_+ = t$ for any $t\in\R$)}\\
    &= \frac{R}{\degr(i)}\sum_{j\in[n]}\tau_{ij}h(\bX_j).
\end{align*}

Finally, the output of the last layer when both convolutions are at the third layer is given by $f^{(3)}_i(\bX) = (\bD^{-1}\bA)^2[[\bX\bW^{(1)}]_+\bW^{(2)}]_+\bW^{(3)}$, hence, we have
\begin{align*}
    f^{(3)}_i(\bX) &= \frac{R}{\degr(i)}\sum_{j\in[n]}\frac{a_{ij}}{\degr(j)}\bpar{\sum_{k\in[n]}a_{jk}\bpar{\bsq{h(\bX_k)}_+ - \bsq{-h(\bX_k)}_+}}\\
    &= \frac{R}{\degr(i)}\sum_{j\in[n]}\frac{a_{ij}}{\degr(j)}\sum_{k\in[n]}a_{jk}h(\bX_k)
    = \frac{R}{\degr(i)}\sum_{j\in[n]}\tau_{ij}h(\bX_j).
\end{align*}
Hence, the output for two graph convolutions is the same for any combination of the placement of convolutions, as long as no convolution is placed at the first layer.
\end{proof}

We are now ready to prove the positive result for two convolutions.
\begin{theorem*}[Restatement of part two of \cref{thm:gc-improvement}]
Let $(\bA,\bX)\sim \XCSBM(n,d,\bmu,\bnu,\sigma^2,p,q)$.
Assume that $p, q = \Omega(\frac{\log n}{\sqrt{n}})$ and $\norm{\bmu-\bnu}_2=\Omega\bpar{\frac{\sigma\sqrt{\log n}}{\sqrt[4]{n}}}$. Then there exist a two-layer and a three-layer network such that for any $c>0$, with probability at least $1-O(n^{-c})$, the networks with any combination of two graph convolutions in the second and/or the third layers perfectly classify the data, and obtain the following loss:
\[
    \ell_{\theta}(\bA, \bX) = C'\exp\bpar{-\frac{CR\|\bmu-\bnu\|_2^2}{\sigma}\bpar{\frac{p-q}{p+q}}^2(1\pm \sqrt{\nicefrac{c}{\log n}})},
\]
where $C>0$ and $C'\in[\nicefrac12, 1]$ are constants and $R$ is the constraint from \cref{eq:OPT}.

for any $c>0$, with probability at least $1-O(n^{-c})$, the loss obtained by the networks equipped with any combination of two graph convolutions in the second and/or the third layer is given by
\[
    \ell_{\theta}(\bA, \bX) = C'\exp\bpar{-\frac{CR\|\bmu-\bnu\|_2^2}{\sigma}\bpar{\frac{p-q}{p+q}}^2(1\pm \sqrt{\nicefrac{c}{\log n}})}
\]
for some absolute constant $C>0$ and $C'\in[\nicefrac12, 1]$.
\end{theorem*}
\begin{proof}
The proof strategy is similar to that of part one of the theorem. Note that $\frac1R f^{(L)}_i(\bX)$ in \cref{lem:output-two-graph-conv} is Lipschitz with constant
\[\norm{\frac1Rf^{(L)}_i(\bX)}_{\rm Lip}\le \sqrt{\frac2{\degr(i)^2}\sum_{j\in[n]}\tau_{ij}^2}.\]

Since $h(\bX_j)$ are mutually independent for $j\in[n]$, by Gaussian concentration \cite[Theorem 5.2.2]{Vershynin:2018}  we have that for a fixed $i\in[n]$,
\[
    \Pr{\frac1R|f^{(L)}_i(\bX)-\E[f^{(L)}_i(\bX)]| > \delta\mid \bA} \le 2\exp\bpar{-\frac{\delta^2\degr(i)^2}{4\sigma^2\sum_{j\in[n]}\tau_{ij}^2}}.
\]

We refer to the event from \cref{prop:common-neighbours-conc} as $B$. Note that since the graph density assumption is stronger than $\Omega(\frac{\log^2n}{n})$, \cref{prop:degree-conc} trivially holds in this regime, hence, the degrees also concentrate strongly around $\Delta = \frac{n}{2}(p+q)$. On event $B$, we have that
\begin{align*}
    \sum_{j\in[n]}\tau_{ij}^2 &= \sum_{j\in[n]}\bpar{\sum_{k\in[n]}\frac{a_{ik}a_{jk}}{\degr(k)}}^2 = \frac{1}{\Delta^2}\sum_{j\in[n]}\bpar{\sum_{k\in[n]}a_{ik}a_{jk}}^2(1\pm o_n(1))\\
    &= \frac{1}{\Delta^2}\bpar{\sum_{j\sim i}|N_i\cap N_j|^2 + \sum_{j\nsim i}|N_i\cap N_j|^2}(1\pm o_n(1))\\
    &= \frac{1}{\Delta^2}\bpar{\sum_{j\sim i}\bpar{\frac{n}{2}(p^2+q^2)}^2 + \sum_{j\nsim i}\bpar{npq}^2}(1\pm o_n(1)) \qquad \text{(using \cref{prop:common-neighbours-conc})}\\
    &= \frac{n}{2\Delta^2}\bpar{\frac{n^2}{4}(p^2+q^2)^2 + n^2p^2q^2}(1\pm o_n(1)) = \frac{n^3}{8\Delta^2}\bpar{p^4+q^4 + 6p^2q^2}(1\pm o_n(1)).
\end{align*}
Therefore, under this event we have that
\[
\norm{\frac1Rf^{(L)}_i(\bX)}_{\rm Lip}\le \sqrt{\frac2{\degr(i)^2}\sum_{j\in[n]}\tau_{ij}^2} = \sqrt{\frac{4(p^4+q^4 + 6p^2q^2)}{n(p+q)^4}}(1\pm o_n(1)).
\]

Note that $K=K(p,q)=\frac{4(p^4+q^4 + 6p^2q^2)}{(p+q)^4}\le 4$. We now define $Q(t)$ to be the event that $|f^{(L)}_i(\bX) - \E[f^{(L)}_i(\bX)]| \le t$ for all $i\in[n]$. Then we have
\begin{align*}
    \Pr{Q(t)^\setc} &= \Pr{Q(t)^\setc\cap B} + \Pr{Q(t)^\setc\cap B^\setc} \le 2n\exp\bpar{-\frac{nt^2}{2K^2\sigma^2}} + 2n^{-c}.
\end{align*}

We now choose $t = \sigma\sqrt{\frac{2K(c+1)\log n}{n}}$ to obtain that with probability at least $1-4n^{-c}$, the following holds for all $i\in[n]$:
\begin{align*}
    f^{(L)}_i(\bX) &= \E[f^{(L)}_i(\bX)] \pm O\bpar{R\sigma\sqrt{\frac{\log n}{n}}}
    = \frac{R}{\degr(i)}\sum_{j\in[n]}\tau_{ij}\E h(\bX_j) \pm O\bpar{R\sigma\sqrt{\frac{\log n}{n}}}.
\end{align*}
Note that we have
\begin{align*}
    \frac{1}{\degr(i)}\sum_{j\in[n]}\tau_{ij}\E h(\bX_j) &= \frac{\zeta(\gamma',\sigma)}{\degr(i)}\bpar{\sum_{j\in C_1}\tau_{ij} - \sum_{j\in C_0}\tau_{ij}} \qquad \text{(using \cref{lemma:zeta})}\\
    &= \frac{\zeta(\gamma',\sigma)}{\degr(i)}\bpar{\sum_{j\in C_1}\sum_{k\in[n]}\frac{a_{ik}a_{jk}}{\degr(k)} - \sum_{j\in C_0}\sum_{k\in[n]}\frac{a_{ik}a_{jk}}{\degr(k)}}\\
    &= \frac{\zeta(\gamma',\sigma)}{\degr(i)}\bpar{\sum_{k\in[n]}\frac{a_{ik}}{\degr(k)}\bpar{\sum_{j\in C_1}a_{jk} - \sum_{j\in C_0}a_{jk}}}\\
    &= \frac{\zeta(\gamma',\sigma)\Gamma(p,q)}{\degr(i)}\bpar{\sum_{k\in C_1}a_{ik} - \sum_{k\in C_0}a_{ik}}(1+o_n(1))\\
    &= \zeta(\gamma',\sigma)\Gamma(p,q)^2(1+o_n(1)).
\end{align*}
In the last two equations above, we used \cref{prop:degree-conc} to replace, respectively,
\begin{align*}
    \frac{1}{\degr(k)}\bpar{\sum_{j\in C_1}a_{kj} - \sum_{j\in C_0}a_{kj}} = (2\veps_k - 1)\Gamma(p,q)(1+o_n(1)),\\
    \frac{1}{\degr(i)}\bpar{\sum_{j\in C_1}a_{ik} - \sum_{j\in C_0}a_{ik}} = (2\veps_k - 1)\Gamma(p,q)(1+o_n(1)).
\end{align*}

Therefore, we obtain that
\begin{align*}
    f^{(L)}_i(\bX) &= R\zeta(\gamma',\sigma)\Gamma(p,q)^2(1+o_n(1)) \pm O\bpar{R\sigma\sqrt{\frac{\log n}{n}}}\\
    &= \frac{CR\gamma^2}{\sigma}\Gamma(p,q)^2(1+o_n(1)) \pm O\bpar{R\sigma\sqrt{\frac{\log n}{n}}} && \text{(using \cref{lemma:zeta})}\\
    &= \frac{CR\gamma^2}{\sigma}\Gamma(p,q)^2(1+o_n(1)) && \text{(since $\gamma=\Omega\bpar{\frac{\sigma\sqrt{\log n}}{\sqrt[4]{n}}}$)}.
\end{align*}

Recall that the loss for node $i$ is given by
\begin{align*}
\ell_{\theta}^{(i)}(\bA, \bX) &= \log(1 + \exp((1-2\varepsilon_i)f^{(L)}_i(\bX)))\\
&= \log\bpar{1 + \exp\bpar{-\frac{CR\gamma^2}{\sigma}\Gamma(p,q)^2(1\pm o_n(1))}}.
\end{align*}
The total loss is $\frac1n\sum_{i\in[n]}\ell_{\theta}^{(i)}(\bA, \bX)$. Now, using \cref{fact:log-1-plus-x} we have for some $C'\in[\nicefrac12, 1]$ that
\[
    \ell_{\theta}(\bA, \bX) = C'\exp\bpar{-\frac{Rc\gamma^2}{\sigma}\Gamma(p,q)^2(1\pm o_n(1))}.\qedhere
\]
\end{proof}

\section{Additional experiments}\label{additional-experiments}
For all synthetic and real-data experiments, we used PyTorch Geometric~\cite{FL2019}, using public splits for the real datasets. The models were trained on an Nvidia Titan Xp GPU, using the Adam optimizer with learning rate $10^{-3}$, weight decay $10^{-5}$, and $50$ to $500$ epochs varying among the datasets.
\subsection{Synthetic data}\label{additional-experiments-synthetic}
In this section we show additional results on the synthetic data. First, we show that placing a graph convolution in the first layer makes the classification task difficult since the means of the convolved data collapse towards $0$. This is shown in \cref{fig:gcn-conv-layer-1}.
\begin{figure}[!htbp]
    \centering
    \begin{subfigure}[b]{0.41\textwidth}
        \includegraphics[width=\linewidth]{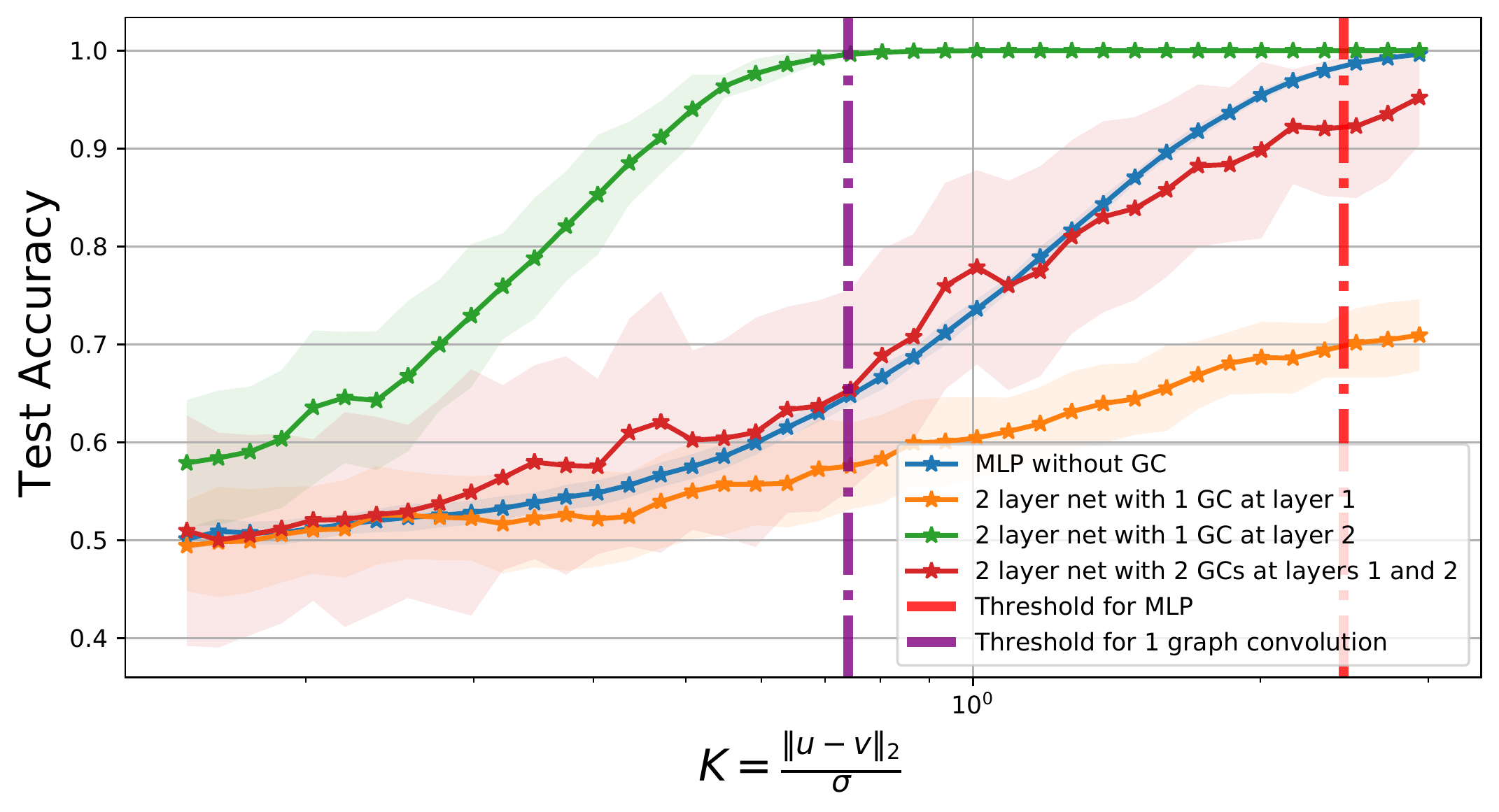}
        \caption{Test accuracy.}
    \end{subfigure}
    \qquad
    \begin{subfigure}[b]{0.41\textwidth}
        \includegraphics[width=\linewidth]{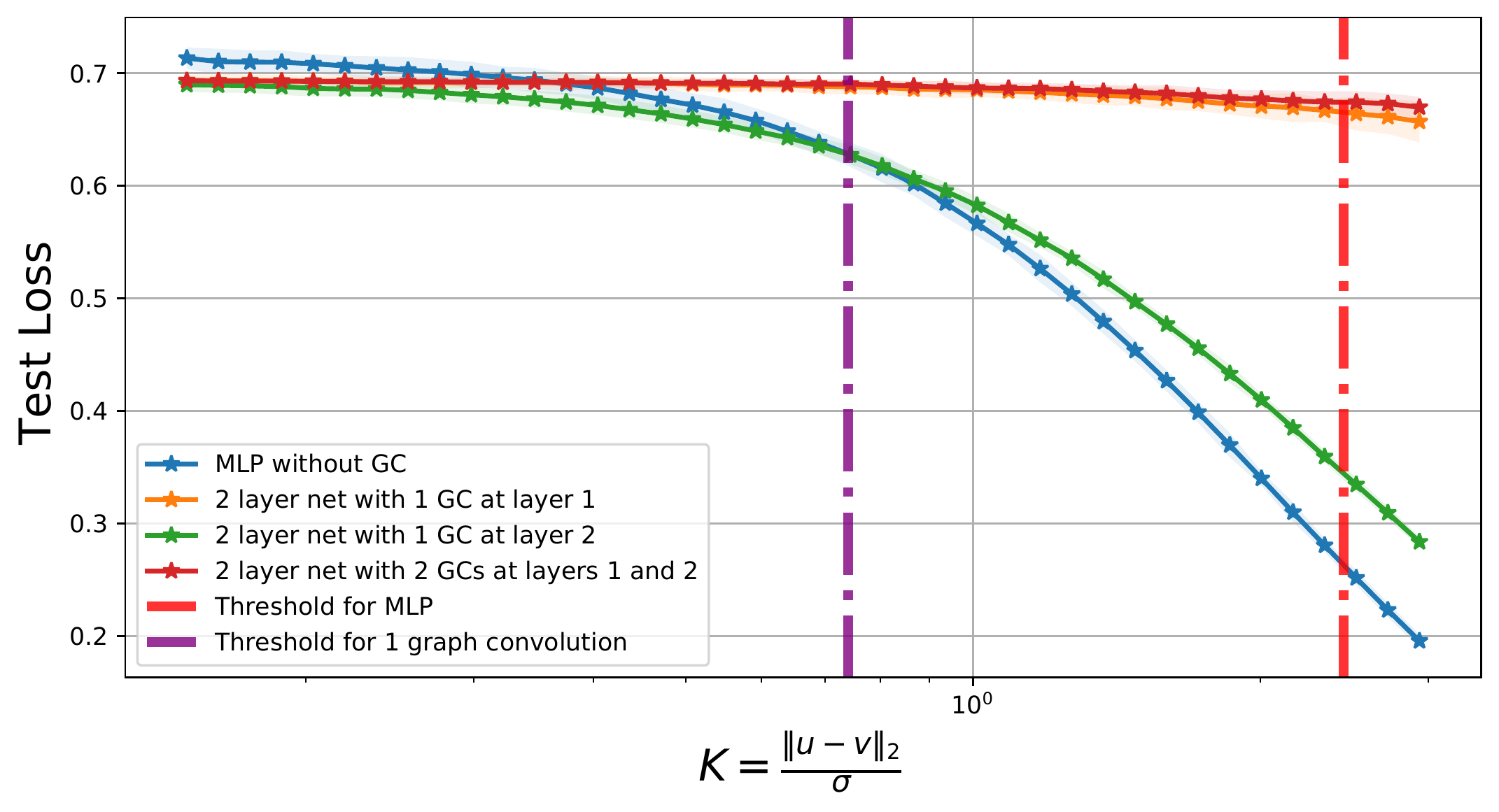}
        \caption{Test loss.}
    \end{subfigure}
    \begin{subfigure}[b]{0.41\textwidth}
        \includegraphics[width=\linewidth]{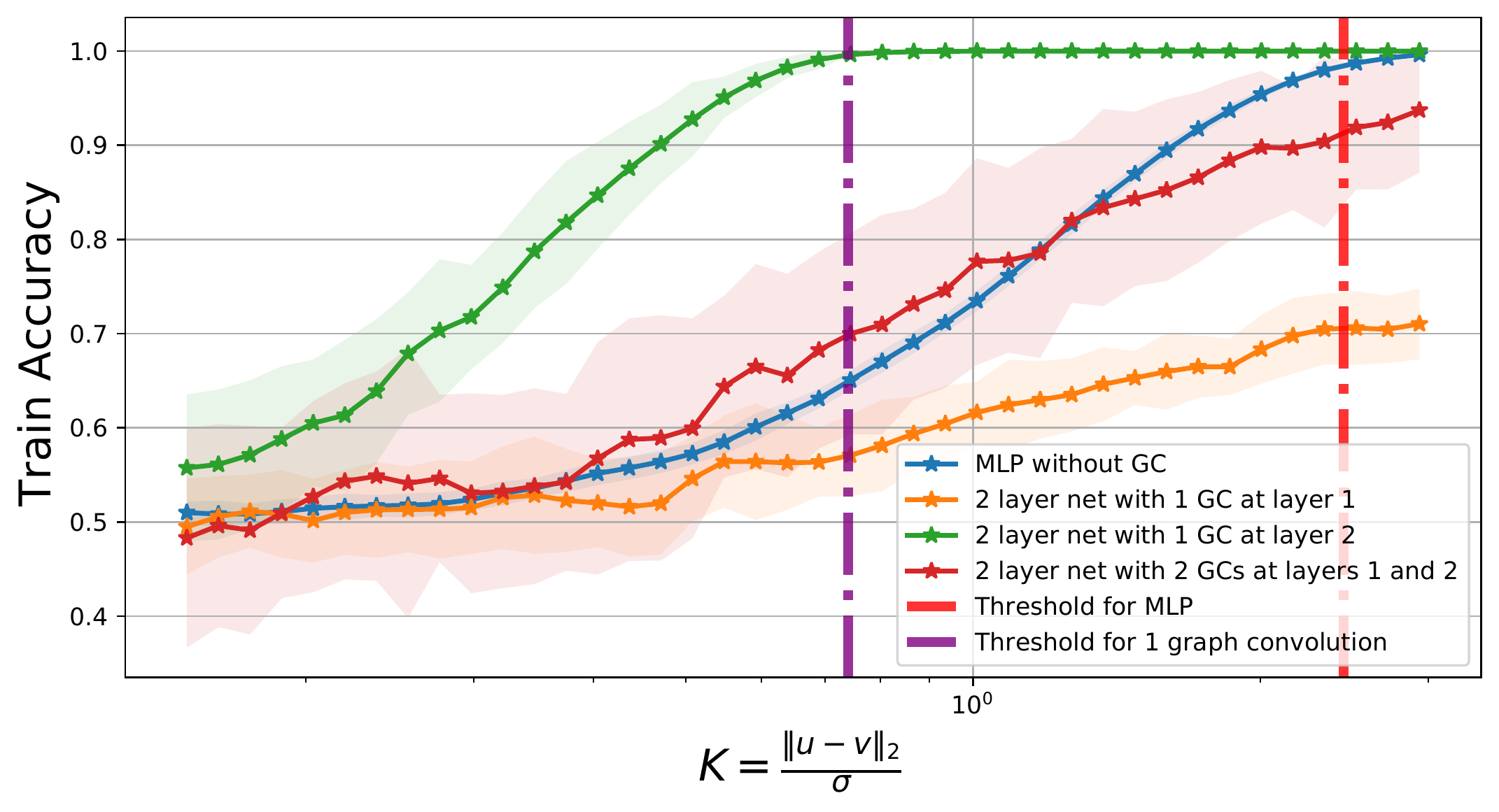}
        \caption{Train accuracy.}
    \end{subfigure}
    \qquad
    \begin{subfigure}[b]{0.41\textwidth}
        \includegraphics[width=\linewidth]{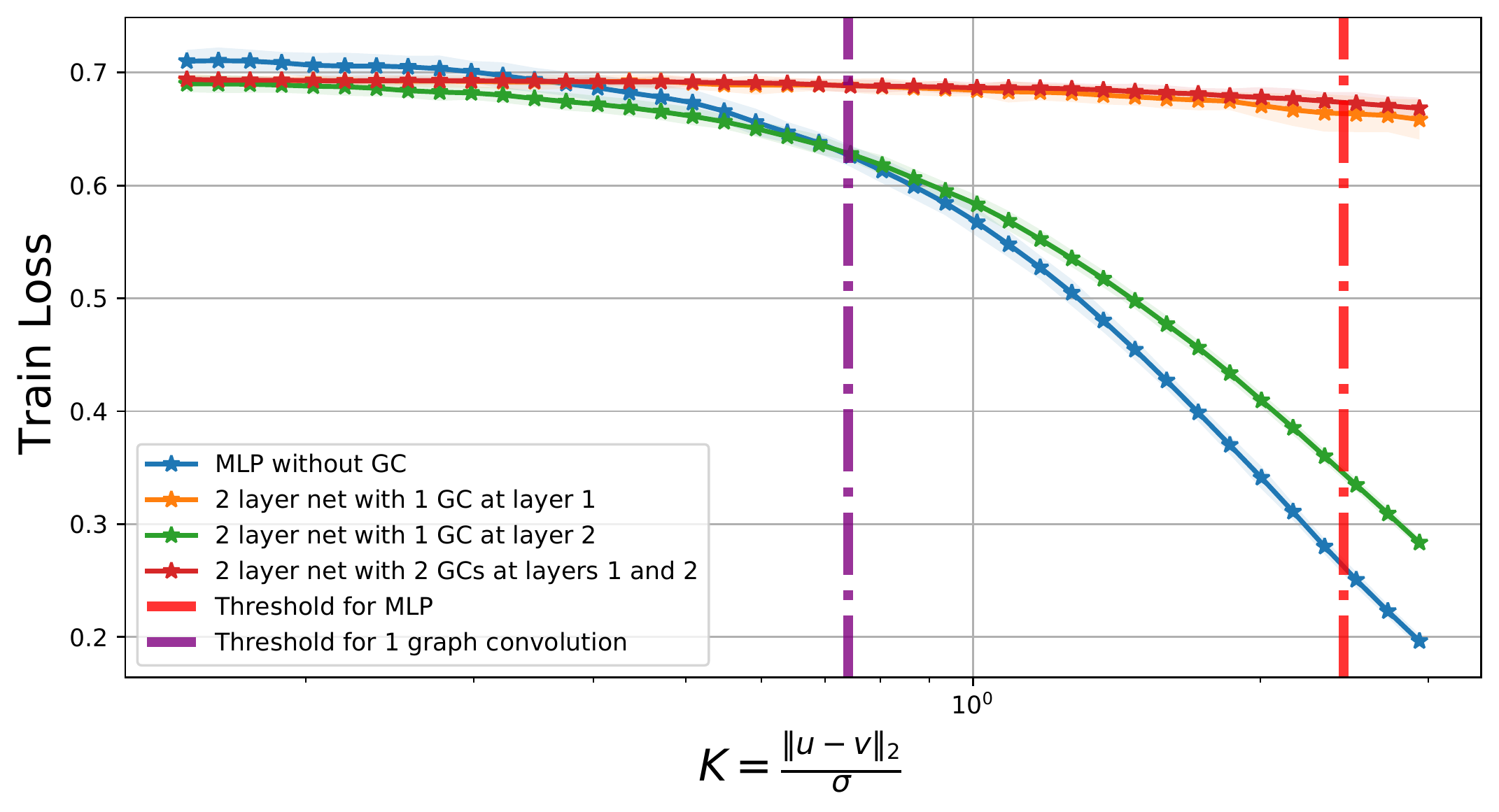}
        \caption{Train loss.}
    \end{subfigure}
    \caption{Comparing the accuracy and loss for various networks with and without graph convolutions, averaged over $50$ trials. Networks with a graph convolution in the first layer (red and orange) fail to generalize even for a large distance between the means of the data. For this experiment, we set $n=400$ and $d=4$, with $\sigma^2=1/d$.}
    \label{fig:gcn-conv-layer-1}
\end{figure}

Next, in \cref{fig:test-acc-pq}, we show the trends for the accuracy of various networks with and without graph convolutions, for different values of the intra-class and inter-class edge probabilities $p$ and $q$. We observe that networks with graph convolutions perform worse when $q$ is close to $p$, as expected from \cref{thm:gc-improvement}, since the value of the cross-entropy loss depends on the ratio $\Gamma(p,q)=\frac{|p-q|}{p+q}$ (see \cref{fig:test-acc-3-pq-2,fig:test-acc-3-pq-3}). A smaller value of $|p-q|$ represents more noise in the data, thus, networks with two graph convolutions gather more noise than networks with one graph convolution. Therefore, networks with two graph convolutions perform worse than those with one graph convolution in this regime.

\begin{figure}[!ht]
    \centering
    \begin{subfigure}[b]{0.48\textwidth}
        \includegraphics[width=\linewidth]{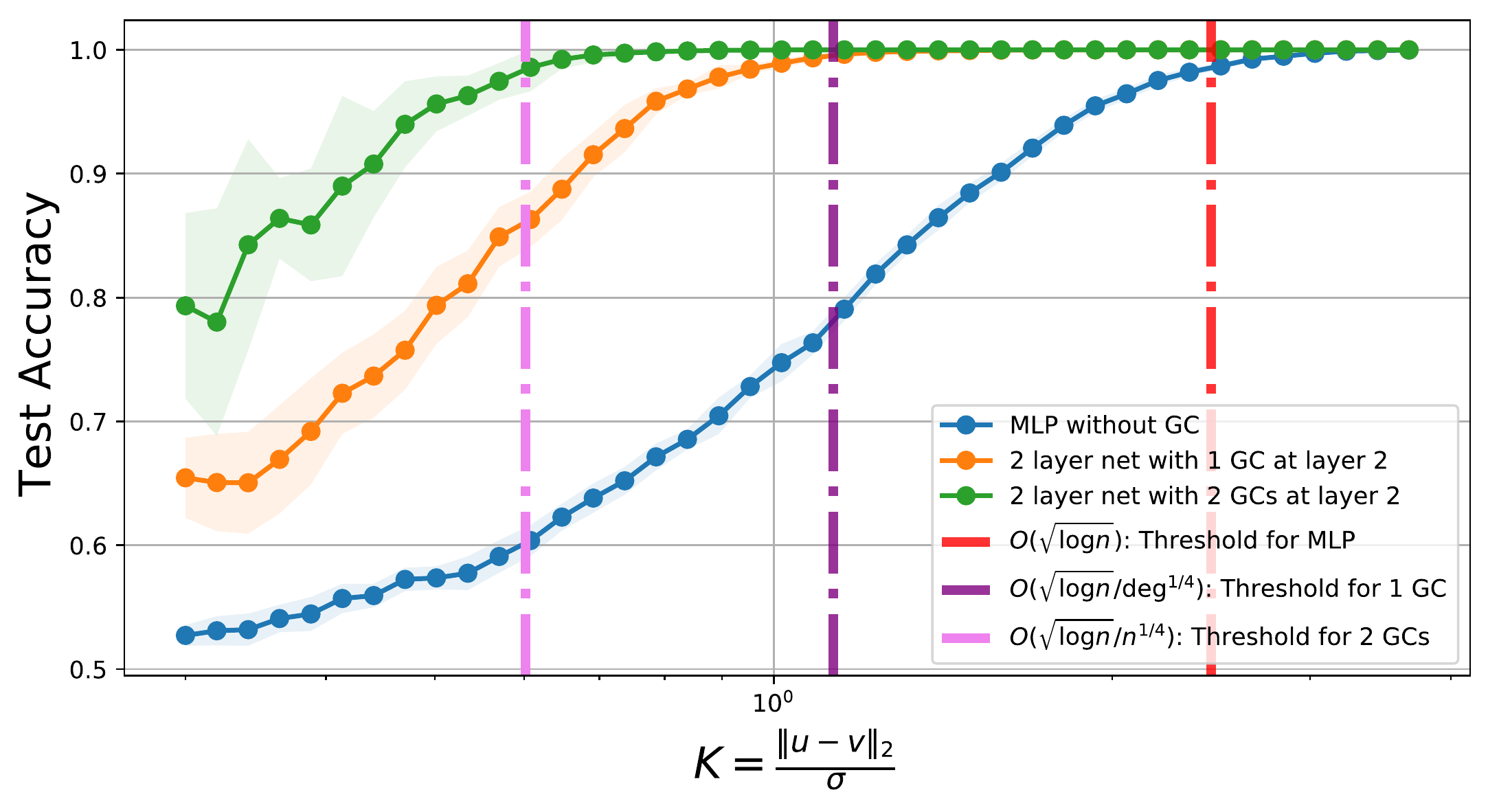}
        \caption{Two layers, $(p,q)=(0.1,0.01)$.}
        \label{fig:test-acc-4-pq-2}
    \end{subfigure}
    \begin{subfigure}[b]{0.48\textwidth}
        \includegraphics[width=\linewidth]{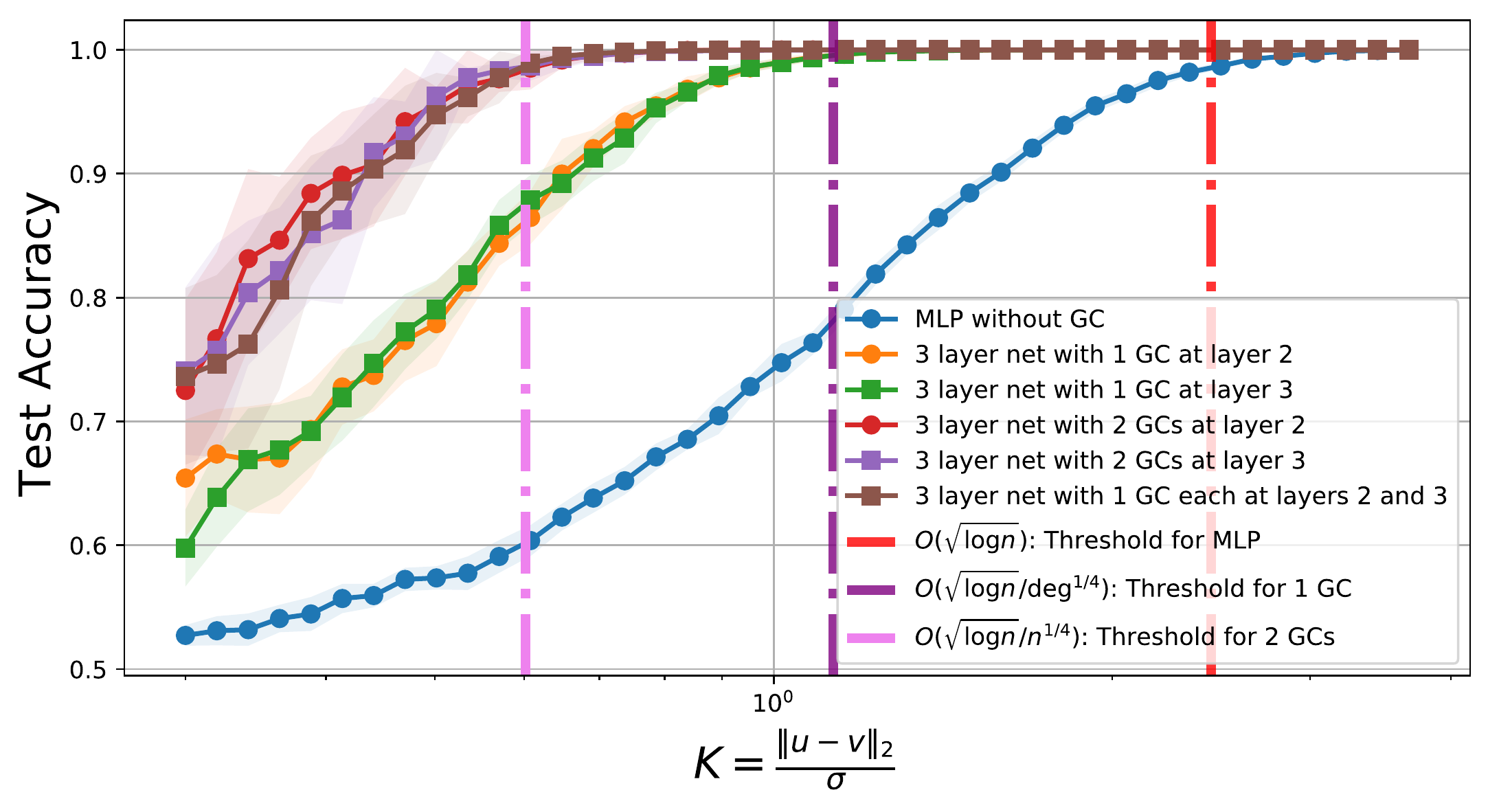}
        \caption{Three layers, $(p,q)=(0.1,0.01)$.}
        \label{fig:test-acc-4-pq-3}
    \end{subfigure}
    
    \begin{subfigure}[b]{0.48\textwidth}
        \includegraphics[width=\linewidth]{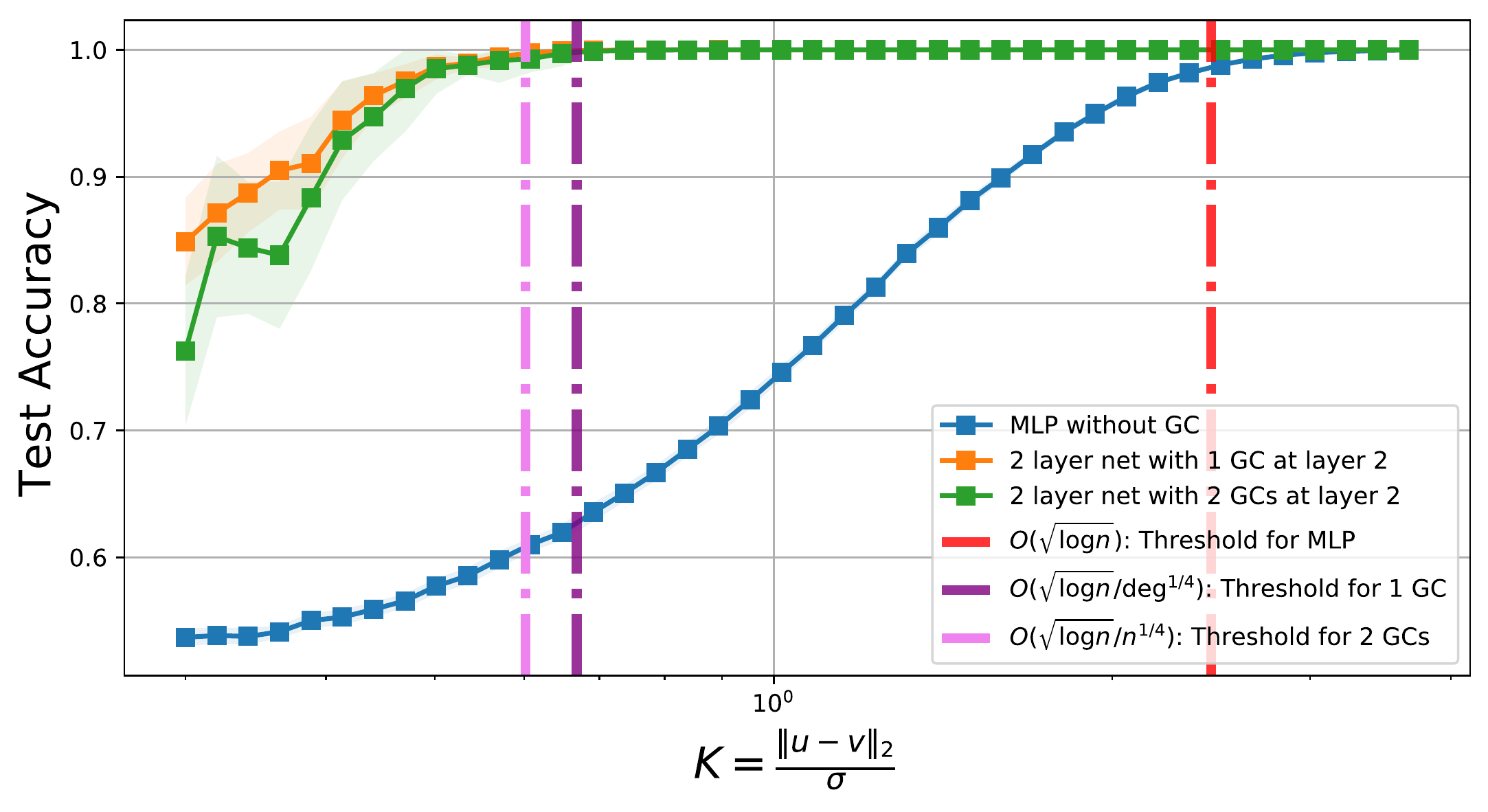}
        \caption{Two layers, $(p,q)=(0.8,0.1)$.}
        \label{fig:test-acc-2-pq-2}
    \end{subfigure}
    \begin{subfigure}[b]{0.48\textwidth}
        \includegraphics[width=\linewidth]{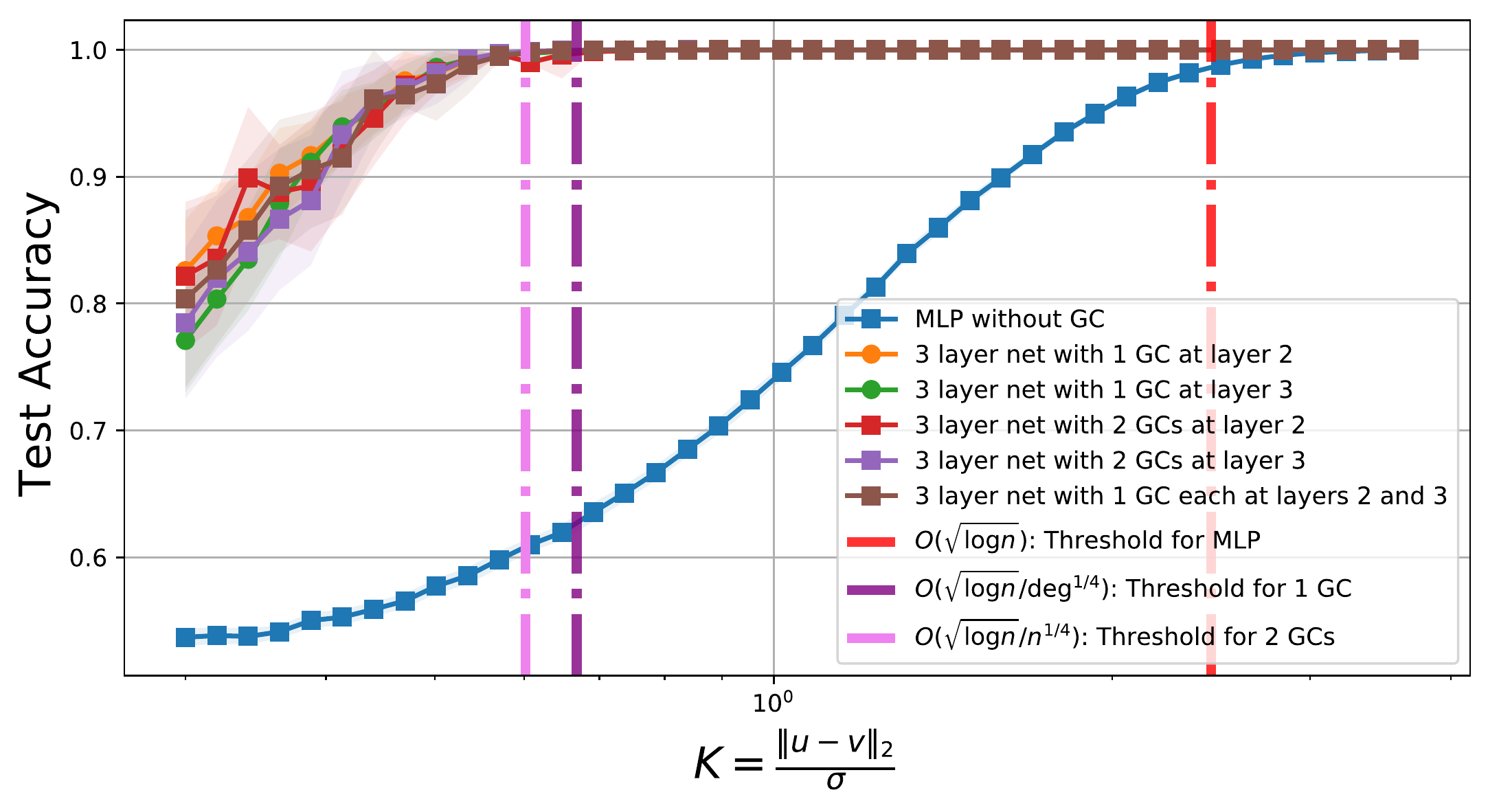}
        \caption{Three layers, $(p,q)=(0.8,0.1)$.}
        \label{fig:test-acc-2-pq-3}
    \end{subfigure}
    
    \begin{subfigure}[b]{0.48\textwidth}
        \includegraphics[width=\linewidth]{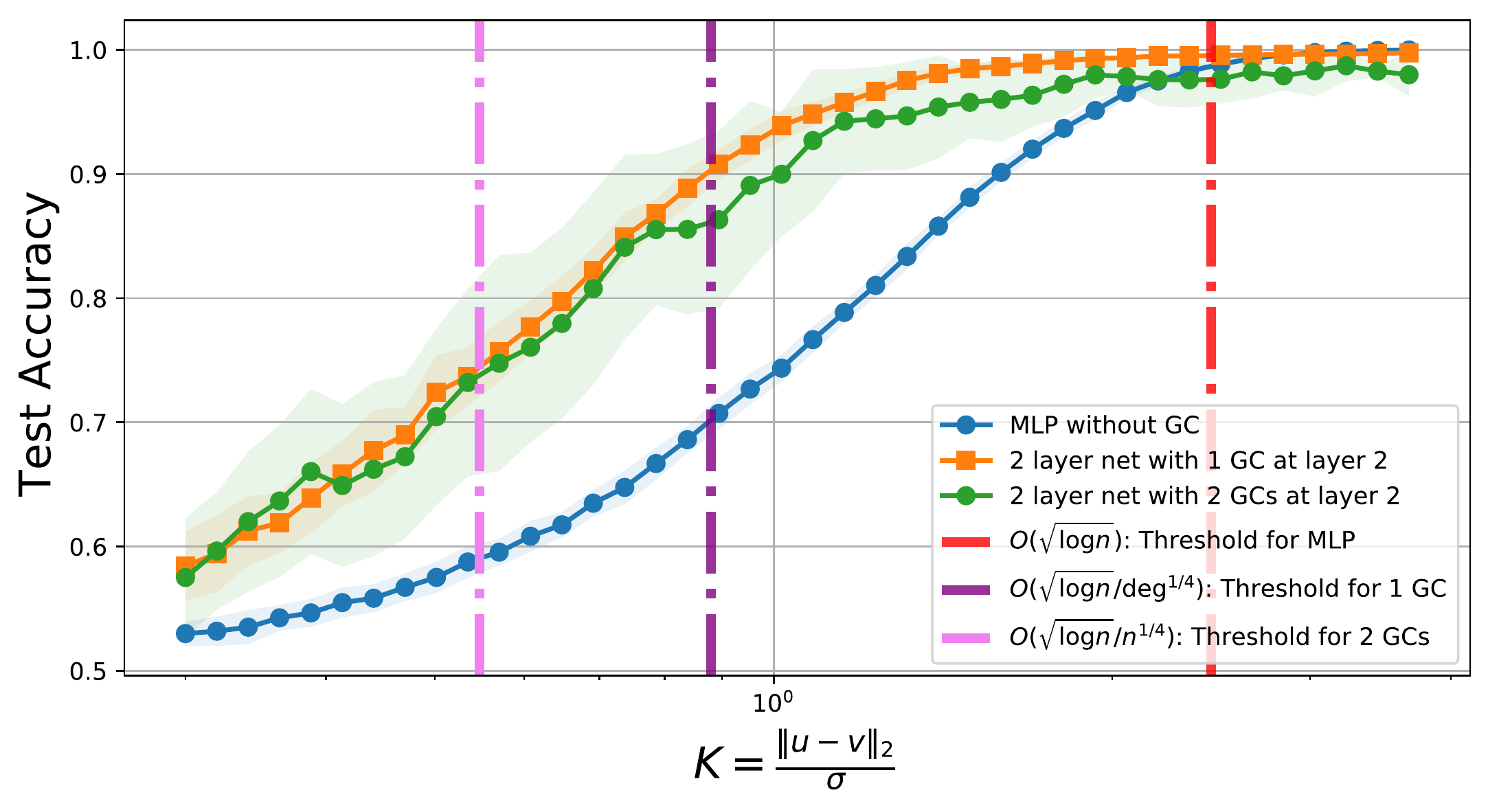}
        \caption{Two layers, $(p,q)=(0.2,0.1)$.}
        \label{fig:test-acc-1-pq-2}
    \end{subfigure}
    \begin{subfigure}[b]{0.48\textwidth}
        \includegraphics[width=\linewidth]{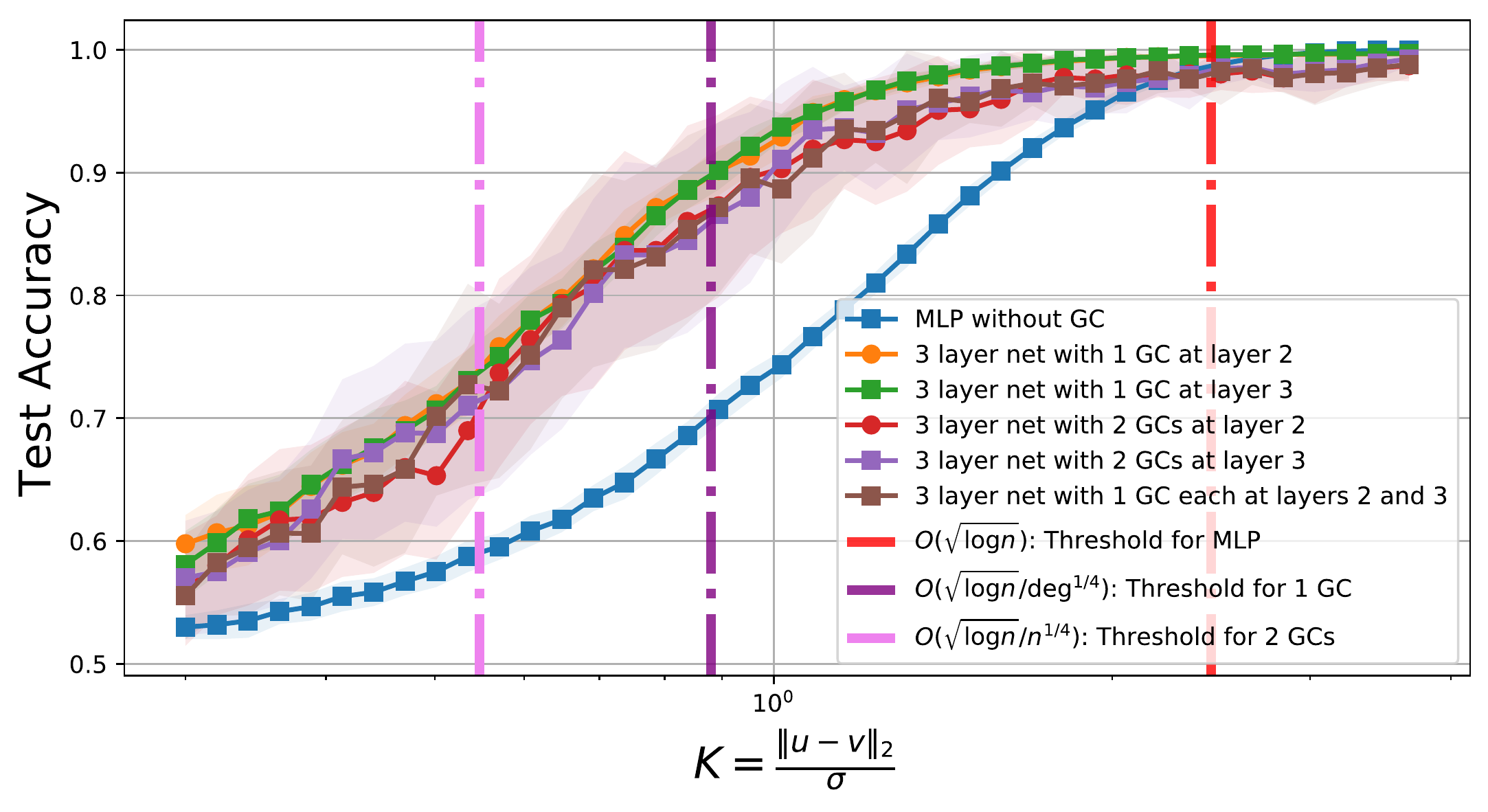}
        \caption{Three layers, $(p,q)=(0.2,0.1)$.}
        \label{fig:test-acc-1-pq-3}
    \end{subfigure}
    \begin{subfigure}[b]{0.48\textwidth}
        \includegraphics[width=\linewidth]{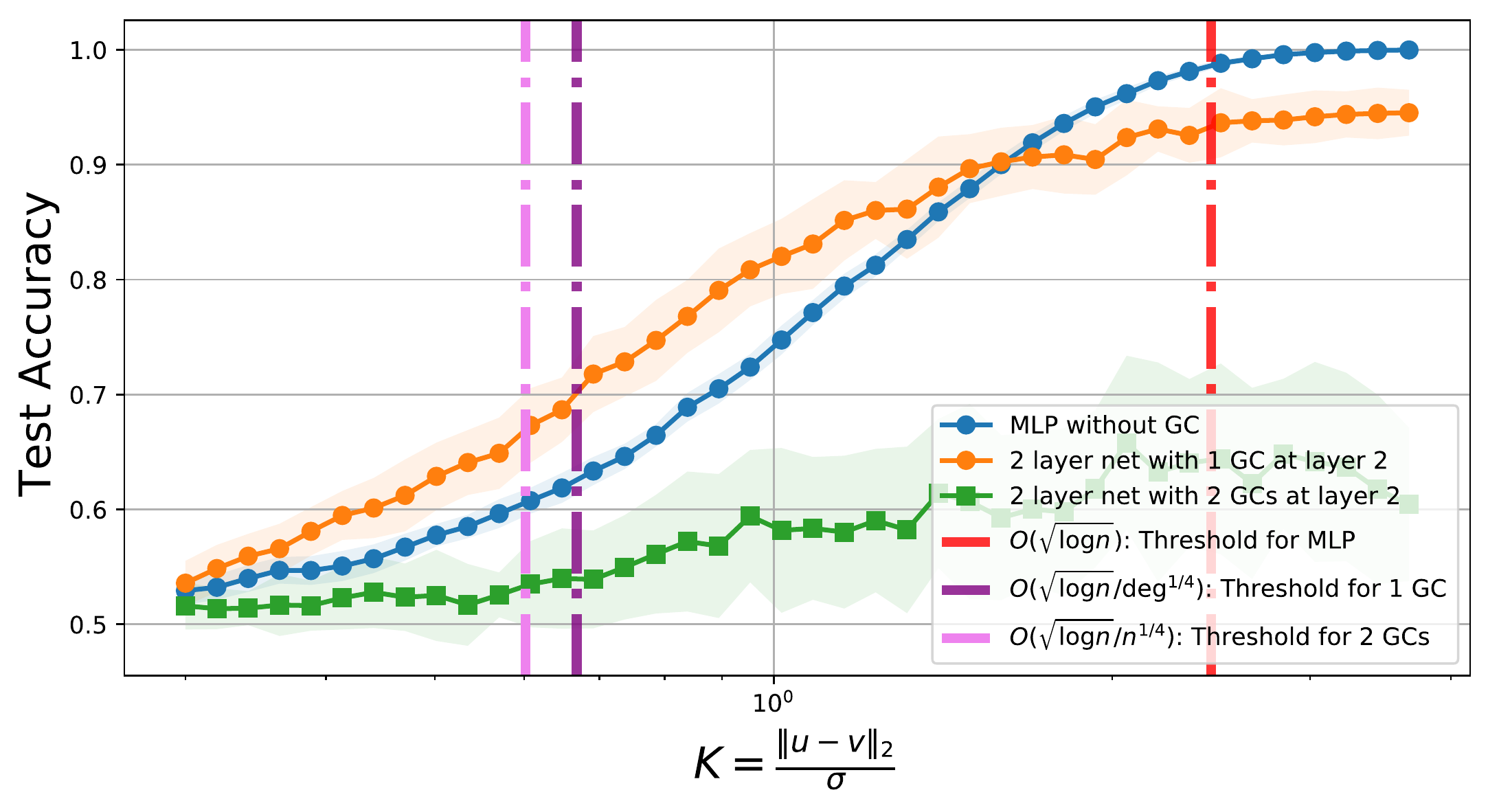}
        \caption{Two layers, $(p,q)=(0.5,0.4)$.}
        \label{fig:test-acc-3-pq-2}
    \end{subfigure}
    \begin{subfigure}[b]{0.48\textwidth}
        \includegraphics[width=\linewidth]{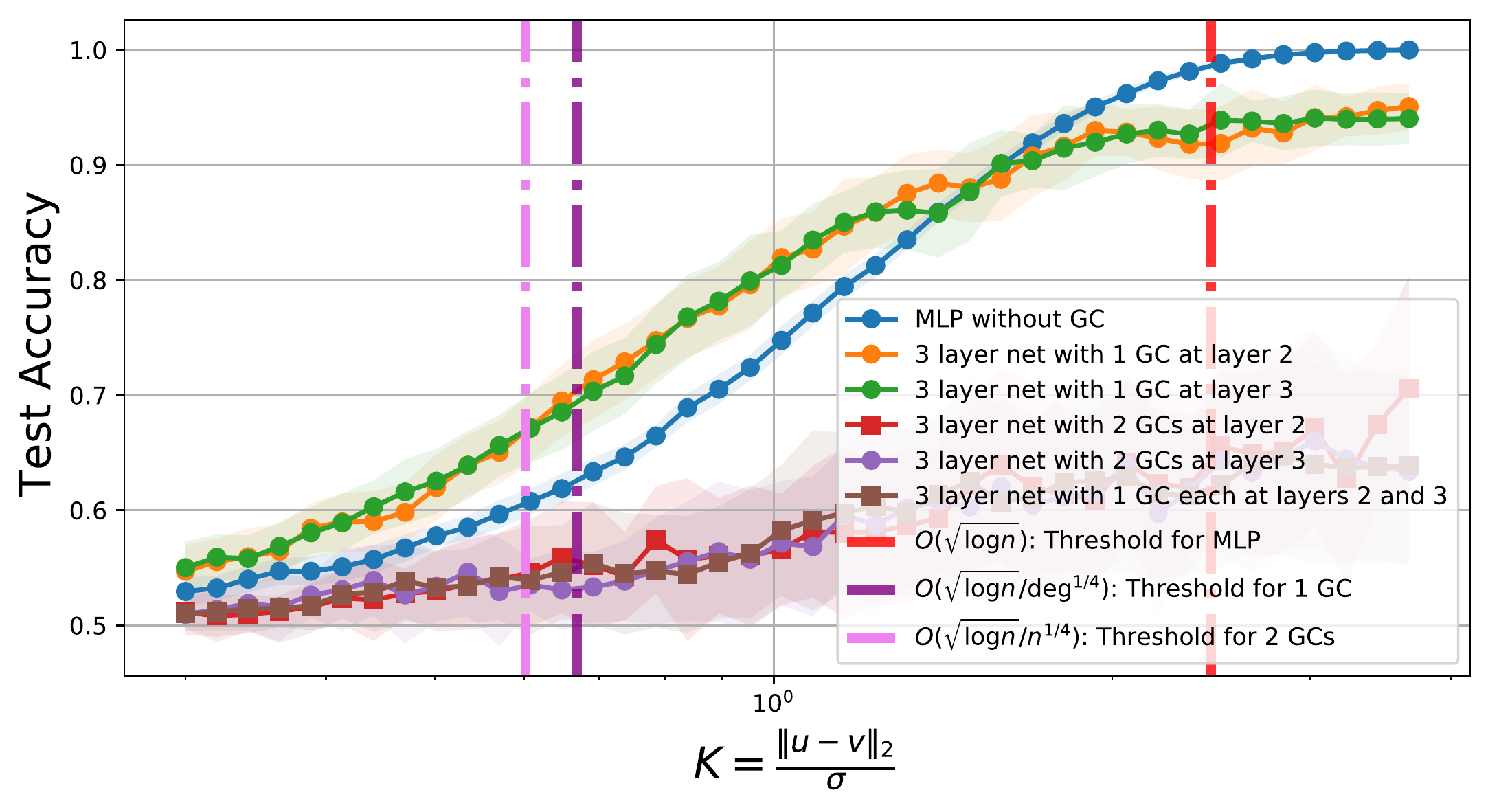}
        \caption{Three layers, $(p,q)=(0.5,0.4)$.}
        \label{fig:test-acc-3-pq-3}
    \end{subfigure}
    
    \caption{Test accuracy of various networks with with and without graph convolutions (GCs) for various values of $p$ and $q$, on the $\XCSBM$ data model. As expected, networks with graph convolutions degrade in performance when $|p-q|$ is small.}
    \label{fig:test-acc-pq}
\end{figure}

\subsection{Real-world data}\label{additional-experiments-real}
This section contains additional experiments on real-world data. In \cref{fig:custom-avg}, we plot the accuracy of the networks measured on the three benchmark datasets, averaged across $50$ different trials (random initialization of the network parameters). This corresponds to the plots in \cref{fig:real-data-accuracy-max} that show the maximum accuracy across all trials.
\begin{figure}[!ht]
    \centering
    \begin{subfigure}[b]{\textwidth}
        \includegraphics[width=\linewidth]{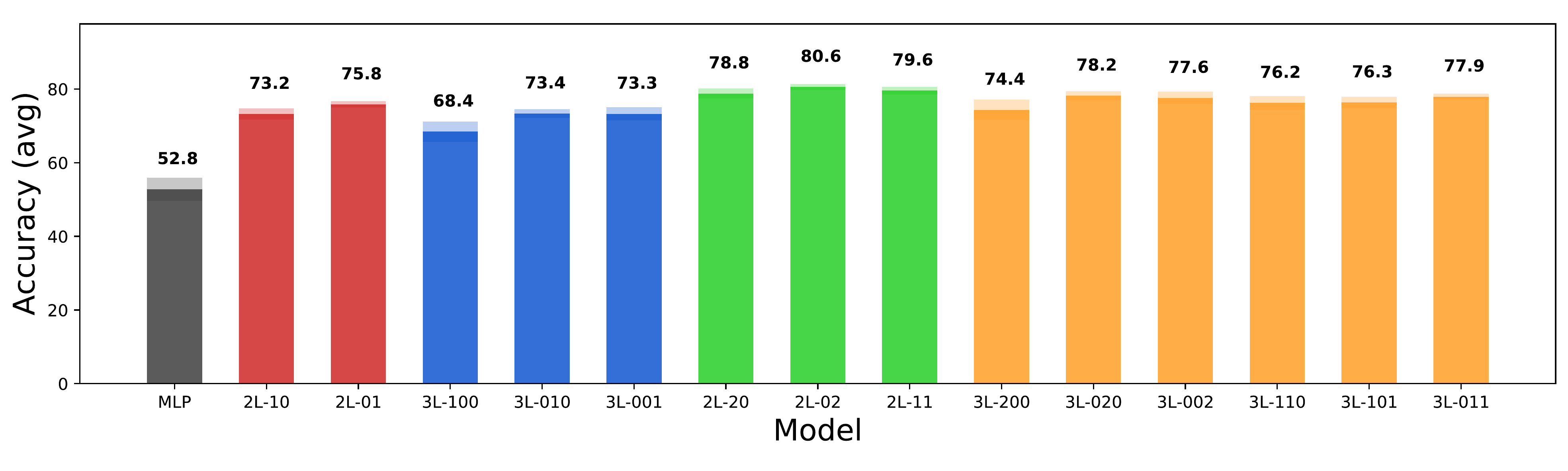}
        \caption{CORA.}
        \label{fig:cora-avg}
    \end{subfigure}
    \begin{subfigure}[b]{\textwidth}
        \includegraphics[width=\linewidth]{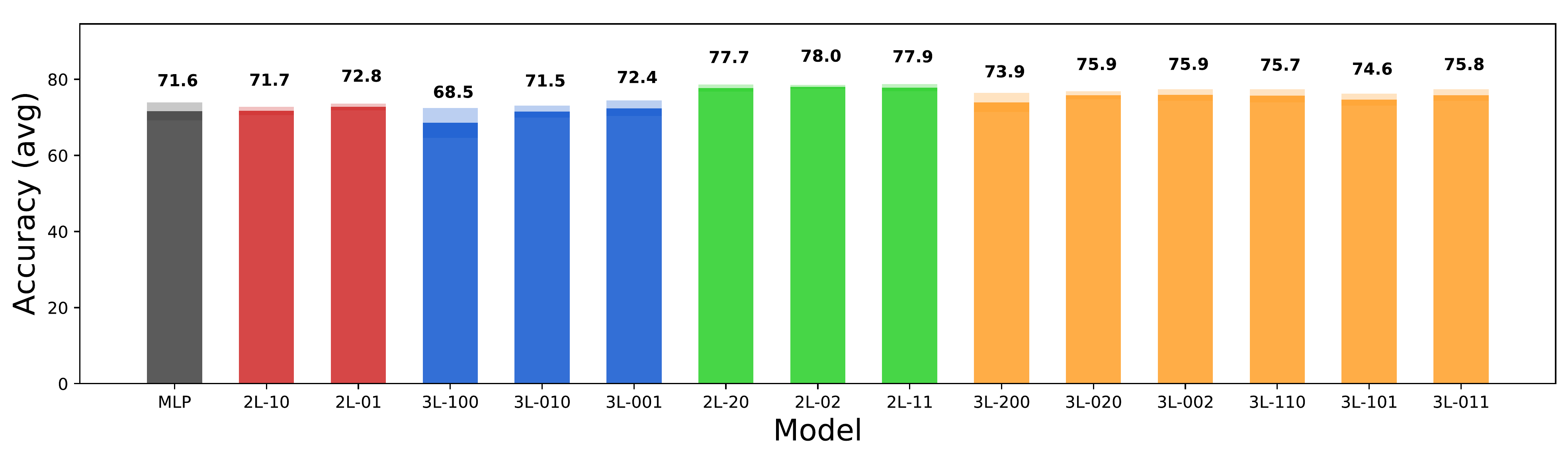}
        \caption{Pubmed.}
        \label{fig:pubmed-avg}
    \end{subfigure}
    \begin{subfigure}[b]{\textwidth}
        \includegraphics[width=\linewidth]{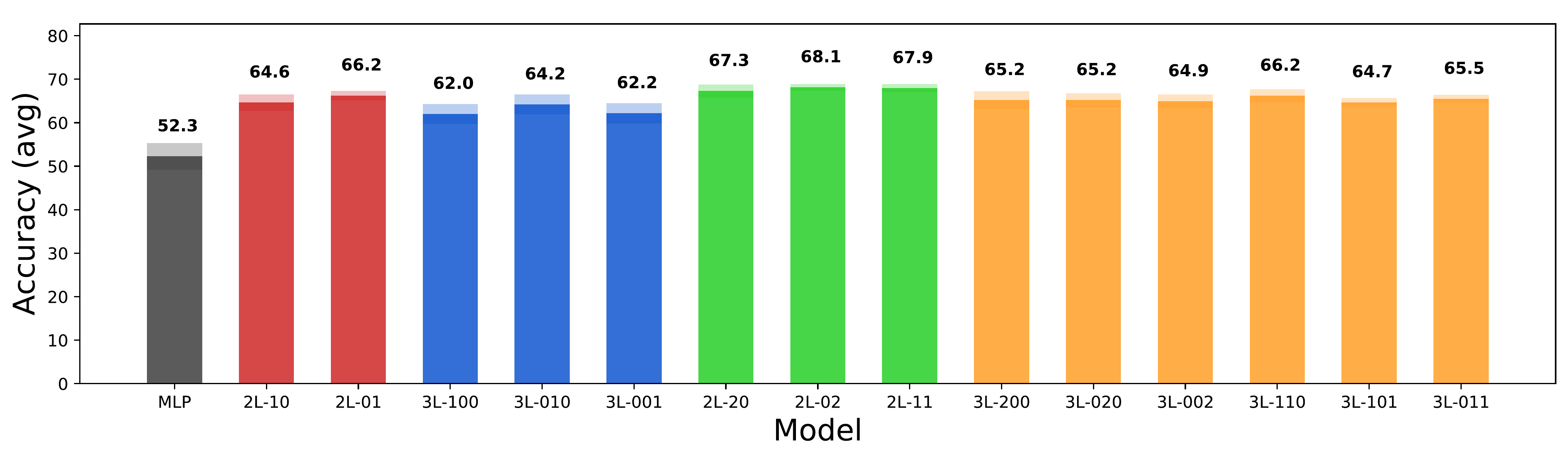}
        \caption{CiteSeer.}
        \label{fig:citeseer-avg}
    \end{subfigure}
    \caption{Averaged accuracy (percentage) over $50$ trials for various networks. A network with $k$ layers and $j_1,\ldots,j_k$ convolutions in each of the layers is represented by the label $k$L-$j_1\ldots j_k$.}
    \label{fig:custom-avg}
\end{figure}
Next, we evaluate the performance of the original GCN normalization \cite{kipf:gcn}, $\bD^{-\frac12}\bA\bD^{-\frac12}$ instead of $\bD^{-1}\bA$, and show that we observe the same trends about the number of convolutions and their placement. These results are shown in \cref{fig:orig-max,fig:orig-avg}. Note the two general trends that are consistent: first, networks with two graph convolutions perform better than those with one graph convolution, and second, placing all graph convolutions in the first layer yields worse accuracy as compared to networks where the convolutions are placed in deeper layers.

Similar to the results in the main paper, we observe that there are differences within the group of networks with the same number of convolutions, however, these differences are smaller in magnitude as compared to the difference between the two groups of networks, one with one graph convolution and the other two graph convolutions. We also note that in some cases, three-layer networks obtain a worse accuracy, which we attribute to the fact that three layers have a lot more parameters, and thus may either be overfitting, or may not be converging for the number of epochs used.
\begin{figure}[!ht]
    \centering
    \begin{subfigure}[b]{\textwidth}
        \includegraphics[width=\linewidth]{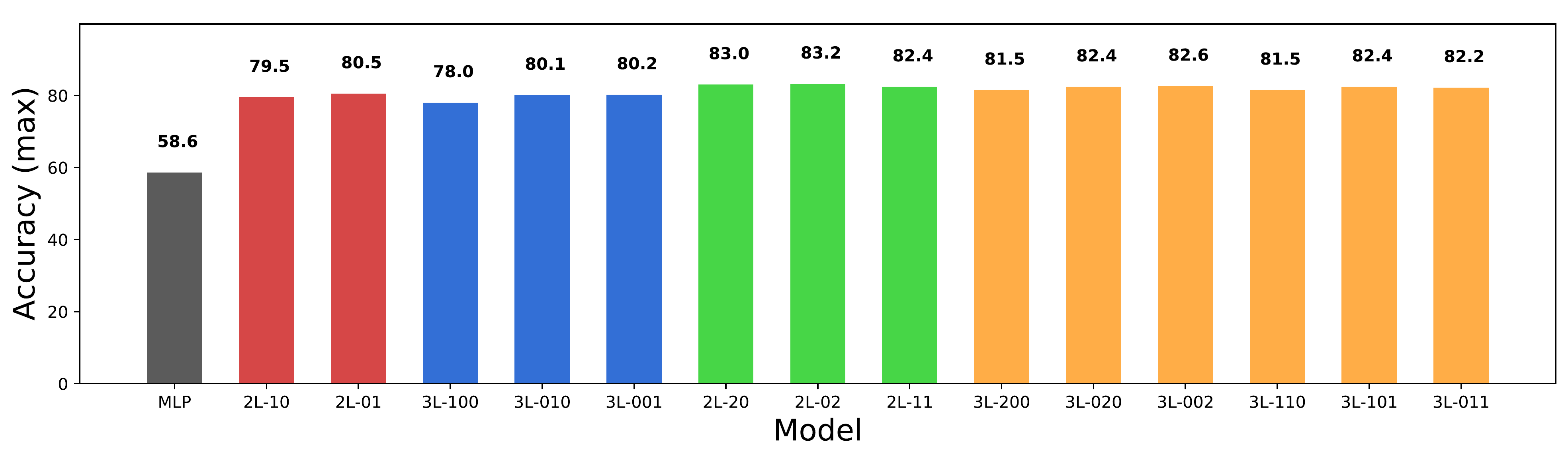}
        \caption{CORA.}
        \label{fig:cora-max-orig}
    \end{subfigure}
    \begin{subfigure}[b]{\textwidth}
        \includegraphics[width=\linewidth]{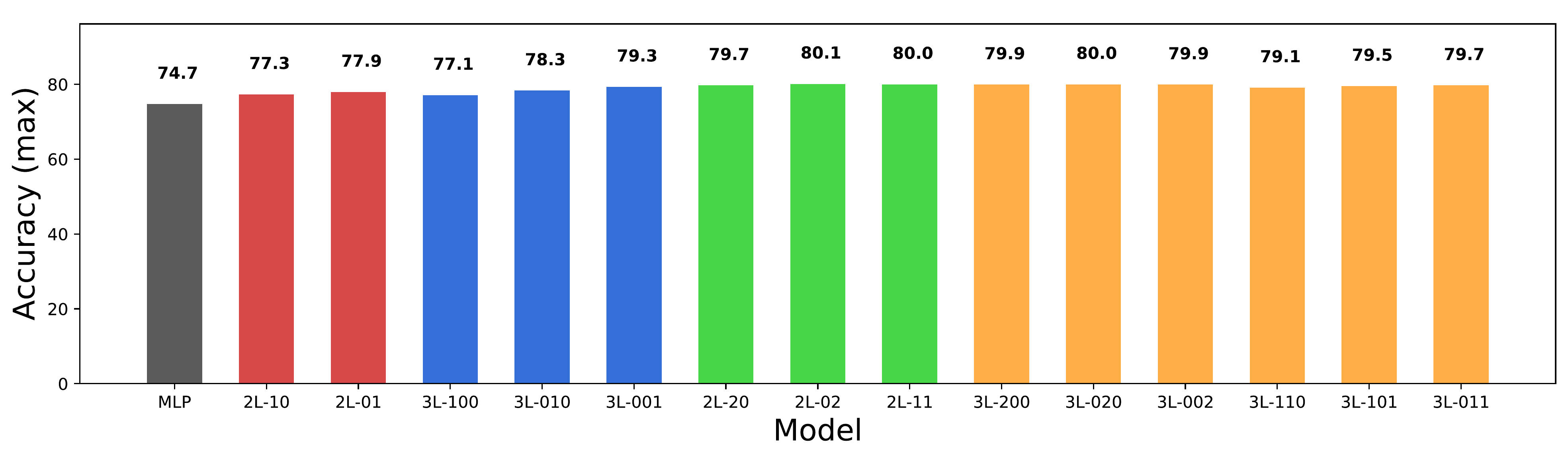}
        \caption{Pubmed.}
        \label{fig:pubmed-max-orig}
    \end{subfigure}
    \begin{subfigure}[b]{\textwidth}
        \includegraphics[width=\linewidth]{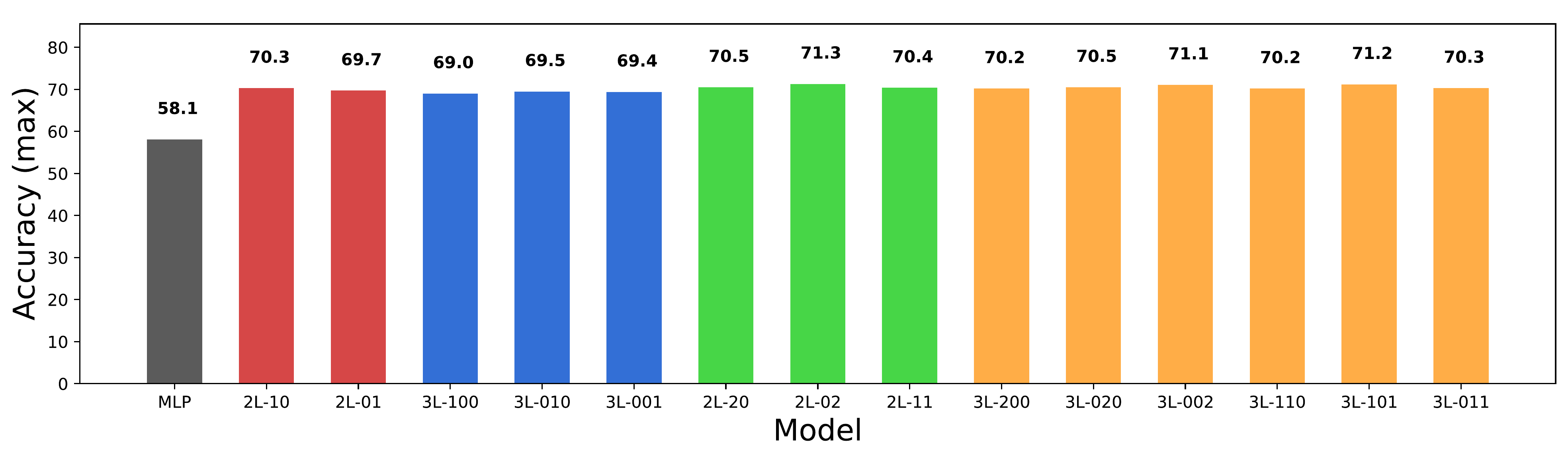}
        \caption{CiteSeer.}
        \label{fig:citeseer-max-orig}
    \end{subfigure}
    \caption{Maximum accuracy (percentage) over $50$ trials for various networks with the original GCN normalization $\bD^{-\frac12}\bA\bD^{-\frac12}$. A network with $k$ layers and $j_1,\ldots,j_k$ convolutions in each of the layers is represented by the label $k$L-$j_1\ldots j_k$.}
    \label{fig:orig-max}
\end{figure}

\begin{figure}[!ht]
    \centering
    \begin{subfigure}[b]{\textwidth}
        \includegraphics[width=\linewidth]{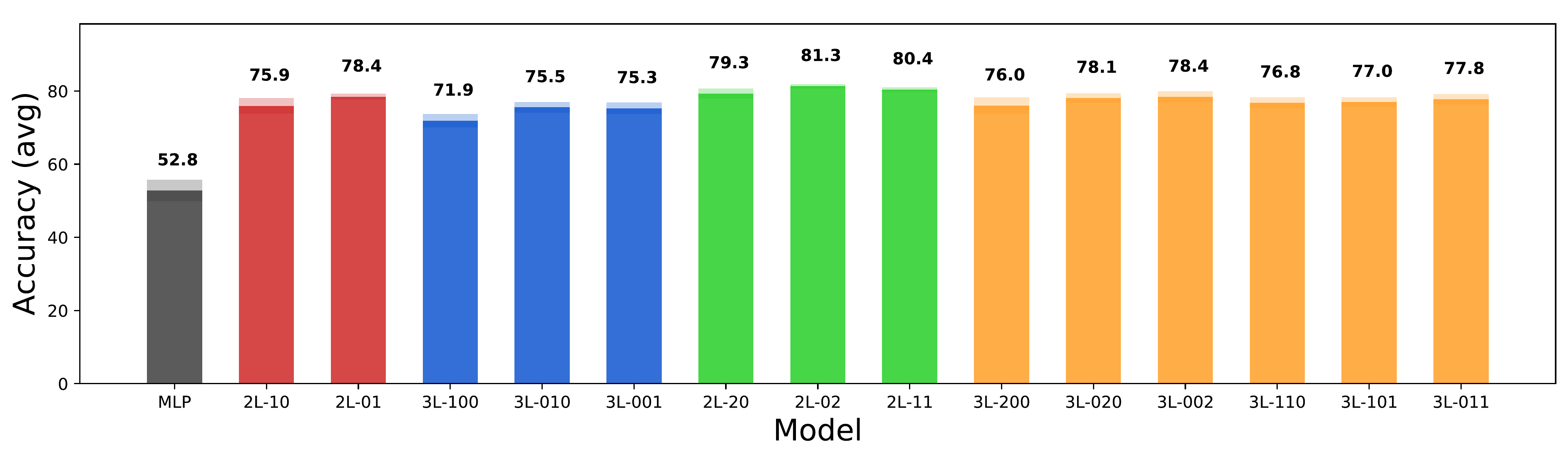}
        \caption{CORA.}
        \label{fig:cora-avg-orig}
    \end{subfigure}
    \begin{subfigure}[b]{\textwidth}
        \includegraphics[width=\linewidth]{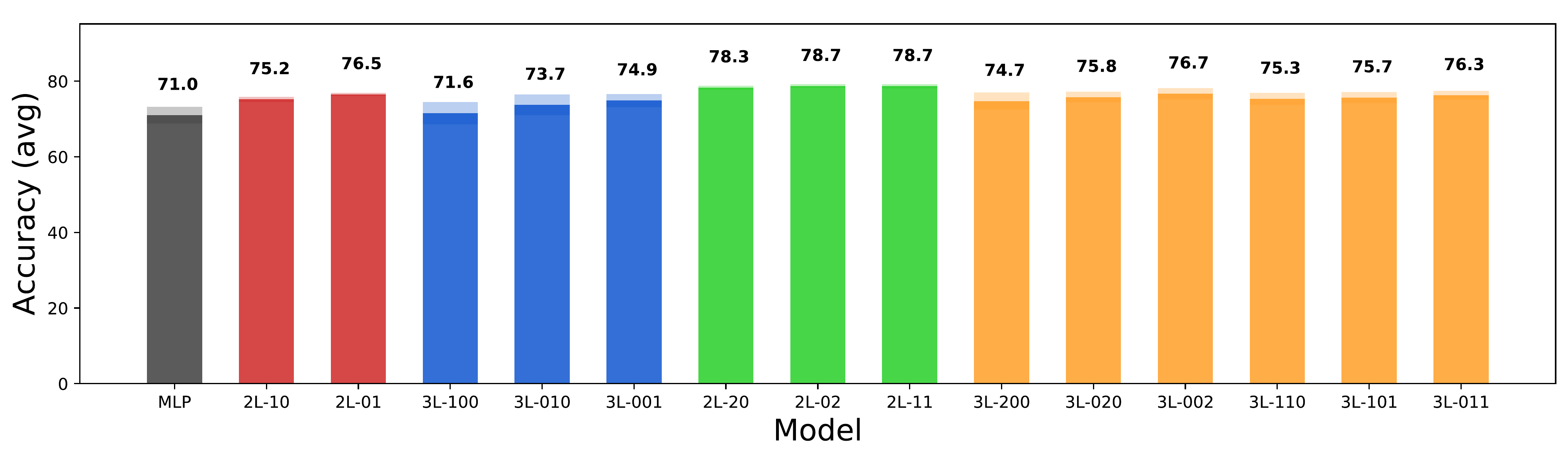}
        \caption{Pubmed.}
        \label{fig:pubmed-avg-orig}
    \end{subfigure}
    \begin{subfigure}[b]{\textwidth}
        \includegraphics[width=\linewidth]{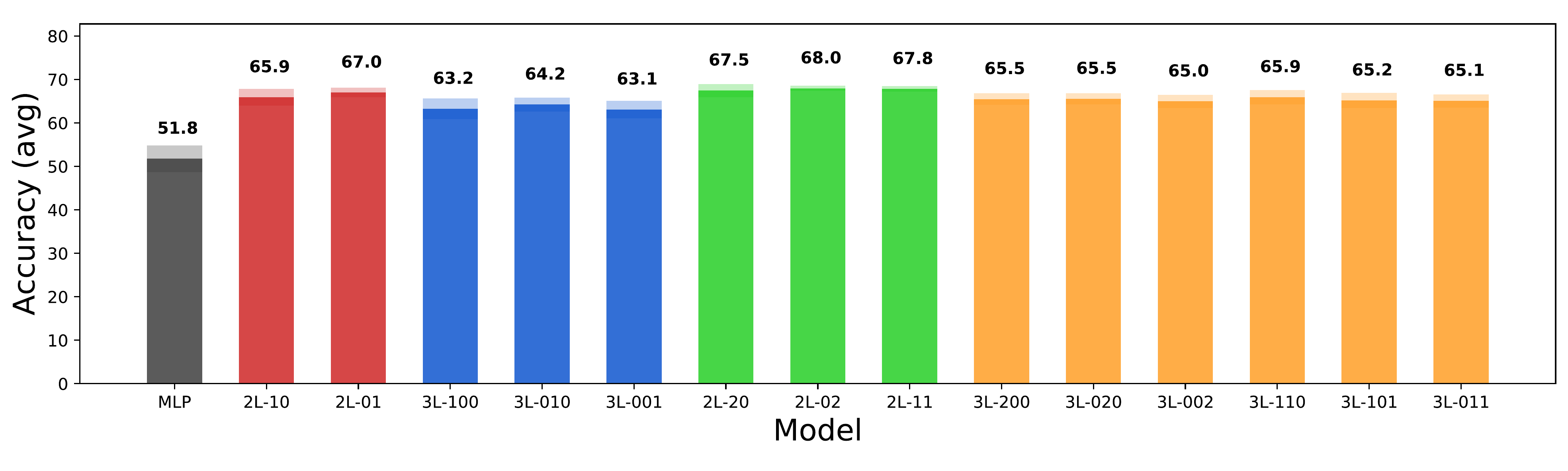}
        \caption{CiteSeer.}
        \label{fig:citeseer-avg-orig}
    \end{subfigure}
    \caption{Averaged accuracy (percentage) over $50$ trials for various networks with the original GCN normalization $\bD^{-\frac12}\bA\bD^{-\frac12}$. A network with $k$ layers and $j_1,\ldots,j_k$ convolutions in each of the layers is represented by the label $k$L-$j_1\ldots j_k$.}
    \label{fig:orig-avg}
\end{figure}

Furthermore, we perform the same experiments on relatively larger datasets, ogbn-arXiv and ogbn-products \cite{hu2020ogb} and observe similar trends. First, we observe that networks with a graph convolution perform better than a simple MLP, and that two convolutions perform better than a single convolution. Furthermore, three graph convolutions do not have a significant advantage over two graph convolutions. This observation agrees with \cref{lem:var-redn}, where one can observe that $\rho(2)$ and $\rho(3)$ are of the same order in $n$, i.e., the variance reduction offered by two and three graph convolutions are of the same order for sufficiently dense graphs. We present the results of these experiments in \cref{fig:ogbn}.
\begin{figure}[!ht]
    \centering
    \begin{subfigure}[b]{\textwidth}
        \includegraphics[width=\linewidth]{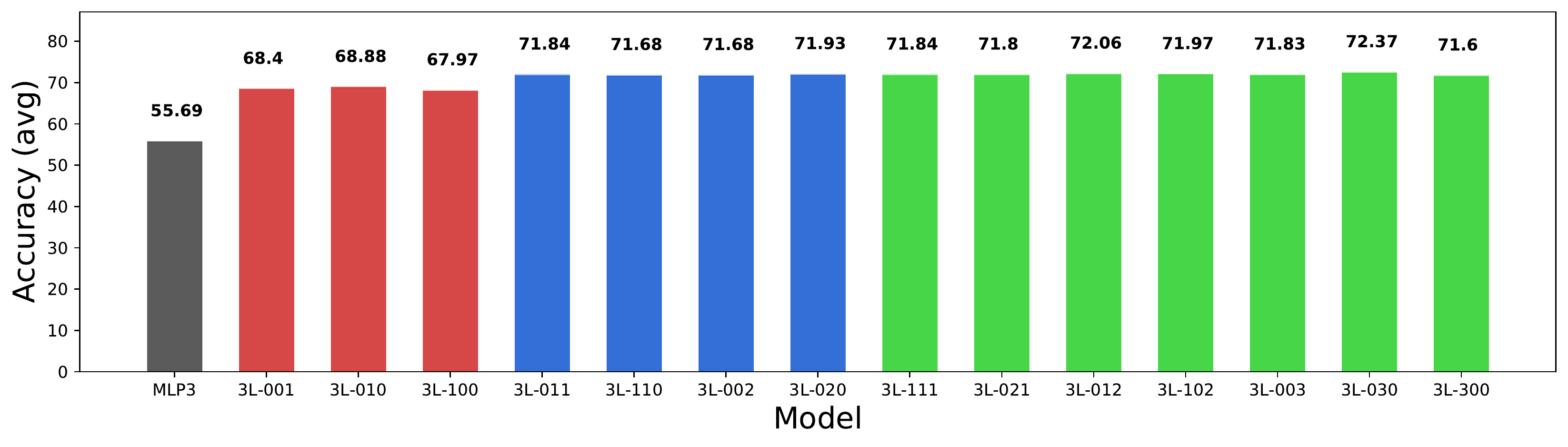}
        \caption{OGBN-arXiv.}
        \label{fig:ogbn-arxiv-avg}
    \end{subfigure}
    \begin{subfigure}[b]{\textwidth}
        \includegraphics[width=\linewidth]{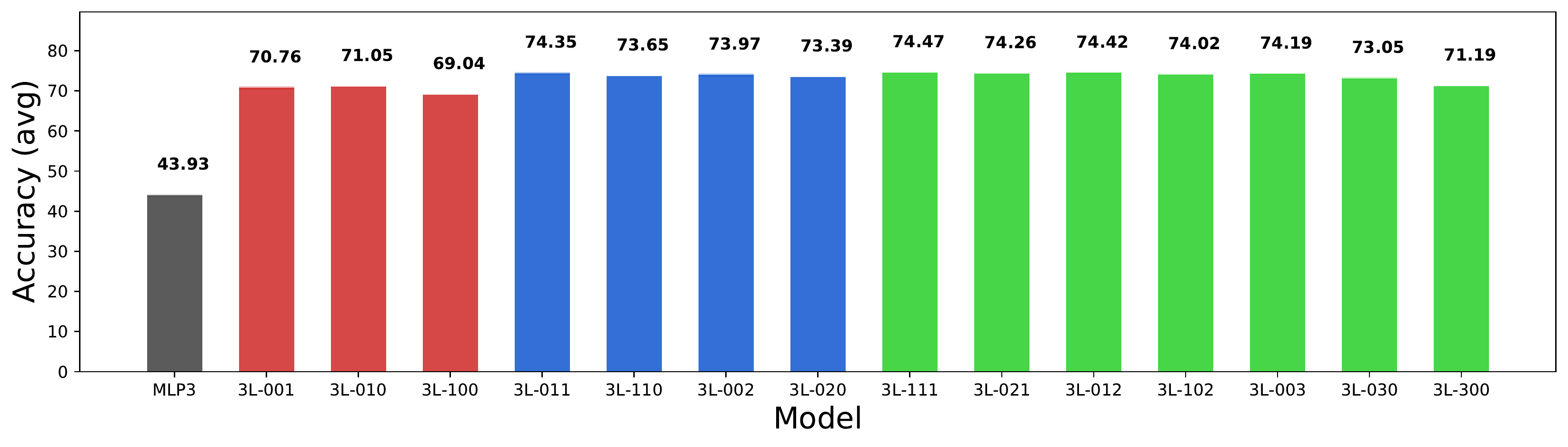}
        \caption{OGBN-products.}
        \label{fig:ogbn-products-avg}
    \end{subfigure}
    
    \caption{Averaged accuracy (percentage) for OGB datasets arXiv and products, over $10$ trials for various networks. A network with $k$ layers and $j_1,\ldots,j_k$ convolutions in each of the layers is represented by the label $k$L-$j_1\ldots j_k$, while MLP3 denotes a three-layer MLP. Note that all models with one GC (in red) perform mutually similarly, while models with two GCs (in blue) and three GCs (in green) perform mutually similarly and better than models with one GC.}
    \label{fig:ogbn}
\end{figure}

\end{document}